\numberwithin{equation}{section}
\newtheorem{prop}[theorem]{Proposition}
\newcommand{\M}{{\cal M}}
\newcommand{\mP}{{\cal P}}
\newcommand{\I}{{\cal I}}
\newcommand{\R}{\mathbb{R}}
\newcommand{\E}{\mathbb{E}}
\newcommand{\Vol}{{\rm Vol}}
\newcommand{\YQ}[1]{{\textcolor{black}{#1}}}
\begin{document}

\title{Manifold Fitting under Unbounded Noise}

\author{\name Zhigang Yao \email zhigang.yao@nus.edu.sg\\ 
       \addr Department of Statistics and Data Science \email \hspace*{0pt}\hfill zhigang.yao@cmsa.fas.harvard.edu\\
       \addr National University of Singapore\\
       21 Lower Kent Ridge Road\\
       Singapore 117546\\\\
      Center of Mathematical Sciences and Applications  \\ Harvard University\\
     20 Garden Street\\
     Cambridge USA   02138\\\\
     \name Yuqing Xia\email staxiay@nus.edu.sg\\
      \addr Department of Statistics and Data Science\\
      National University of Singapore\\
      21 Lower Kent Ridge Road\\
 Singapore 117546}

\editor{xxxx}

\maketitle
\begin{abstract}
In the field of non-Euclidean statistical analysis, a trend has emerged in recent times, of attempts to recover a low dimensional structure, namely a manifold, underlying the high dimensional data. Recovering the manifold requires the noise to be of a certain concentration and prevailing methods address this requirement by constructing an approximated manifold that is based on the tangent space estimation at each sample point. Although theoretical convergence for these methods is guaranteed, the samples are either noiseless or the noise is bounded. However, if the noise is unbounded, as is commonplace, the tangent space estimation at the noisy samples will be blurred – an undesirable outcome since fitting a manifold from the blurred tangent space might be more greatly compromised in terms of its accuracy.
In this paper, we introduce a new manifold-fitting method, whereby the output manifold is constructed by directly estimating the tangent spaces at the projected points on the latent manifold, rather than at the sample points, thus reducing the error caused by the noise. Assuming the noise is unbounded, our new method has a high probability of achieving theoretical convergence, in terms of the upper bound of the distance between the estimated and latent manifold. The smoothness of the estimated manifold is also evaluated by bounding the supremum of twice difference above. Numerical simulations are conducted as part of this new method to help validate our theoretical findings and demonstrate the advantages of our method over other relevant manifold fitting methods. Finally, our method is applied to real data examples.
\end{abstract}
\begin{keywords}
	Manifold learning, Riemannian embedding, Convergence, Smoothness
\end{keywords}

\section{Introduction} \label{sec:intro}
Linearity has been viewed as a cornerstone in the development of statistical methodology. For decades, prominent progress in statistics has been made with regard to linearizing the data and the way we analyze them. More recently, the phenomenon of high-throughput data, which share a high dimensional characteristic in their varying forms, has become more commonplace. Although each data point usually represents itself as a long vector or a large matrix, in principle they all can be viewed as points on or near an intrinsic manifold. Moreover, modern data sets no longer comprise samples of real vectors in a real vector space but samples of much more complex structures, assuming values in spaces that are naturally not (Euclidean) vector spaces. We are verily witnessing an explosion in the volumn of “complex data'' with a geometric structure and, therefore, a growing need for statistical analysis, utilizing the nature of the data space.

The manifold hypothesis has been carefully studied in \cite{fefferman2016testing}. Here, we only present several relevant examples to make sense of that hypothesis intuitively: the high dimensional data samples tend to lie near a lower dimensional manifold embedded in the ambient space. The classical Coil20 dataset \citep{nene1996columbia}, which contains images of 20 objects, may be used as an example. For each object, images are taken every 5 degrees as the object is rotated on a turntable, and each image is of size $32 \times 32$. In this case, the dimension of ambient space is the number of pixels, which is 1024, while the latent intrinsic structure can be compactly described with the angle of rotation. In addition to Coil20, such a structure can be found in many other data collections. In seismology, two-dimensional coordinates of earthquake epicenters are located along a one-dimensional fault line. In face recognition, high-dimensional facial images are dependent on lighting conditions \citep{GeBeKr01} or head orientations \citep{happy2012video}.

Given this form of data collection, \YQ{a natural problem arises: how can we fit a manifold to this data collection?} The aim of manifold fitting
is to represent the latent manifold as an embedded sub-manifold of the ambient space. Once the latent manifold is learned, various types of analyses can be carried out based on it, such as denoising the observed samples by projecting them to the learned manifold \citep{gong2010locally}, generating new data samples from the manifold \citep{radford2015unsupervised}, classifying samples according to the manifold \citep{Yao2019principal}, and detecting fault lines for seismological purposes \citep{Yao2019Fixed}. \YQ{These manifold-based techniques represent powerful tools for understanding and working with complex data structures.}

\YQ{In addition to manifold fitting, dimension reduction constitutes another crucial branch of manifold learning. Over the past two decades, a litany of dimension reduction methods have emerged, each aiming to uncover the intrinsic structure of data by identifying its lower-dimensional embedding, as discussed in the review by  \cite{ma2011manifold}. Unlike manifold fitting, however, these methods primarily focus on mapping data from the ambient space to a lower-dimensional one. Consequently, the outputs of most dimension reduction methods consist of low-dimensional embeddings rather than points in the ambient space, although for applications such as denoising and data generation, relying solely on low-dimensional embeddings may not suffice.}

\YQ{The limitations of dimension reduction and the potential applications of manifold fitting underline the value of  formulating the manifold fitting problem,} as follows. Suppose the observed data samples $X=\{x_i \in \R^D\}_{i=1}^N$ are in the form
\[
    x_i = y_i + \xi_i,
\]
where $y_1, \cdots, y_N$ are unobserved variables drawn from the uniform distribution supported on the latent manifold $\M$ with dimension $d < D$. Generally, $\M$ is assumed to be a compact and smooth sub-manifold embedded in the ambient space $\R^D$. The precise conditions on $\M$ will be detailed in Section \ref{sec:notation}. The uniform distribution assumption of $y_i$ sampled from $\M$ is the same as those used in the related works \citep{genovese2012c,genovese2014nonparametric,mohammed2017manifold}.  Here, $\xi_1, \cdots \xi_N$ are drawn from a distribution $G$. The assumptions about the noisy distribution $G$ differ among the related work. The simplest assumption is that the observed samples are noiseless \citep{fefferman2016testing,mohammed2017manifold}. However, some literature assumes that the noise is distributed in a bounded region centered at the origin, which means that the observed samples are located in a tube centered at $\M$ \citep{genovese2012b}. Other literature, such as \cite{genovese2012c,genovese2014nonparametric,pmlr-v75-fefferman18a}, assumes $G$ to be a Gaussian distribution supported on $\R^D$, whose density at $\xi$ is 
\begin{align}\label{func:noise}
(\frac{1}{2\pi\sigma^2})^{\frac{D}{2}} \exp (-\frac{\|\xi\|_2^2}{2\sigma^2}).
\end{align}
The tail of the Gaussian distribution might make the theoretical analysis more challenging than in the previous two cases. Strictly speaking, previous manifold fitting methods have not directly addressed this problem, nor have they proved the convergence of the fitted manifold under this assumption. In this paper, we are concerned with the Gaussian assumption of noisy distribution, denoting it as $G_\sigma$ to stress the deviation parameter $\sigma$ hereafter. Under the above settings, the goal is to produce a smooth manifold $\M_{\rm out}$ convergent to $\M$. \YQ{Specifically, if $\sigma$ is sufficiently small, one could derive $\M_{\rm out}$ such that $d(x,\M) \leq O(\sigma)$ holds for any arbitrary $x \in \M_{\rm out}$.}
In particular, $\M_{\rm out}$ \YQ{ converges to $\M$} when $\sigma \to 0$. \YQ{The convergence with respect to $\sigma$} is the domain that \cite{pmlr-v75-fefferman18a} is built on, \YQ{although its final theoretical result is expressed through sample complexity.}

\subsection{Related work}
Methodological studies for manifold fitting can be traced back to works from several decades ago on the principal curve \citep{hastie1989principal}, with every point on the principal curve/surface defined as the conditional mean value of the points in the orthogonal subspace of the principal curve. Based on \cite{hastie1989principal}, many other principal-curve algorithms have been proposed, such as those of \cite{banfield1992ice, stanford2000finding, verbeek2002k}, each attempting to achieve lower estimation bias and improved robustness. More recently, \cite{ozertem2011locally} describes the principal curve in a seemingly different way albeit in a probabilistic sense. In \YQ{the work by} \cite{ozertem2011locally}, every point on the principal curve/surface is the local maximum, not the expected value, of the probability density in the local orthogonal subspace. This definition of the principal curve/surface is formulated as a ridge of the probability density. Although it has been demonstrated that these proposed methods produce acceptably accurate estimates in many simulated cases, they do not, however, provide a theoretical analysis for estimating accuracy nor the curvature of the output manifold in general cases, with the exception of special cases such as elliptical distributions.

Recently, some works have focused on the theoretical analysis for manifold fitting. In particular, \cite{genovese2012b} and \cite{genovese2012c} establish the upper bounds on the Hausdorff distance between the output and latent manifold under various noise settings, although they do not offer any practical estimators. \cite{genovese2012a} proposes an estimator which is computationally simple, and whose convergence is guaranteed but its conclusions hold only when the noise is supported on a compact set. \cite{genovese2014nonparametric} focuses on the ridge of the probability density introduced by \cite{ozertem2011locally}, and proposes a convergent algorithm. It is worth noting that the data in \cite{genovese2014nonparametric} was assumed to be blurred by homogeneous Gaussian noise, an assumption that is more general than that made in \cite{genovese2012a}. 
\cite{Boissonnat2014} proposes an algorithm based upon Delauney complexes, whose 
convergence was analyzed by \cite{Aamari2018}.
\cite{Aamari2019} presented an algorithm to estimate a point on the manifold, its tangent and second form. Based on these, they approximated the latent manifold by a mere union of polynomial patches and gave convergence rate for noise-free and tubular noise models. \cite{Aizenbud2021} presents an algorithm that showed convergence to the manifold and its tangent bundle, even with tubular noise. 
However, none of the methods outlined above are guaranteed to output an actual $d$-dimensional manifold with certain smoothness.

To overcome this issue, some studies on manifold fitting have sought to determine how curved the output manifold is. In the spirit of \cite{ozertem2011locally} and \cite{genovese2014nonparametric}, \cite{fefferman2016testing} and \cite{mohammed2017manifold} also took the ridge set into consideration, the former focusing on theoretical analysis, the latter on practical algorithms. Specifically, rather than focusing on the probability density function, they both chose to work with the approximate square-distance functions (asdf), approximating the latent manifold by the ridge of the asdf. The theoretical bounds for the manifold fitting have also been considered in \cite{fefferman2016testing} and \cite{mohammed2017manifold}, but for only noiseless data; that is, as long as the asdf meets certain regularity conditions, the researchers show that the output of the algorithm is a manifold with bounded reach, and the output manifold is arbitrarily close in Hausdorff to the latent one.

To deal with manifold fitting with noise, \cite{pmlr-v75-fefferman18a} proposes a new approach to fit a putative manifold under Gaussian noise.  Unlike other methods, which use the entire sample set, the method of \cite{pmlr-v75-fefferman18a} involves subsampling first such that the number of used samples can be bounded above by $e^D$. Under this constraint, the noise is supported on a bounded set with high probability. Given this, the application of \cite{pmlr-v75-fefferman18a} is feasible with the bounded noise although  the constraint on the sample size is problematic in that the upper bound does not go to zero even with a sufficient number of available samples and the variance of Gaussian noise diminishes. Therefore, the \YQ{problem} is not essentially addressed when the support of noise is unbounded and so creates room for the manifold-fitting problem to arise, especially from the theoretical side.

\subsection{Motivation}\label{sec:motivation}
In this paper, we attempt to evaluate the convergence and smoothness of $\M_{\rm out}$. Of the aforementioned works,  it is those of \cite{mohammed2017manifold} and \cite{pmlr-v75-fefferman18a} are most relevant to our study. \YQ{This section explains these two methods geometrically and analyzes their limitations, which impels us to establish a more accurate manifold-fitting method.}


\begin{figure}[htbp]
\centering
\includegraphics[width=0.48\textwidth]{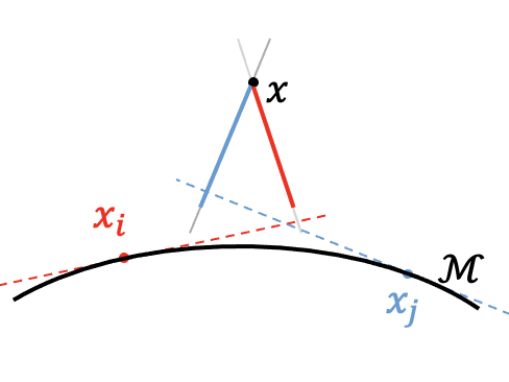}
\includegraphics[width=0.48\textwidth]{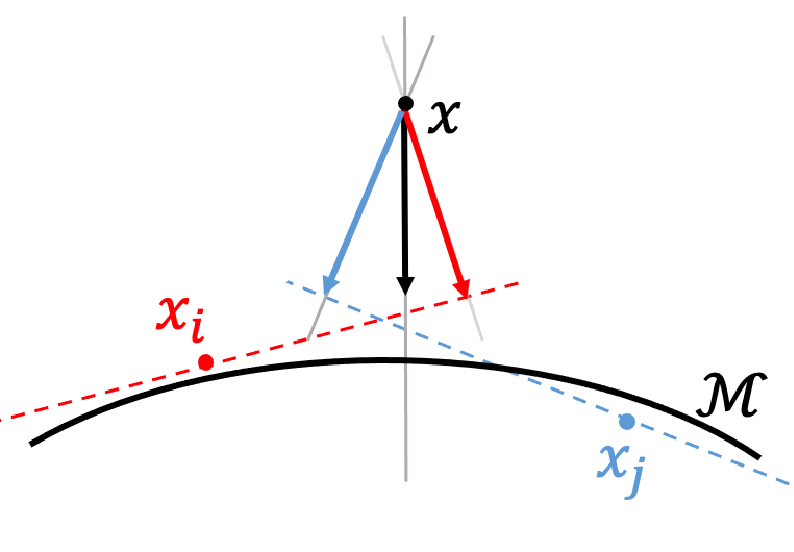}
\caption{A toy example to illustrate the methods \YQ{by}  \cite{mohammed2017manifold} (left panel) and \cite{pmlr-v75-fefferman18a} (right panel), \YQ{where the black curve is a local part of $\M$,  $x$ is a point off $\M$, and the dots $x_i$ and $x_j$ represent two samples in the neighborhood of $x$. Unlike those in the right panel, the samples in the left panel are on $\M$, as \cite{mohammed2017manifold} focus on the noiseless case.}}\label{fig:comparison}
\end{figure}

\YQ{The left panel of Figure \ref{fig:comparison} illustrates the method of \cite{mohammed2017manifold}, whose essence is to define an approximate squared-distance function (asdf) to $\M$ and estimates $\M$ by the ridge set of the asdf. Specifically, for a given point $x$, the asdf at $x$ is defined as the weighted average of squared distances from $x$ to the discs (the dashed lines) centered at the sample points (the blue and red dots). The right panel of Figure \ref{fig:comparison} illustrates the method of  \cite{pmlr-v75-fefferman18a}.} Its key idea is to approximate the bias from $x$ to $\M$ for any arbitrary $x$ and define the output manifold as points with zero bias.  To obtain the approximation of bias from $x$ to $\M$, \cite{pmlr-v75-fefferman18a} calculates the \YQ{weighted average bias from $x$ to the discs (the dashed lines) centered at the sample points}, and projects the average bias by the estimated orthogonal projection onto the normal space of $\M$ at $x^*$ (the gray solid line).



\YQ{Due to the usage of discs centered at the sample points, the effectiveness of each method depends on just how accurately these discs capture the local structure of the manifold. However, if the sample points are significantly perturbed by unbounded noise and deviate significantly from the latent manifold $\M$, these discs also deviate far from $\M$}. Hence, in this scenario with unbounded noise, the methods proposed by \cite{mohammed2017manifold} and \cite{pmlr-v75-fefferman18a} \YQ{may encounter difficulties} in fitting a manifold.


\YQ{Even if the sample points lie on the manifold, say $x_i \in \M$, the disc centered at $x_i$ captures the local manifold at $x_i$ rather than the local manifold at $x^*$. The deviation between $x_i$ and $x^*$ also introduces an approximation error in the distance/bias from $x$ to $\M$. As shown in Figure \ref{fig:comparison}, both the solid red line and the solid red arrow are shorter than the distance/bias from $x$ to $\M$, and the average between the red and blue one cannot address this issue.} 


\YQ{The limitations of the two aforementioned methods suggest that estimating the manifold based on discs centered at sample points may reduce the fitting accuracy, especially in the scenarios with unbounded noise. It is this finding that compels us to invent a new manifold-fitting method.}



\subsection{Main contribution}
From a statistical viewpoint, there is a pressing need for the development of a practical estimator with theoretical bounds satisfying the following requirements simultaneously, and which improves \YQ{on} the requirements of \cite{pmlr-v75-fefferman18a}:
\begin{itemize}
    \item The support of noise is unbounded.
    \item The estimator shares a similar geometric property to $\M$.
    \item For any arbitrary $x \in \M_{\rm out}$, the distance between $x$ and $\M$ is bounded above provided $N$ is sufficiently large and $\sigma$ is sufficiently small. In particular, the distance goes to zero as noise disappears.
    \item The smoothness of $\M_{\rm out}$ is mathematically guaranteed.
\end{itemize}
In this paper, we propose a novel approximation $f(x)$ to the bias from any point $x$ to $\M$ and fit the latent manifold $\M$ in the ambient space as the points with $f(x) = \bf{0}$, \YQ{where $\bf{0}$ presents a zero vector.}
Practically, such an output manifold can be achieved by solving the minimization $\|f(x)\|_2^2$ via gradient descent. 
This paper provides two main contributions in this aspect, the first being the theoretical analysis satisfying the four requirements above as follows:
\begin{itemize}
    \item The noise is assumed to be drawn from the Gaussian distribution $G_\sigma$ defined in (\ref{func:noise}).
    \item Any arbitrary neighborhood of $\M_{\rm out}$ is a $d$-dimensional manifold.
    \item For any $x \in \M_{\rm out}$,  $d(x, \M) \leq O(\sigma)$ given a large-enough dataset. Thus, $\M_{\rm out}$ converges to $\M$ for an increasingly large sample size and diminishing noise.
    \item The twice difference of $\M_{\rm out}$ is bounded above by $O(\frac{1}{\sqrt{\sigma}})$.
\end{itemize}
The second important contribution of this paper is the performance of our estimator in practice. As illustrated in Figures \ref{fig:comparison} and \ref{toyexample}, the bias from a point $x$ to $\M$ is approximated better than by the other two relevant methods. Numeric results in Section \ref{sec:experiment} demonstrate the improved performance, which further suggests that our method outputs the approximated manifold to the latent one.

\subsection{Organization}
The rest of the paper is organized as follows. Section \ref{sec:our_method} includes the formulation of our approximation $\M_{\rm out}$ to the latent manifold $\M$ . After that, the convergence and smoothness of $\M_{\rm out}$ \YQ{are} analyzed in Theorem \ref{thm:Hdist} and Theorem \ref{thm:reach_out}, respectively.
Section \ref{sec:bounds_f} studies the function $f$ defined in (\ref{fun:out_ours}) and determines the properties of its kernel space, the first and second derivatives. Based on these properties of $f$, the proofs of Theorem \ref{thm:Hdist} and Theorem \ref{thm:reach_out} are derived in 
Section \ref{sec:bounds_M} with numeric examples listed in Section \ref{sec:experiment} .

\section{Proposed method} \label{sec:our_method}

\subsection{Content and notations} \label{sec:notation}

Throughout this paper, the latent manifold is denoted as $\M$ and our approximation to $\M$ is denoted as $\M_{\rm out}$. For a set $A \subset \R^D$ and a point $x \in \R^D$, $\Pi_A x$ denotes the projection of $x$ onto $A$, namely the nearest point in $A$ to $x$. Hence $\Pi_{\M} x$ is the projection of $x$ onto the latent manifold. If there is no ambiguity, we might use $x^*$ instead of $\Pi_\M x$ for simplicity. The distance between $x$ and $A$, denoted by $d(x,A)$, is the Euclidean distance between $\Pi_A x$ and $x$. For any $x^* \in \M$, $T_{x^*} \M$ denotes the tangent space of $\M$ at $x^*$ and \YQ{$\Pi_{x^*}$} denotes the orthogonal projection onto the normal space of $\M$ at $x^*$.
\YQ{We will make frequent use of the lower-cases $c, c_0, c_1,$ etc. and upper-cases $C, C_0, C_1$ etc., in the rest of this paper with the lower-cases  denoting generic constants less than $1$, while the upper-cases denote generic constants greater than $1$. Values of the generic constants may change from line to line.}
By constants, we mean they are independent of the radius $r$, the standard deviation $\sigma$ or $x$, \YQ{while the constants may depend on some other constants used to characterize the manifold, such as the reach of $\M$.}

We denote $B_D(x, r)$ as the Euclidean ball in $\mathbb{R}^D$ centered at $x$ of radius $r$, which defines a neighborhood of $x$. The index set $I_{x,r}$ is defined as the indices of the sample points in $B_D(x,r)$, and $|I_{x,r}|$ denotes the cardinality of $I_{x,r}$. As given in (\ref{func:noise}), $\sigma$ represents the standard deviation of noise. Throughout this paper, we assume
\begin{align}\label{ass:r_sigma}
r = O(\sqrt{\sigma}), \quad \sigma < 1
\end{align}
without loss of generality, otherwise the data could be rescaled so that $\sigma < 1$ holds. \YQ{Here $r=O(\sqrt{\sigma})$ means that there exist constants $c$ and $C$ such that $c\sqrt{\sigma} \leq r \leq C\sqrt{\sigma}$. Noticing $\sigma < 1$, we obtain
\begin{align}\label{ineq:bound_r}
r \leq C\sqrt{\sigma} < C.
\end{align}
This means $r$ can be bounded above by certain constant. In subsequent proofs, under the premise that it does not affect the final precision of conclusive upper bound, we will relax some $r$ to $C$ in order to simplify the proof.} 

The latent manifold $\M$ is supposed to be boundaryless, compact, $d$-dimensional, and twice differentiable, with a reach bounded by $\tau>0$. The concept reach is a measure of the regularity of the manifold, first introduced by Federer \citep{federer1959curvature} as follows:
\begin{definition}[Reach]\label{def:reach}
Let $\M$ be a closed subset of $\R^D$. The reach of $\M$, denoted by ${\rm reach}(\M)$, is the largest number $\tau$ to have the property that any point at a distance $r < \tau$ from $\M$ has a unique nearest point in $\M$.
\end{definition}

An important understanding of reach is that it is a twice differentiable quantity if the manifold is treated as a function. Specifically, if $\gamma$ is an arc-length parametrized geodesic of $\M$, then for all $t$, $\|\gamma''(t)\|\leq 1/\tau$ according to \cite{niyogi2008finding}. As a twice differentiable quantity, it is easy to understand that the reach describes how flat the manifold is locally. For example, the reach of a sharp cusp is zero, and the reach of a linear subspace is infinite. Thus, it is natural that the reach measures how close a manifold is to the tangent space locally. The following proposition by \cite{federer1959curvature} explains this phenomenon:
\begin{prop}\label{prop:reach}
\begin{align}
{\rm reach}(\M)^{-1} = \sup \bigg\{\frac{2d(y,T_x\M)}{\|x-y\|_2^2}| x,y \in \M, x \neq y\bigg\}
\end{align}
\end{prop}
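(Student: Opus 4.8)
The plan is to prove the two inequalities $K \le 1/\tau$ and $1/\tau \le K$ separately, where I abbreviate $\tau = {\rm reach}(\M)$ and $K = \sup\{2d(y,T_x\M)/\|x-y\|_2^2 : x,y\in\M,\ x\ne y\}$, reading $T_x\M$ throughout as the \emph{affine} tangent plane through $x$, so that $d(y,T_x\M)$ is exactly the norm of the component of $y-x$ normal to $\M$ at $x$; I would state this reading explicitly at the outset, since it is what makes the identity meaningful.

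For $K \le 1/\tau$: fix distinct $x,y\in\M$, let $u$ be the orthogonal projection of $y-x$ onto the normal space $N_x\M=(T_x\M)^\perp$ so that $d(y,T_x\M)=\|u\|_2$, assume $u\ne 0$, and set $v=u/\|u\|_2$. The key input is the standard ``rolling ball'' property of reach (Federer \cite{federer1959curvature}, see also \cite{niyogi2008finding}): for every $0<r<\tau$, the point $x+rv$ has $x$ as its unique nearest point in $\M$, so $\|x+rv-y'\|_2\ge r$ for all $y'\in\M$. Taking $y'=y$ and expanding $\|x+rv-y\|_2^2\ge r^2$ gives $\|x-y\|_2^2 + 2r\langle x-y,v\rangle \ge 0$, i.e. $\langle y-x,v\rangle \le \|x-y\|_2^2/(2r)$; letting $r\uparrow\tau$ and using $\langle y-x,v\rangle=\langle u,v\rangle=\|u\|_2=d(y,T_x\M)$ yields $d(y,T_x\M)\le \|x-y\|_2^2/(2\tau)$, and taking the supremum gives $K\le 1/\tau$.

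For $1/\tau \le K$, equivalently ${\rm reach}(\M)\ge 1/K$ (the case $K=\infty$ being vacuous): by the definition of reach it suffices to show every $p$ with $d(p,\M)<1/K$ has a unique nearest point in $\M$, and existence is automatic by compactness, so I would argue by contradiction. Suppose such a $p$ — necessarily $p\notin\M$, with $\delta:=d(p,\M)\in(0,1/K)$ — has two distinct nearest points $x_1,x_2\in\M$. Since $\M$ is $C^2$, the first-order optimality condition gives $p-x_i\perp T_{x_i}\M$. Writing $v_1=(p-x_1)/\delta$ (a unit normal at $x_1$) and setting $s:=\langle p-x_1,p-x_2\rangle$, an elementary Pythagorean computation shows $\|x_1-x_2\|_2^2 = 2(\delta^2-s)$ and $\langle x_2-x_1,v_1\rangle=(\delta^2-s)/\delta$, hence $\langle x_2-x_1,v_1\rangle=\|x_1-x_2\|_2^2/(2\delta)$. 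Since $v_1$ is a unit vector in $N_{x_1}\M$ we get $d(x_2,T_{x_1}\M)\ge\langle x_2-x_1,v_1\rangle=\|x_1-x_2\|_2^2/(2\delta)$, while the definition of $K$ gives $d(x_2,T_{x_1}\M)\le K\|x_1-x_2\|_2^2/2$; together these force $\delta\ge 1/K$, a contradiction. Combining the two parts, $K={\rm reach}(\M)^{-1}$.

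The only delicate point — the main obstacle — is the first direction: it rests on the rolling-ball characterization of reach (moving distance $r<\tau$ along a unit normal from a point of $\M$ keeps $\M$ at distance exactly $r$ with that point as the unique foot), which is a genuinely foundational fact about reach rather than an immediate consequence of the bare ``unique nearest point'' definition, and I would simply cite it from \cite{federer1959curvature,niyogi2008finding}. Everything else — the Pythagorean expansions, the first-order conditions for nearest points, and the two-foot-point identity in the second direction — is elementary.
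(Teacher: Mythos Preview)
The paper does not prove this proposition at all; it simply cites it from Federer \cite{federer1959curvature} (the result is Federer's Theorem~4.18 specialized to smooth submanifolds). So there is no ``paper's own proof'' to compare against.

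Your argument is correct and self-contained. The first direction is the standard rolling-ball computation, and your citation of Federer/Niyogi--Smale--Weinberger for the key input (that $x+rv$ with $v$ a unit normal at $x$ and $r<\tau$ has $x$ as its unique foot) is appropriate; this is Federer's Theorem~4.8(12). The second direction via two distinct nearest points is clean: the identities $\|x_1-x_2\|_2^2=2(\delta^2-s)$ and $\langle x_2-x_1,v_1\rangle=(\delta^2-s)/\delta$ check out, and since $s<\delta^2$ (Cauchy--Schwarz, with strictness because $x_1\ne x_2$) the inner product is positive, so the inequality $d(x_2,T_{x_1}\M)\ge\langle x_2-x_1,v_1\rangle$ is usable. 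One minor remark: you implicitly rely on ${\rm reach}(\M)>0$ in the first direction (otherwise there is no $r\in(0,\tau)$ to pick), which follows from $\M$ being compact and $C^2$ and is worth stating.
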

We emphasize that if ${\rm reach}(\M) > 0$, the error between $\M$ and $T_x\M$ at $y$ is of a higher order than $\|x-y\|_2$. Thus, in a small-enough neighbor of $x$, we can estimate $\M$ by $T_x\M$ with negligible error, which is the foundation of our approximation. 

The approximation $\M_{\rm out}$ is defined using the noisy sample points $\{x_i\}_{i=1}^N$. The number of sample points should be sufficiently large such that $B_D(x,r)$ contains enough sample points. Proposition \ref{prop:samplesize} claims the relationship between $|I_{x,r}|$ and $N$.
\begin{prop}\label{prop:samplesize}
Suppose $x$ satisfies $d(x,\M)\leq cr$ with some $c<1$. There exist constants $c'$ and $C$ such that $|I_{x, r}| \geq c'r^dN$ \YQ{with} probability at least $1 - C/\sqrt{N}$.
\end{prop}
Proof of proposition \ref{prop:samplesize} is given in Appendix \ref{proof:samplesize}. Based on this proposition, the requirement on $|I_{x,r}|$ can be transformed to the requirement on $N$ for further analysis in later sections. Specifically, $N$ is required to be a sufficiently large quantity in the order of $O(r^{-(d+2)})$.

\subsection{Definition of the approximated manifold} 

\YQ{This section introduces a novel method for estimating the bias $f(x)$ from a point $x$ to the latent manifold $\M$, and defines the approximated manifold $\M_{\rm out}$ as the points satisfying $f(x) = \bf{0}$. Unlike the aforementioned methods, which rely on discs centered at sample points to estimate the bias $f(x)$, here we build upon the fact that a Riemannian manifold can be locally treated as an affine space and calculate the bias from $x$ to $T_{x^*}\M$ as an equivalent measure of the bias $f(x)$ from $x$ to $\M$. Thus, the key to addressing such a bias is to find an affine space $\{x': \YQ{\Psi_x^\alpha}(x'-\bf{b})\}$ approximating $T_{x^*}\M$, where $\YQ{\Psi_x^\alpha}$ estimates the orthogonal projection onto the normal space at $x^*$ and $\bf{b}$ estimates one points in $T_{x^*}\M$.
}

In order to approximate the orthogonal projection onto the normal space of $\M$ at $x^*$, $\Psi_x^\alpha$ is defined as the weighted average of $\{P_{x_i}\}_{i \in I_{x,r}}$, where $P_{x_i}$ is the orthogonal projection perpendicular to the first $d$ principal components in $B_D(x_i, r^{\prime})$. Mathematically, $P_{x_i} = V_{\bot}V_{\bot}^T$, where $V_\bot$ is the orthogonal component of $V$ and $V$ is the $D \times d$ matrix whose columns are the eigenvectors corresponding to the largest $d$ eigenvalues of $\sum_{j \in I_{x_i, r'}} (x_j-x_i)(x_j-x_i)^T$. The radius $r^\prime$ should be sufficiently large, so that the intersection of $B_D(x_i,r')$ and $\M$ is nonempty. Further analysis in Section \ref{sec:Pxi} explains that we need $r' \geq 2r$.

As the weighted average of $\{P_{x_i}\}_{i \in I_{x,r}}$, 
\begin{align}\label{equ:Pi_x_ours}
   \YQ{\Psi_x^\alpha} = \Pi_{\rm hi}(A_x), \quad A_x= \sum_{i \in I_{x,r}} \alpha_i(x) P_{x_i}.
\end{align}
Here, $\Pi_{\rm hi}(A)$ denotes the projection onto the span of the eigenvectors corresponding to the largest $D-d$ eigenvalues of $A$. Specifically, $\Pi_{\rm hi}\big(A\big) = VV^T$, $V$ is a $D\times(D-d)$ matrix whose columns are the eigenvectors corresponding to the largest $D-d$ eigenvalues of $A$. 
Further, the weights $\alpha_i:\R^{D} \to \R$ in (\ref{equ:Pi_x_ours}) are defined as follows:
\begin{align}\label{def:alpha}
    \tilde{\alpha}_i(x) =
    \begin{cases}
    \left(1 - \frac{\|x - x_i\|_2^2}{r^2}\right)^\beta, & x \in B_D(x_i,r) \\
    0, & {\mbox otherwise}
    \end{cases}
    , \quad
    \tilde{\alpha}(x) = \sum_i \tilde{\alpha}_i(x), \quad
    \alpha_i(x) = \frac{\tilde{\alpha}_i(x)}{\tilde{\alpha}(x)},
\end{align}
 with $\beta \geq 2$ a fixed integer guaranteeing $f(x)$ in (\ref{fun:out_ours}) to be twice differentiable. 
 
\YQ{Under the assumption that a manifold can be approximated well by an affine space locally, samples in the neighborhood of $x$ lie close to $T_{x^*}\M$, with the exception of noise. Therefore, a convex combination of these samples also lies close to $T_{x^*}\M$. Thus, we can estimate $\bf{b}$ using the average of sample points in the neighborhood of $x$.} Recalling the weights in (\ref{def:alpha}), we formulate ${\textbf b} = \sum_{i \in I_{x,r}} \alpha_i(x)x_i$ as the weighted average of sample points in the neighborhood of $x$. Then the bias from $x$ to the space $\{x': \YQ{\Psi_x^\alpha}(x'-\bf{b})\}$ is
\begin{align}\label{fun:out_ours}
f(x): \mathbb{R}^D \to \mathbb{R}^D, \quad f(x) = \YQ{\Psi_x^\alpha} \bigg(x - \sum_{i \in I_{x,r}} \alpha_i(x)x_i \bigg).
\end{align}
Finally, the approximation is defined as
\begin{align}\label{out:ours}
    \M_{\rm out} = \{x: d(x,\M)\leq cr, \ f(x) = {\bf 0}, c<1 \},
\end{align}
that is, the points with zero bias. 
By Definition 11 of \cite{fefferman2016testing}, $\tilde{\M} = \{x : d(x,\M)\leq cr \}$ is a manifold. Restrict $f$ to $\tilde{\M}$. When ${\bf 0}$ is regular, the preimage $f^{-1}({\bf 0}) = \M_{\rm out} \subset \tilde{\M}$ is a smooth submanifold. So we call $\M_{\rm out}$ as the approximated manifold in the paper. Further characterization of the approximated manifold will be discussed in Theorem \ref{thm:manifold}.

The definition of $\M_{\rm out}$ is practical. Theorem \ref{thm:Hdist} in the next section claims that $\M_{\rm out}$ approximates $\M$ in the order of $O(r^2)$. This means if we have an initial estimator of $\M$ with error $cr$, then we could achieve a better estimator of $\M$ using the definition of $\M_{\rm out}$. In practice, we solve the minimization $\|f(x)\|_2^2$ via the gradient descent method given the initial estimator, and the output of  the gradient descent method approximates $\M$ in the order of $O(r^2)$ better than the initial guess.

\begin{figure}[htbp]
\centering\includegraphics[width=3.5in]{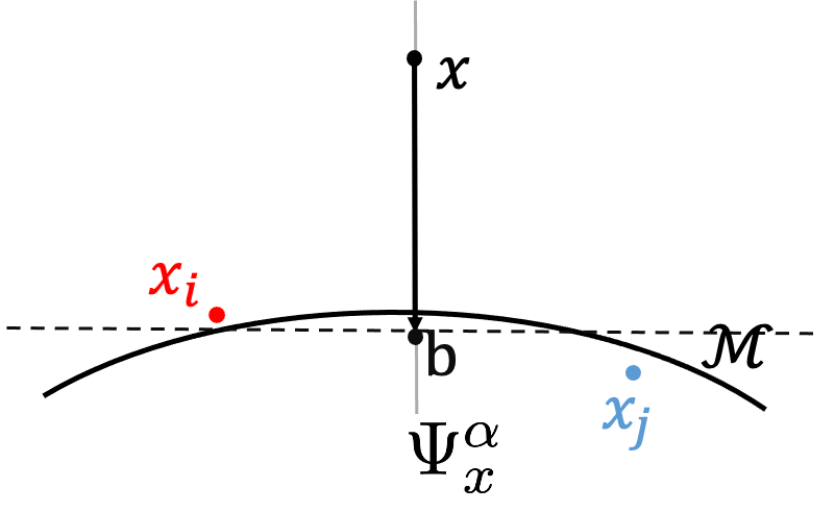}
\caption{A toy example to illustrate \YQ{the methods} in our method. $\YQ{\Psi_x^\alpha}$ is used to estimate the orthogonal projection onto the normal space of $\M$ at $x^*$, the black dot $\bf{b}$ is used to estimate a point in $T_{x^*}\M$. Then the space $\{x^{\prime}: \YQ{\Psi_x^\alpha}(x^{\prime}-\bf{b})\}$,  illustrated as the black dashed line, approximates $T_{x^*}\M$, and the bias from $x$ to the black dashed line is the estimated bias from $x$ to $\M$, geometrically illustrated as the black arrow.} \label{toyexample}
\end{figure}

The advantages of our method are twofold. \YQ{First, we introduce $\YQ{\Psi_x^\alpha}$  directly, thus capturing the local structure of manifold at $x^*$, while the aforementioned methods capture the local structure of manifold near the sample points. Second}, we approximate the manifold using a space passing $\bf{b}$ instead of any sample point. Benefitting from the mutual offset of noise, $\bf{b}$ hardly deviates far away from $T_{x^*}\M$ even if the noise is unbounded. As a result, we can expect $\{x^{\prime}: \YQ{\Psi_x^\alpha} (x^{\prime} - \bf{b})\}$ to be a better approximation to the local structure of manifold at $x^*$, which guarantees the bias from $x$ to $\{x^{\prime}: \YQ{\Psi_x^\alpha} (x^{\prime} - \bf{b})\}$ is a better approximation to the bias from $x$ to $\M$. The toy example in Figure \ref{toyexample} illustrates the superiority of our method. The black arrow in Figure \ref{toyexample} is almost the bias from $x$ to $\M$, while both the average length of the solid lines in the left panel of Figure \ref{fig:comparison} and the black arrow in the right panel of Figure \ref{fig:comparison} are shorter than the ideal one.

\subsection{Convergence and smoothness of the approximated manifold}
In Theorem \ref{thm:manifold}, we prove any arbitrary neighborhood of $\M_{\rm out}$ is a $d$-dimensional manifold in high probability. In Theorem \ref{thm:Hdist}, we characterize the convergence of $\M_{\rm out}$ in the probability $\delta_0(1-\delta)^2$, where we denote 
\begin{align}\label{eq:delta_0}
\YQ{\delta_0 = 1-d\exp\{\frac{-c{r}^{d+2}N}{2\ln 2}\}}
\end{align}
for convenience. When $N$ is sufficiently large as we set, $\delta_0$ is a high probability. Theorem \ref{thm:Hdist} tells us that if $r = O(\sqrt{\sigma})$ is sufficiently small, $\M_{\rm out}$ is a good estimator to $\M$. Moreover, Corollary \ref{coro:Hdist} tells us that the approximated manifold $\M_{\rm out}$ converges to the latent manifold $\M$ as $\sigma \to 0$.

\begin{theorem}\label{thm:manifold}
Given $\delta > 0$ and any arbitrary $x \in \M_{\rm out}$, there exists $\epsilon$ such that $\M_{\rm out} \cap B_D(x,\epsilon)$ is a $d$-dimensional manifold \YQ{with} probability $\delta_0(1-\delta)^2\big(1-(1-cr^d)^N \big)$.
\end{theorem}

\begin{theorem}\label{thm:Hdist}
Given $\delta > 0$, there exists a constant $C$ such that $d(x, \M) \leq Cr^2 $ for any arbitrary $x \in \M_{\rm out}$ \YQ{with} probability at least $\delta_0(1-\delta)^2$.
\end{theorem}

We point out that Theorem \ref{thm:Hdist} holds assuming $\sigma<1$ and $r = O(\sqrt{\sigma})$ as (\ref{ass:r_sigma}) claims. If we further assume $\sigma \to 0$, we achieve the following corollary:
\begin{corollary}\label{coro:Hdist}
 For any arbitrary $x \in \M_{\rm out}$, $d(x, \M) \to 0$ as $\sigma \to 0$ \YQ{with} probability at least $\delta_0(1-\delta)^2$.
\end{corollary}
\begin{proof}
    Given $r = O(\sqrt{\sigma})$, there exists $C_0$ such that $r = C_0 \sqrt{\sigma}$. For any $\varepsilon> 0$, let $\sigma = \frac{\varepsilon}{CC_0^2} $, and then $d(x, \M) \leq Cr^2 = CC_0^2\sigma = \varepsilon$.
\end{proof}

Generally speaking,  \YQ{the fraction} $d(y, T_x\M_{\rm out}) / \|y-z\|_2^2$ characterizes the twice differentiable quantity, which controls how flat $\M_{\rm out}$ is locally. Therefore, the lower bound of $d(y, T_x\M_{\rm out}) / \|y-z\|_2^2$ guarantees the smoothness of $\M_{\rm out}$. Recalling Proposition \ref{prop:reach}, such a quantity is related to the reach of a manifold, which \YQ{characterizes} the smoothness of a manifold.

\begin{theorem}\label{thm:reach_out}
Given $\delta > 0$, there exists a constant $C$ such that 
\[
\frac{\|z-x\|_2^2}{d(z, T_x\M_{\rm out})} \geq cr
\]
for any arbitrary $x$ and $z$ in $\M_{\rm out}$ \YQ{with} probability at least $\delta_0^2(1-\delta)^4\big(1-(1-cr^d)^N \big)$.
\end{theorem}

\YQ{The proofs of Theorem \ref{thm:manifold}, Theorem \ref{thm:Hdist} and Theorem \ref{thm:reach_out} are organized in the following way.  First we explore the properties of $P_{x_i}$ for given $x_i$ through Theorem \ref{thm:P_z} in subsection \ref{sec:Pxi}, reveal the properties of weights $\{\alpha_i\}$ through Proposition \ref{prop:alpha_bound}  and discuss the concentration phenomenon of Gaussian noise through Lemma \ref{lma:xi} in subsection \ref{sec:phix}. Based on the conclusions above, we prove in Theorem \ref{thm:bound_Pix} an upper bound on the approximation error of $\Psi_x^{\alpha}$, as a weighted sum of $\{P_{x_i}\}_{i \in I_{x,r}}$ , to $\Pi_{x^*}$. Subsequently, in subsection \ref{sec:f} and subsection \ref{sec:df}, we obtain the upper bounds on $\|f(x)\|_2$ and the first and second derivative of $f(x)$ through Theorem \ref{thm:bound_fM}, Theorem \ref{thm:first_der_f} and Theorem \ref{thm:second_der_f}. Finally, the main conclusions, namely Theorem \ref{thm:manifold}, Theorem \ref{thm:Hdist} and Theorem \ref{thm:reach_out}, are proved in Section \ref{sec:bounds_M}, using the upper bounds regarding $f(\cdot)$ defined by (\ref{fun:out_ours}). The dependency of the above theorems, lemmas, and propositions is demonstrated in Figure \ref{fig:dependency}.}


\begin{figure}[bh]
\centering
\includegraphics[width=\linewidth]{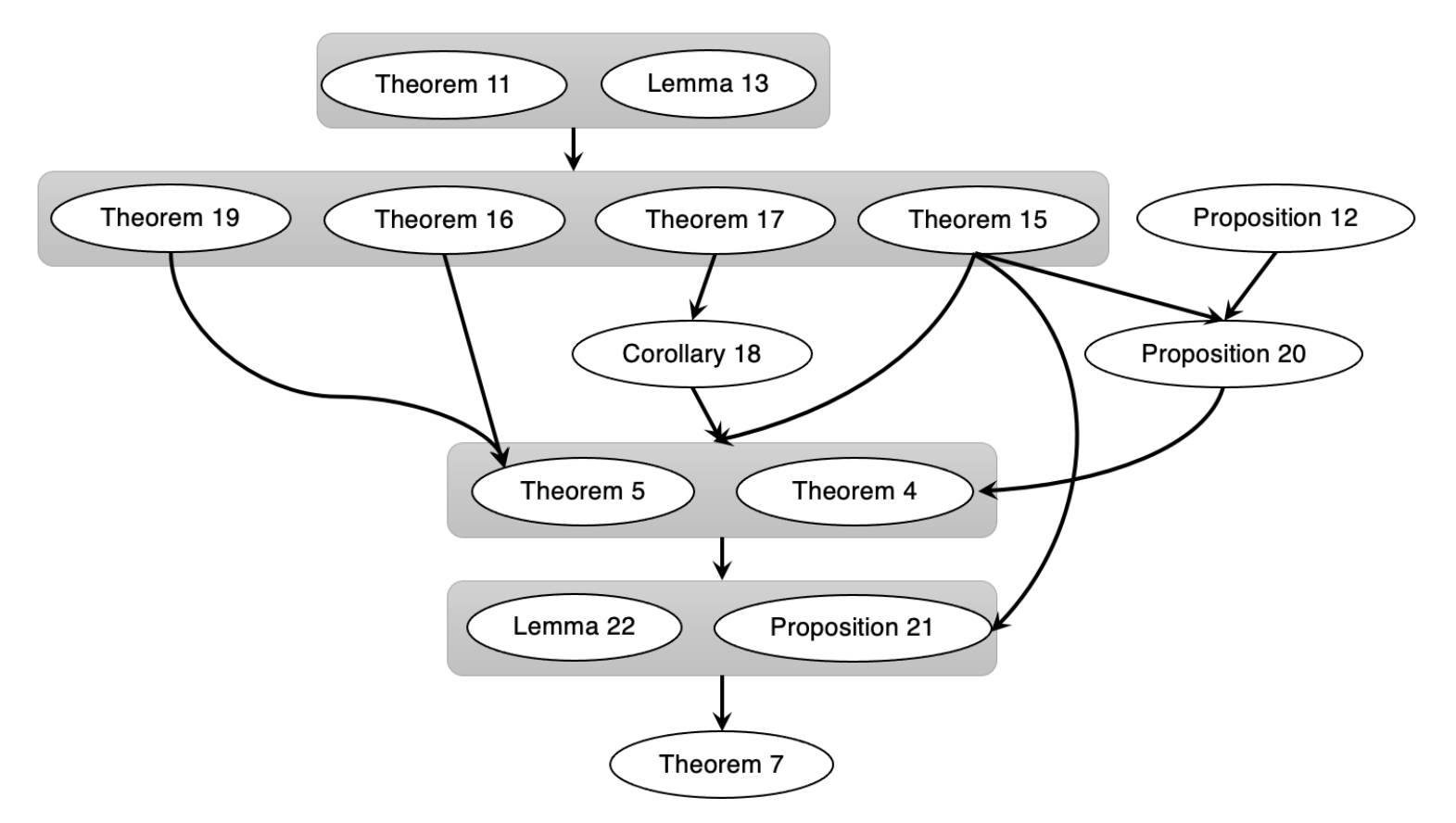}
\caption{The dependency of the core theorems, lemmas, and propositions.}\label{fig:dependency}
\end{figure}

\section{Bounds regarding the function $f$} \label{sec:bounds_f}
\YQ{To analyze $\Psi_x^\alpha$, we first explore the properties of $P_{x_i}$, where $x_i$ is any arbitrary sample point in $B_D(x,r)$. Next, properties of $\YQ{\Psi_x^\alpha}$ can be analyzed as the weighted average of $\{P_{x_i} \}_{i \in I_{x,r}}$. }
Finally, we successively bound $\|f(x)\|_2$, the first derivative of $f(x)$ and the second derivative of $f(x)$ above using bounds regarding $\YQ{\Psi_x^\alpha}$.

\subsection{Properties of $P_{x_{i}}$} \label{sec:Pxi}
To make the notations clearer, we replace $x_{i}$ with $z$ in this section. Recalling the notations in Section \ref{sec:notation}, $z^*$ is the closest point on $\M$ to $z$ and $\YQ{\Pi_{z^*}}$ is the orthogonal projection onto the normal space of $\M$ at $z^*$. The aim of this section is to bound the error $\|P_z - \YQ{\Pi_{z^*}}\|_F$.

Figure \ref{fig:Pi} illustrates the variables used for the discussion of $P_z$ and the related proof. The $z$ (black dot) is an observed noisy point of the manifold $\M$ and the blue ball is $B_D(z,r')$, centered at $z$ with radius $r'$. The subsequent proof requires $B_D(z,r') \cap \M \neq \emptyset$, which is equal to $d(z,\M) \leq r'$. Given $d(x, \M)\leq cr$ and $z \in B_D(x,r)$, we have
\begin{align*}
d(z, \M) \leq \|z - x^*\|_2 \leq \|z - x\|_2 + \|x-x^*\|_2 = \|z - x\|_2 + d(x, \M) \leq (c+1)r.
\end{align*}
Therefore, for any $r' \geq 2r$, $d(z,\M) < r'$. In this paper, we set $r' = 2r$ for convenience.
The $z_i$ (red dot) is a noisy sample located in $B_D(z,r')$,
satisfying $z_i = y_i + \xi_i$, $z_i^* = \Pi_{\M} z_i$, and $p_i$ is the projection of $z_i$ onto $T_{z^*} \M$.
\begin{figure}[ht]
    \centering
    \includegraphics[width = 0.7\textwidth]{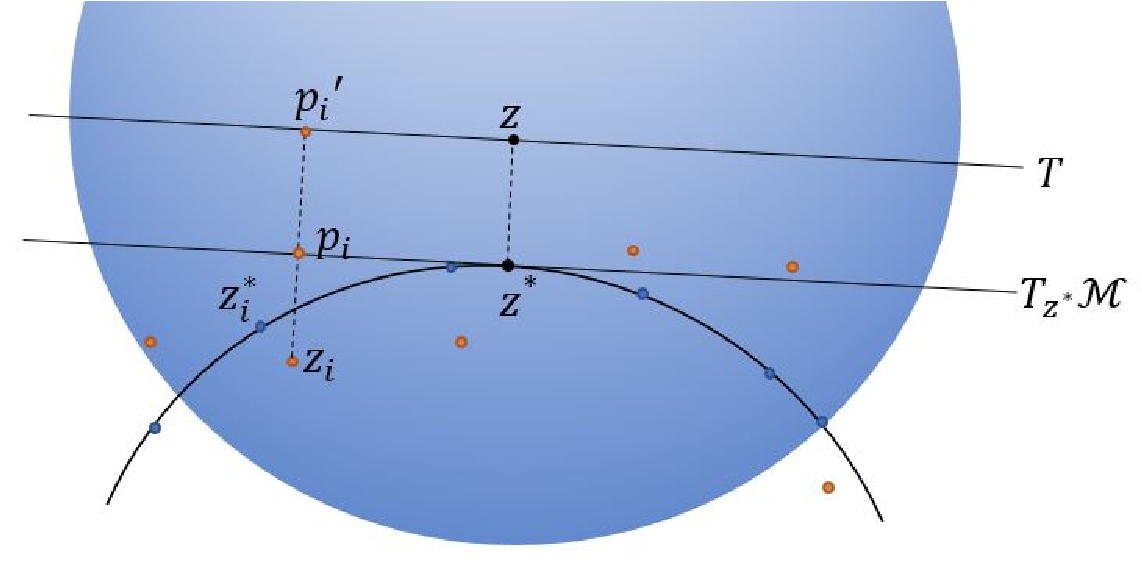}
    \caption{Diagram of variables used for the discussion of $P_z$.}
    \label{fig:Pi}
\end{figure}
The space $T$ is the translation of $T_{z^*}\M$ passing $z$, and $p_i'$ is the projection of $z_i$ onto $T$.

Consider the symmetric matrix 
$\Lambda = \frac{1}{|I_{z,r'}|}\sum_{i \in I_{z,r'}}(p_i-z^*)(p_i-z^*)^T$. Since both $p_i$ and $z^*$ are located in $T_{z^*}\M$, the spanning space of $\Lambda$ is contained in $T_{z^*}\M$. Thus, ${\rm rank}(\Lambda) \leq {\rm dim}(T_{z^*}\M) = d$ and thereby the $(d+1)$-th largest eigenvalue of $\Lambda$ is $0$. Setting columns of $U$ be the eigenvectors of $\Lambda$ corresponding to the $d$ largest eigenvalues, $U$ is also a basis of $T_{z^*}\M$. Let $U_{\bot}$ be the orthogonal complement of $U$, and we have $\YQ{\Pi_{z^*}} = U_{\bot}U_{\bot}^T$. Recalling $P_z = V_\bot V_\bot^T$, where $V_\bot$ is the orthogonal component of $V$ and columns of $V$ are the eigenvectors corresponding to the $d$ largest eigenvalues of \[
\hat{\Lambda} = \frac{1}{|I_{z,r'}|} \sum_{i \in I_{z,r'}} (z_i - z)(z_i-z)^T,
\]
we obtain
\begin{align}\label{eq:matrix_perturbation}
\|P_z - \YQ{\Pi_{z^*}}\|_F = \|V_{\bot}V_{\bot}^T-U_{\bot}U_{\bot}^T\|_F = \|VV^T- \YQ{UU^T}\|_F \leq \frac{2\sqrt{2}\|\Lambda-\hat{\Lambda}\|_F}{\lambda_d}
\end{align}
by the following Lemma:
\begin{lemma} \label{lma:Matrix}
Let $\Lambda$, $\hat \Lambda \in \R^{n \times n}$ be symmetric, with eigenvalues $\lambda_1 \geq \cdots \geq \lambda_n$ and $\hat \lambda_1 \geq \cdots \YQ{\geq} \hat \lambda_n$ respectively. Let $1 \leq d \leq n$ and assume $\lambda_d > 0$, $\lambda_{d+1} = 0$. Let $U$, $\hat U \in \R^{n \times d}$ be eigenvectors corresponding to the first $d$ eigenvalues of $\Lambda$ and $\hat \Lambda$, respectively. Then
\[
\|UU^T- \hat{U} \hat{U}^T\|_F = \sqrt{2} \|\sin \theta (\hat{U}, U)\|_F
 \leq \frac{2\sqrt{2}\|\hat \Lambda- \Lambda\|_F}{\lambda_d}
\]
by the Davis-Kahan $\sin\theta$ theorem, where $\theta(\hat{U},U)$ is the $n \times n$ diagonal matrix, whose diagonal comprises the principal angles between the column spaces of $\hat{U}$ and $U$, and $\sin \theta (\hat{U}, U)$ is defined entrywise.
\end{lemma}

We require the upper bound on $\|\hat{\Lambda} - \Lambda\|_F$ and the lower bound on the $d$-th eigenvalue of $\Lambda$, deriving them both in Lemma \ref{lma:covariance} and Lemma \ref{lma:lambda} as follows:

\begin{lemma}\label{lma:covariance}
Suppose $r' = O(\sqrt{\sigma})$ and $d(z,\M) \leq cr'$ with some $c<1$. 
There exists $C$ such that $\| \frac{1}{|I_{z,r'}|} \big( \sum_{i \in I_{z,r'}} (z_i - z)(z_i-z)^T - \sum_{i \in I_{z,r'}} (p_i - z^*)(p_i-z^*)^T \big)\|_F$ is bounded above by
\begin{align*}
\frac{C}{|I_{z,r'}|} \sum_{i \in I_{z,r'}}  \bigg( \|\xi_i\|_2^4 + \|\xi_i\|_2^3 +  \|\xi_i\|_2^2 + r'\|\xi_i\|_2  \bigg)+ C\bigg({r'}^3 + r'\|z-z^*\|_2+\|z-z^*\|_2^2\bigg).
\end{align*}
\end{lemma}

\begin{lemma}\label{lma:lambda}
The $d$-th eigenvalue of $\frac{1}{|I_{z,r'}|} \sum_{i \in I_{z,r'}} (p_i-z^*)(p_i-z^*)^T$ is bounded below by
$\lambda_d \geq c {r'}^2$,
\YQ{with} probability $\delta_0$.
\end{lemma}
Proofs of Lemma \ref{lma:covariance} and \ref{lma:lambda} appear in Appendix \ref{app:B}. Plugging the upper bound of Lemma \ref{lma:covariance} and the lower bound of Lemma \ref{lma:lambda} into (\ref{eq:matrix_perturbation}), we can obtain the following theorem:

\begin{theorem}\label{thm:P_z}
Suppose $r' = O(\sqrt{\sigma})$ and $d(z,\M) \leq r'$. 
For any given $\delta$ ,there exists $C$ such that the difference between $P_z$ and $\YQ{\Pi_{z^*}}$ is bounded by
\begin{align*}
    \|P_z - \YQ{\Pi_{z^*}}\|_F 
  &\leq \frac{C}{{r'}^2} \frac{1}{|I_{z,r'}|} \sum_{i \in I_{z,r'}}  \bigg( \|\xi_i\|_2^4 + \|\xi_i\|_2^3 +  \|\xi_i\|_2^2 + r'\|\xi_i\|_2  \bigg) \\
  &+C\bigg(r'+\frac{\|z-z^*\|_2}{r'} + \frac{\|z-z^*\|_2^2}{{r'}^2}\bigg),
  \quad {\rm in \ probability \ } \delta_0.
\end{align*}
\end{theorem}

The term $\|z-z^*\|_2^2$ in Theorem \ref{thm:P_z} tells us that $P_z$ cannot approximate $\YQ{\Pi_{z^*}}$ well if $z$ is distant from $\M$. \YQ{When the sample size $N$ is sufficiently large and the sample points are blurred by Gaussian noise, there will always be several sample points that deviate far away from the latent manifold. If $x_i$ represents such a sample point, given Theorem \ref{thm:P_z}, $P_{x_i}$ cannot effectively capture the local structure of the latent manifold. However, the next section will explain how the error caused by $P_{x_i}$ can be eliminated}  when we calculate a weighted average over $\{P_{x_i}\}_{i \in I_{x,r}}$, denoted as $\YQ{\Psi_x^\alpha}$.

\subsection{Properties of $\YQ{\Psi_x^\alpha}$}\label{sec:phix}
This section evaluates how $\YQ{\Psi_x^\alpha}$ approximates $\YQ{\Pi_{x^*}}$ using the upper bound of $\|P_{x_i} - \YQ{\Pi_{x_i^*}}\|_F$ as derived in \YQ{Theorem} \ref{thm:P_z}. As the weighted average of $\{P_{x_i}\}_{i \in I_{x,r}}$, $\YQ{\Psi_x^\alpha}$ benefits from the mutual offset of the Gaussian noise. To mathematically clarify this phenomenon, Lemma \ref{lma:xi} below bounds the weighted averages regarding $\|\xi_i\|_2$ above  \YQ{ by the Berry-Esseen Theorem and the properties of the weights $\{\alpha_i(x)\}$ stated in Proposition \ref{prop:alpha_bound}}. 

\begin{prop}\label{prop:alpha_bound}
For a point $x$ satisfying $d(x,\M) \leq cr$, there exist constants $c_0$ and $c_0'$ such that
\begin{itemize}
    \item[(i)] $\tilde{\alpha}(x)$ is bounded below by $c_0|I_{x,r}|$, \YQ{with} probability $1-C_0/\sqrt{|I_{x,r}|}$
    \item[(ii)] $\tilde{\alpha}(x)$ is bounded below by a constant $c_0'$ \YQ{with} probability $1 - (1 - cr^d)^N = O(Nr^d)$.
\end{itemize}
\end{prop}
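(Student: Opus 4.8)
The plan is to exploit that the summands $\tilde\alpha_i(x)=\bigl(1-\|x-x_i\|_2^2/r^2\bigr)^\beta\,\mathbf 1\{\|x-x_i\|_2\le r\}$ are i.i.d.\ random variables valued in $[0,1]$, so that both assertions reduce to a single per-sample geometric estimate followed by an elementary scalar concentration inequality. Write $x^\ast=\Pi_\M x$, so $x-x^\ast$ is normal to $T_{x^\ast}\M$ with $\|x-x^\ast\|_2\le cr$, and recall $\tilde\alpha(x)=\sum_{i\in I_x}\tilde\alpha_i(x)$.

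The core step is to show that, under the standing scaling ($r$ of order $\sqrt{\sigma\tau}\,D^{1/4}$, in particular $r\ll\tau$), there is a constant $c_1>0$ depending only on $d$, $\tau$ and $\Vol(\M)$ with
\[
 c_1 r^d \;\le\; \mathbb P\bigl(\|x-x_1\|_2\le r/2\bigr),\qquad \mathbb P\bigl(\|x-x_1\|_2\le r\bigr)\;\le\; c_1^{-1} r^d .
\]
For the first inequality I would decompose $x_1-x=(y_1-x^\ast)+\xi_1-(x-x^\ast)$ and split $\xi_1=\xi_1^\parallel+\xi_1^\perp$ along and orthogonal to $T_{x^\ast}\M$. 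Since ${\rm reach}(\M)\ge\tau\gg r$, the piece of $\M$ inside $N_x$ is a graph over $T_{x^\ast}\M$ with normal deviation of order $r^2/\tau$ by \eqref{prop:reach}; conditioning on the event that the tangential coordinate of $y_1$ lies within $r/16$ of $x^\ast$ --- which has probability of order $r^d/\Vol(\M)$, since $y_1$ is uniform on the $d$-dimensional manifold $\M$ and the projection onto $T_{x^\ast}\M$ has bounded Jacobian on scales $\ll\tau$ --- it remains to lower bound $\mathbb P(\|\xi_1^\parallel\|_2\le r/16)$ and $\mathbb P(\|\xi_1^\perp-(x-x^\ast)\|_2\le r/4)$ (up to the $O(r^2/\tau)$ curvature term). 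This is exactly where the bandwidth choice is used: it makes $\sqrt D\,\sigma\ll r$, so $\|\xi_1\|_2^2=\sigma^2\chi^2_D$ is below a small fraction of $r^2$ with probability bounded away from $0$, whence both events have probability bounded below by an absolute constant, and multiplying the (essentially independent) tangential and normal contributions gives the bound. For the second inequality the same thin-shell picture shows $\|x-x_1\|_2\le r$ forces $\|x-y_1\|_2\le 2r$ up to a $\sqrt D\,\sigma\ll r$ error, an event of probability at most $\Vol\bigl(\M\cap B(x,2r)\bigr)/\Vol(\M)$, which is of order $r^d$ because $\M$ is $d$-dimensional with reach $\gg r$. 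This per-sample estimate is the real obstacle: controlling $\xi_1^\perp$ in the $(D-d)$-dimensional normal space without losing a $D$-dependent factor, and keeping the curvature correction of order $r^2/\tau$ uniformly below $r$, is precisely what the scaling $r\asymp\sqrt{\sigma\tau}D^{1/4}$ together with $r\ll\tau$ buys.

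Given the per-sample estimate, part (1) follows by conditioning on $I_x$ with $|I_x|=k$: then $\{x_i\}_{i\in I_x}$ are i.i.d.\ from the law of $x_1$ restricted to $N_x$, so $\{\tilde\alpha_i(x)\}_{i\in I_x}$ are i.i.d.\ in $[0,1]$ with mean $\mu=\mathbb E[\tilde\alpha_1(x)\mid x_1\in N_x]\ge(3/4)^\beta\,\mathbb P(\|x-x_1\|_2\le r/2\mid x_1\in N_x)\ge(3/4)^\beta c_1^2=:2c_0$, using both halves of the estimate (on $\{\|x-x_1\|_2\le r/2\}$ one has $\tilde\alpha_1(x)\ge(3/4)^\beta$, and the conditional probability of that event is the ratio of the two displayed bounds). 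Since the variance of $\sum_{i\in I_x}\tilde\alpha_i(x)$ is at most $\mu k\le k$, Chebyshev's inequality gives $\mathbb P(\tilde\alpha(x)<c_0 k\mid|I_x|=k)\le k/(c_0 k)^2\le C/\sqrt k$, which is the claim (Hoeffding's inequality would even replace $C/\sqrt k$ by an exponentially small bound, but the stated form suffices). For part (2), the events $A_i=\{\|x-x_i\|_2\le r/2\}$ are independent with $\mathbb P(A_i)\ge c_1 r^d$, and on $A_i$ one has $\tilde\alpha(x)\ge\tilde\alpha_i(x)\ge(3/4)^\beta=:c_0'$; hence $\mathbb P(\tilde\alpha(x)\ge c_0')\ge 1-(1-c_1 r^d)^N$, which is of order $Nr^d$ whenever $Nr^d$ is bounded.
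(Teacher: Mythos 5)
Your proposal is correct and follows essentially the same route as the paper: a two-sided per-sample estimate showing $\mathbb{P}(\|x_i-x\|_2\le c_1 r)\ge c\,r^d$ and $\mathbb{P}(i\in I_x)\le C r^d$, then concentration of the contribution of samples in an inner ball for part (1), and the complement bound $1-(1-c r^d)^N$ for part (2). The only cosmetic difference is that you apply Chebyshev directly to $\sum_{i\in I_x}\tilde\alpha_i(x)$ conditionally on $|I_x|$, whereas the paper applies the Berry--Esseen theorem to the Bernoulli indicators of the inner-ball event; both yield the stated $1-C/\sqrt{|I_x|}$.
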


\begin{lemma}\label{lma:xi}
Suppose $d(x,\M) \leq cr$ with some constant $c<1$ and $r = O(\sqrt{\sigma})$. For any given $\delta$, there exist constants $C$, $c_0$ and $n_0$ such that if $N \geq n_0 r^{-d}$, then $\tilde{\alpha}(x) \geq c_0|I_{x,r}| $ \YQ{with} probability at least $(1-\delta)$ and
\begin{align} \label{prop:clt}
    \sum_{i \in I_{x,r}} \alpha_i(x) \|\xi_i\|_2^k \leq C \sigma^k 
    \quad {\rm and} \quad
     \frac{1}{|I_{x,r}|^2}\sum_{i,j \in I_{x,r}}  \|\xi_i\|_2^s\|\xi_i\|_2^t \leq C \sigma^{s+t}
\end{align}
hold for $k, s, t \leq 4$ \YQ{with} probability at least $(1-\delta)^2$.
\end{lemma}
\begin{lemma}\label{lma:Pi_ij}
Suppose $x$ and $y$ are two points on $\M$ 
, then
\begin{align}
   \| \Pi_x^* - \Pi_y^*\|_2 \leq  \| \Pi_x^* - \Pi_y^*\|_F \leq C\frac{\|x-y\|_2}{\tau}.
\end{align}
\end{lemma}

\YQ{The proof of Lemma \ref{lma:xi} is shown in Appendix \ref{proof:xi_e_var}. To evaluate how $\YQ{\Psi_x^\alpha}$ approximates $\Pi_{x^*}$, we first evaluate how the tangent space changes when the point of tangency changes as Lemma \ref{lma:Pi_ij}, which is also proved in Appendix \ref{proof:xi_e_var}.} Based on the theorem above and lemmas, we obtain the following theorem to evaluate $\YQ{\Psi_x^\alpha}$:

\begin{theorem}\label{thm:bound_Pix}
Suppose $d(x,\M) \leq cr$ with some constant $c<1$ and $r = O(\sqrt{\sigma})$. For any given $\delta$, there exist constants $C$ and $n_0$ such that if $N \geq n_0r^{-d}$, then 
\begin{align}\label{equ:bound_Pix}
 \| \YQ{\Psi_x^\alpha} - \YQ{\Pi_{x^*}}\|_2 \leq \| \YQ{\Psi_x^\alpha} - \YQ{\Pi_{x^*}}\|_F \leq Cr 
\end{align}
holds \YQ{with} probability $\delta_0(1-\delta)^2$.
\end{theorem} 

\begin{proof}
By definition of $A_x$,
\begin{align} 
\|A_x - \YQ{\Pi_{x^*}} \|_F
& = \bigg\| \sum_{i \in I_{x,r}} \alpha_{i}(x) (P_{x_i}-\YQ{\Pi_{x_i^*}}) + \sum_i \alpha_i(x) (\YQ{\Pi_{x_i^*}}-\YQ{\Pi_{x^*}}) \bigg\|_F \nonumber \\ 
& \leq \sum_{i \in I_{x,r}} \alpha_{i}(x)\|P_{x_i}-\YQ{\Pi_{x_i^*}}\|_F + \sum_{i \in I_{x,r}} \alpha_{i}(x)\| \YQ{\Pi_{x_i^*}}-\YQ{\Pi_{x^*}} \|_F
\label{equ:Ax}
\end{align}
Setting $z$ in Theorem \ref{thm:P_z} to be $x_i$ and replacing $r'$ by $r'=2r$, we obtain the upper bound of $\|P_{x_i}-\YQ{\Pi_{x_i^*}}\|_F$ \YQ{with probability $(1-\delta)^2$}. Plugging the upper bound into the first term on the right-hand side of (\ref{equ:Ax}), we obtain
\begin{align*}
\sum_{i \in I_{x,r}} \alpha_{i}(x)\|P_{x_i}-\YQ{\Pi_{x_i^*}}\|_F 
& \leq \frac{C}{r^2} \sum_{i \in I_{x,r}} \sum_{j \in I_{x_i,2r}}  \frac{\alpha_{i}(x)}{|I_{x_i,2r}|} \bigg( \|\xi_j\|_2^4 + \|\xi_j\|_2^3 + \|\xi_j\|_2^2 + r\|\xi_j\|_2 \bigg) \\
& +  C  \bigg(r + 
	\frac{\sum_{i \in I_{x,r}} \alpha_{i}(x) \|x_i - x_i^* \|_2}{r} + \frac{\sum_{i \in I_{x,r}} \alpha_{i}(x) \|x_i-x_i^*\|_2^2}{r^2} \bigg) \\
& \leq \frac{C}{r^2} \sum_{i \in I_{x,r}} \sum_{j \in I_{x_i,2r}}  \frac{\alpha_{i}(x)}{|I_{x_i,2r}|} \bigg( \|\xi_j\|_2^4 + \|\xi_j\|_2^3 + \|\xi_j\|_2^2 + r\|\xi_j\|_2 \bigg) \\
& +  C  \bigg(r + 
	\frac{\sum_{i \in I_{x,r}} \alpha_{i}(x) \|\xi_i  \|_2}{r} + \frac{\sum_{i \in I_{x,r}} \alpha_{i}(x) \|\xi_i\|_2^2}{r^2} \bigg) 
\end{align*}
Plugging the upper bound of $\sum_{i \in I_{x,r}} \alpha_i(x)\|\xi_i\|_2^k$ into the last formula leads to
\begin{align*}
\sum_{i \in I_{x,r}} \alpha_{i}(x)\|P_{x_i}-\YQ{\Pi_{x_i^*}}\|_F \leq C\Big(\frac{\sigma}{r}+\frac{\sigma^2}{r^2} + r \Big) \leq C r,
\end{align*}
\YQ{with probability $\delta_0(1-\delta)^2$, where} the last inequality holds given $r = O(\sqrt{\sigma})$.
As for the second term \YQ{on the right-hand side of (\ref{equ:Ax})},
\begin{align*}
	\sum_{i \in I_{x,r}} \alpha_{i}(x)\|\YQ{ \Pi_{x_i^*}-\Pi_{x^*}} \|_F
	& \leq \frac{C}{\tau}\sum_{i \in I_{x,r}} \alpha_{i}(x)\| x_i^*-x^*\|_2 \\
	& \leq \frac{C}{\tau}\sum_{i \in I_{x,r}} \alpha_{i}(x) \Big(\| x_i^*-x_i\|_2 + \| x_i-x\|_2 + \|x-x^*\|_2 \Big) \\
	& \leq C\frac{r}{\tau} + \frac{C}{\tau}\sum_{i \in I_{x,r}} \alpha_{i}(x) \| x_i^*-x_i\|_2 \\
	& \leq C\frac{r}{\tau} + C\frac{\sigma}{\tau} \leq C \frac{r}{\tau}.
\end{align*}
where the first inequality is by Lemma \ref{lma:Pi_ij}, \YQ{the second-to-last inequality holds given $r=O(\sqrt{\sigma})$, and the last inequality holds by (\ref{ineq:bound_r})}. Since $\YQ{\Psi_x^\alpha}$ is the closest $(D-d)$-rank projection matrix to $A_x$, we have
\begin{align} \label{bound:AxPix}
 \|\YQ{\Psi_x^\alpha} - A_x\|_{F} \leq \|A_x - \YQ{\Pi_{x^*}}\|_{F}  \leq Cr, \quad {\rm with \ probability \ } \delta_0(1-\delta)^2.
\end{align}
Hence,
$\| \YQ{\Psi_x^\alpha} - \YQ{\Pi_{x^*}}\|_F \leq \|\YQ{\Psi_x^\alpha} - A_x\|_F +  \|A_x - \YQ{\Pi_{x^*}} \|_F \leq Cr$ \YQ{with probability $ \delta_0(1-\delta)^2$}.
\end{proof}

\subsection{A bound on $f(x)$}\label{sec:f}
This section examines how $f(x)$ approximates the bias from $x$ to $\M$, which is achieved by calculating $\|f(x)\|_2$ for $x \in \M$. If $f$ approximates the bias well, such $\|f(x)\|_2$ should be bounded above by a small value with $x \in \M$. 
\begin{theorem} \label{thm:bound_fM}
Suppose $x \in \M$ and $r = O(\sqrt{\sigma})$. For any given $\delta$, there exist constants $C$ and $n_0$ such that if $N \geq n_0r^{-d}$, then $\|f(x)\|_2 \leq Cr^2$ \YQ{with} probability $\delta_0(1-\delta)^2$.
\end{theorem}
\begin{proof}
It is clear that $x = x^*$ when $x \in \M$. Accordingly, we use $x$ instead of $x^*$ in the following discussion for convenience. First, we bound the distance between $\sum_{i \in I_{x,r}} \alpha_i(x) x_i$ and $T_{x}\M$. By definition,
\begin{align*}
	d\Big(\sum_{i \in I_{x,r}} \alpha_i(x) x_i, T_{x}\M \Big)
	& = \bigg\| \Pi_{x}^* \Big(\sum_{i \in I_{x,r}} \alpha_i(x) x_i - x \Big) \bigg\|_2 \\
	& \leq \sum_{i \in I_{x,r}} \alpha_i(x) \| \Pi_x^*(x_i-x)\|_2 \\
	& \leq \sum_{i \in I_{x,r}} \alpha_i(x) \|x_i-x_i^*\|_2 + \sum_{i \in I_{x,r}} \alpha_i(x) \|\Pi_x^*(x_i^*-x)\|_2 \\
	& \leq \sum_{i \in I_{x,r}} \alpha_i(x) \|\xi_i\|_2 + \sum_{i \in I_{x,r}} \alpha_i(x) \frac{\|x_i^*-x\|_2^2}{\tau} \\
	& \leq C_1 \sigma + C_2 \sum_{i \in I_{x,r}} \alpha_i(x) \frac{(\|x_i^*-x_i\|_2+ \|x_i-x\|_2)^2}{\tau}  \\
	& \leq C_1 \sigma + C_2 \frac{(\sigma+r)^2}{\tau},
\end{align*}
\YQ{where the second-to-last inequality holds by Lemma \ref{lma:xi} with probability $(1-\delta)^2$.}
The parameter $r$ is selected in the order of $\sqrt{\sigma}$, namely$C_3$ exists such that $r = C_3\sqrt{\sigma} > C_3 \sigma$ since $\sigma < 1$. So $(\sigma+r)^2 < (\frac{1}{C_3}+1)^2r^2$ and
\[
C_1 \sigma + \frac{C_2}{\tau} (\sigma+r)^2 \leq \frac{C_1r^2}{C_3^2}+\frac{C_2}{\tau}(\frac{1}{C_3}+1)^2r^2 = Cr^2.
\]
Hence, we obtain $ d(\sum_{i \in I_{x,r}} \alpha_i(x) x_i, T_{x}\M) \leq Cr^2$.

We let $a = \sum_{i \in I_{x,r}} \alpha_i(x)x_i$ and $b$ be the projection of $a$ onto $T_{x}\M$.  Then, we have
\begin{align*}
	\| a- b\|_2 = \|\Pi_{x}^*(a-b)\|_2 =  d \bigg(\sum_{i \in I_{x,r}} \alpha_i(x) x_i, T_{x}\M \bigg) \leq Cr^2.
\end{align*}
According to the definition of $f(x)$,
\begin{align*}
f(x) = \YQ{\Psi_x^\alpha}(x - a) = \Pi_{x}^*(x-b) + (\YQ{\Psi_x^\alpha}-\Pi_{x}^*)(x-b) + \YQ{\Psi_x^\alpha}(b-a),
\end{align*}
where $\Pi_{x}^*(x-b) = \bf{0}$, since $x = x^* \in T_{x}\M$ and $b \in T_{x} \M$.  Hence, we obtain
\begin{align*}
\|f(x)\|_2 & \leq \|\YQ{\Psi_x^\alpha}-\Pi_{x}^*\|_F\big(\|x-a\|_2 + \|a-b\|_2\big)+ \|\YQ{\Psi_x^\alpha}(a-b)\|_2 \\
& \leq \|\YQ{\Psi_x^\alpha}-\Pi_{x}^*\|_F\big(\|x-a\|_2 + \|a-b\|_2\big)+ \|a-b\|_2 \\
& \leq C_1 r \times (r+ r^2) + C_2 r^2 \leq Cr^2,
\end{align*}
\YQ{where the second-to-last inequality holds by Theorem \ref{thm:P_z} with probability $\delta_0$ and the last inequality holds by (\ref{ineq:bound_r}). In summary, $\|f(x)\|_2 \leq Cr^2$ with probability $\delta_0(1-\delta)^2$.}
\end{proof}

\subsection{A bound on the first and second derivative of $f(x)$}\label{sec:df}
We now proceed to obtain an upper bound on $\|\partial_v f(x)\|_2$ with $\|v\|_2 = 1$, where
\[
    \partial_v f(x) = \lim_{t \to 0} \frac{f(x+tv)-f(x)}{t},
\]
for any $v \in \R^D$. 

\begin{theorem} \label{thm:first_der_f}
Suppose $d(x,\M) \leq cr$ and $r = O(\sqrt{\sigma})$. For any given $\delta$, there exist constants $C$ and $n_0$ such that if $N \geq n_0 r^{-d}$, \begin{align}\label{bound:first_der}
\| \partial_v f(x) - \YQ{\Psi_x^\alpha}  v\|_2 \leq C r,
\end{align}
\YQ{with} probability $\delta_0(1-\delta)^2$. 
\end{theorem}

\YQ{The proof of Theorem \ref{thm:first_der_f} refers to Appendix \ref{app:first_der_f}.
This theorem claims the first derivative of $f(x)$ approximates $\YQ{\Psi_x^\alpha} v$ in the order of $O(r)$. Taking $v$ in Theorem \ref{thm:first_der_f} as $e_1, \cdots, e_D$, we achieve the following Corollary \ref{coro:Jf_Phix}.}

\begin{corollary}\label{coro:Jf_Phix}
\YQ{
Suppose $d(x,\M) \leq cr$ and $r = O(\sqrt{\sigma})$. For any given $\delta$, there exist constants $C$ and $n_0$ such that if $N \geq n_0 r^{-d}$, 
\begin{align}\label{equ:bound_J_Pix}
\|J_f(x)-\Psi_x^\alpha\|_F \leq Cr
\end{align}
\YQ{with} probability $\delta_0(1-\delta)^2$. 
}
\end{corollary}

\begin{proof}
\YQ{
Let $e_i$ represent a $D$-dimensional vector where the $i$-th component is $1$, and the other components are $0$. The Jacobian matrix of function $f$ can be represented as 
\[
J_f(x) = \big( \partial_{e_1} f(x), \cdots, \partial_{e_D} f(x) \big)
\] 
and $\Psi_x^\alpha = \big( \Psi_x^\alpha e_1, \cdots, \Psi_x^\alpha e_D \big)$. Hence,
\begin{align*}
\| J_f(x) - \Psi_x^\alpha\|_F
 & = \| \big( \partial_{e_1} f(x), \cdots, \partial_{e_D} f(x) \big) - \big( \Psi_x^\alpha e_1, \cdots, \Psi_x^\alpha e_D \big)\|_F \\
& = \sqrt{\sum_{i=1}^D \| \partial_{e_i} f(x) - \Psi_x^\alpha e_i \|_2^2} \leq \sqrt{\sum_{i=1}^D (C_1^2r^2)} = C_1\sqrt{D}r = Cr.
\end{align*}
The last inequality holds by Theorem \ref{thm:first_der_f}, which concludes $\|J_f(x)-\Psi_x^\alpha\|_F \leq Cr$ with probability $\delta_0(1-\delta)^2$.
}
\end{proof}

We now proceed to obtain an upper bound on $\|\partial_v\big(\partial_u f(x)\big)\|_2$ with $\|v\|_2 = \|u\|_2=1$ in Theorem \ref{thm:second_der_f}. This theorem proves that the second derivative of $f(x)$ is bounded above by a certain constant, which indicates the smoothness of $f(x)$.
The proof of Theorem \ref{thm:second_der_f} refers to Appendix \ref{app:second_der_f}.

\begin{theorem} \label{thm:second_der_f}
Suppose $d(x,\M) \leq cr$ with some constant $c<1$ and $r = O(\sqrt{\sigma})$. For any given $\delta$, there exist constants $C$ and $n_0$ such that if $N \geq n_0r^{-d}$, 
then $\|\partial_v \partial_u f(x)\|_2 \leq C$ \YQ{with} probability $\delta_0(1-\delta)^2$.
\end{theorem}

\section{Proofs of Theorem \ref{thm:manifold}, Theorem \ref{thm:Hdist} and Theorem \ref{thm:reach_out}} \label{sec:bounds_M}
\YQ{Theorem \ref{thm:manifold} claims that the intersection of $\M_{\rm out}$ and the neighborhood of $x \in \M_{\rm out}$ is a $d$-dimensional manifold. To prove this conclusion, we first need to discuss the properties of the neighborhood of $x$, as stated in Proposition \ref{prop:delta_Pixz}. below}

\begin{prop} \label{prop:delta_Pixz}
Let $\epsilon = \min \{ \sqrt{ \frac{\alpha(x)}{|I_{x,2r}|^2} \frac{r^3}{\beta} }, r\}$ for given $x$, then 
\begin{align*}
\| \YQ{\Psi_x^\alpha} - \YQ{\Psi_z^\alpha} \|_2 \leq Cr, \quad \forall z \in B_D(x,\epsilon)
\end{align*}
\YQ{with} probability $\delta_0(1-\delta)^2\big(1-(1-cr^d)^N\big)$.
\end{prop}

\YQ{The proof of Proposition \ref{prop:delta_Pixz} can be found in Appendix \ref{app:D}. Based on Proposition \ref{prop:delta_Pixz}, we construct an auxiliary function $h$ to further characterize the neighborhood of $x$ and obtain the proof of Theorem \ref{thm:manifold}, as shown below:}


\begin{proof}{\bf of Theorem \ref{thm:manifold}}
\YQ{
 Let $h(z): B_D(x, \epsilon) \subset \R^{D} \to \R^{D-d}$, per
\begin{align}\label{def:hz}
	h(z) = V_x^T f(z),
\end{align}  
where  $V_x$ is the factor of $\YQ{\Psi_x^\alpha}$ such that $\YQ{\Psi_x^\alpha} = V_xV_x^T$. Then $h(z) = \bf{0}$ if $f(z)=\bf{0}$.  Assuming there exists $z$ such that $h(z) = \bf{0}$ but $f(z)\neq \bf{0}$, we obtain
\begin{align*}
\|\YQ{\Psi_x^\alpha} - \YQ{\Psi_z^\alpha}\|_2 & = \max_{v \neq 0} \frac{\big\| (\YQ{\Psi_x^\alpha}-\YQ{\Psi_z^\alpha}) v \big\|_2}{\|v\|_2} \geq  \frac{\big\| (\YQ{\Psi_x^\alpha}-\YQ{\Psi_z^\alpha}) f(z) \big\|_2}{\|f(z)\|_2} \\
& =  \frac{\big\| \YQ{\Psi_x^\alpha}f(z)-\YQ{\Psi_z^\alpha} f(z) \big\|_2}{\|f(z)\|_2}
 = \frac{\big\|V_xh(z)- f(z) \big\|_2}{\|f(z)\|_2} = \frac{\big\|0- f(z) \big\|_2}{\|f(z)\|_2}= 1
\end{align*}
However, $\|\Psi_x^\alpha -\YQ{\Psi_z^\alpha}\|_2 \leq Cr$ \YQ{with} probability $\delta_0(1-\delta)^2\big(1-(1-cr^d)^N \big)$ via Proposition \ref{prop:delta_Pixz}, which is contradictory to $\|\YQ{\Psi_x^\alpha}-\YQ{\Psi_z^\alpha}\|_2\geq 1$. Hence, $f(z) = \bf{0}$ if and only if $h(z) = \bf{0}$, equivalently $h^{-1}(\bf{0}) = f^{-1}(\bf{0})$ in $B_D(x,\epsilon)$, with probability $\delta_0(1-\delta)^2\big(1-(1-cr^d)^N\big)$.}

\YQ{
For $z \in B_D(x, \epsilon)$, 
\begin{align*}
    J_h(z) & = V_x^T J_f(z) \\
        & = V_x^T(J_f(z)-J_f(x)) + V_x^T(J_f(x)-\YQ{\Psi_x^\alpha}) + V_x^T\YQ{\Psi_x^\alpha}.
\end{align*}
On the right hand side of the above equality, we have $\|J_f(x)-\YQ{\Psi_x^\alpha}\|_F\leq Cr$ by Corollary \ref{coro:Jf_Phix}, $V_x^T\YQ{\Psi_x^\alpha} = V_x^T$ and
$\|J_f(z) - J_f(x)\|_F \leq C \max_{i=1}^D \|J_{e_i}f(z)-J_{e_i}f(x)\|_2 \leq C\|x-z\|_2 \leq C\epsilon \leq Cr$,
where the second inequality holds by Theorem \ref{thm:second_der_f}, which implies
\[
	\|J_h(z)-V_x^T\|_2 \leq \|J_h(z)-V_x^T\|_F = \| V_x^T(J_f(z)-J_f(x)) + V_x^T(J_f(x)-\YQ{\Psi_x^\alpha}) \|_F \leq Cr.
\]
}
Hence, the maximal difference between the singular values of $J_h(z)$ and $V_x^T$ is bounded by $Cr$. Let $\sigma_1 \geq \cdots \geq \sigma_{D-d}$ be the singular values of $J_h(z)$. We obtain $|\sigma_{D-d} - 1| \leq Cr$ since the singular values of $V_x^T$ are $1$, which implies $\sigma_{D-d} \geq 1-Cr$ and ${\rm rank}\big( J_h(z) \big) = D-d$ for any $z \in B_D(x, \epsilon)$. This means the rank of $h$ at $z$ equals $D-d$ for any $z \in B_D(x, \epsilon)$, and thus $h^{-1}(\bf{0})$ is a $d$-dimensional submanifold of $B_D(x, \epsilon) \subset \R^D$. The equivalence between $h^{-1}(\bf{0})$ and $f^{-1}(\bf{0})$ in $B_D(x,\epsilon)$ guarantees that $f^{-1}(\bf{0})$ is also a $d$-dimensional submanifold of $B_D(x, \epsilon) \subset \R^D$. 

\YQ{The above proof is based on Proposition \ref{prop:delta_Pixz}, Corollary \ref{coro:Jf_Phix} and Theorem  \ref{thm:second_der_f}, where Proposition \ref{prop:delta_Pixz} is proved based on Theorem \ref{thm:bound_Pix} and Proposition \ref{prop:alpha_bound}(ii), and Corollary \ref{coro:Jf_Phix} is proved based on Theorem \ref{thm:first_der_f}. Noting Theorem \ref{thm:bound_Pix}, Theorem \ref{thm:first_der_f} and Theorem  \ref{thm:second_der_f} are valid when Lemma \ref{lma:xi} and Theorem \ref{thm:P_z} hold, we obtain
\begin{align*}
& \mathbb{P}\big( \M_{\rm out}\cap B_D(x,\epsilon) {\rm \  is \ a \ } d{\rm -dimensional \ manifold} \big) \\
& \geq   \mathbb{P}\big({\rm Proposition \ \ref{prop:alpha_bound}(ii),  \  Lemma \ \ref{lma:xi}  \ and \ Theorem \ \ref{thm:P_z} \  hold \ for \ } x \big) \\
& \geq    \delta_0(1-\delta)^2\big(1-(1-cr^d)^N \big).
\end{align*}
}
\end{proof}

\begin{proof}{\bf of Theorem \ref{thm:Hdist}} 
For any fixed $x \in \M_{\rm out}$, we let $V_x \in \R^{D\times(D-d)}$ denote the orthonormal matrix such that $\YQ{\Psi_x^\alpha} = V_xV_x^T$, and let $U_x$ denote the orthogonal complement of $V_x$. Then, we define
\[
    F(z) = f(z) + U_xU_x^Tz.
\]
Let $x^*$ be the projection of $x$ onto $\M$, as done previously, $\YQ{\Pi_{x^*}} = V_*V_*^T$, and $U_*$ be the orthogonal complement of $V_*$. The difference $\|F(x^*)-F(x)\|_2$ can be evaluated as
\begin{align*}
    & \|F(x^*)-F(x)\|_2  \\
 =  & \|f(x^*)+U_xU_x^Tx^*-f(x)-U_xU_x^Tx \|_2 \\
 =  & \|f(x^*)+U_xU_x^Tx^*-U_xU_x^Tx \|_2 \\
 \leq & \|f(x^*)\|_2 + \|(U_xU_x^T-U_*U_*^T)(x-x^*)\|_2 + \|U_*U_*^T(x-x^*)\|_2 \\
= & \|f(x^*)\|_2 + \|(\YQ{\Psi_x^\alpha}-\YQ{\Pi_{x^*}})(x-x^*)\|_2 + \|U_*U_*^T(x-x^*)\|_2 \\
\leq &  \|f(x^*)\|_2 + \|\YQ{\Psi_x^\alpha}-\YQ{\Pi_{x^*}}\|_F\|x-x^*\|_2 + \|U_*U_*^T(x-x^*)\|_2 \\
\leq & Cr^2
\end{align*}
The second equality holds because $f(x) = \bf{0}$ for $x \in \M_{\rm out}$ while the last inequality holds because $\|f(x^*)\|_2\leq Cr^2$ via Theorem \ref{thm:bound_fM}, $\|\YQ{\Psi_x^\alpha} - \YQ{\Pi_{x^*}}\|_F\leq Cr$ via Theorem \ref{thm:bound_Pix}, $\|x-x^*\| = d(x, \M) \leq cr$ via the definition of $\M_{\rm out}$, and $U_*U_*^T x= U_*U_*^Tx^*$, since $x^*$ is the projection of $x$ onto $T_{x^*}\M$.

The Jacobian matrix of $F$ at $z = x$, denoted by $J_F(x)$ for simplicity, is
\begin{align*}
        J_F(x) = J_f(x) + U_xU_x^T = I_D + \big(J_f(x) - \YQ{\Psi_x^\alpha}\big).
\end{align*}
\YQ{By Corollary \ref{coro:Jf_Phix}, each entry of the matrix $J_f(x) - \Psi_x^\alpha$ is bounded above by $Cr$. Hence, we obtain
\[
 J_F(x) = I_D + O(r),
\]
}
which means that $J_F(x)$ approximates $I_D$ \YQ{with precision $O(r)$} and $J_F(x)$ is invertible. Moreover, $\|J_F(x)\|_F \leq C(1+ r)$ and its inversion is $\|J_F^{-1}(x)\|_F \leq C(1+r)$.

The changing rate of $J_F$ can also be bounded as follows: supposing $x'$ and $x''$ are two arbitrary points, we have
\begin{align} \label{bound:Jfx_Jfz}
\|J_F(x') - J_F(x'')\|_F = \|J_f(x')-J_f(x'')\|_F \leq C\|x'-x''\|_2
\end{align}
by the upper bound on the second derivative of $f(x)$ in Theorem \ref{thm:second_der_f}.

Based on the conclusions that $\|F(x)-F(x^*)\|_2\leq Cr^2$, $J_F(x) = I_D+O(r)$, and $\|J_F(x')-J_F(x'')\|_F \leq C\|x'-x''\|_2$, we could bound $\|x-x^*\|_2$ via Theorem 2.9.4 (the inverse function theorem) in \cite{hubbard2001vector}. Specifically,
\[
    \|x - x^*\|_2 \leq Cr^2.
\]

\YQ{The above proof is based on Theorem \ref{thm:bound_Pix},  Theorem \ref{thm:bound_fM}, Corollary \ref{coro:Jf_Phix} and Theorem \ref{thm:second_der_f}, which are valid when Lemma \ref{lma:xi} and Theorem \ref{thm:P_z} hold. Hence, the conclusion $\|x - x^*\|_2 \leq Cr^2$ is drawn with probability $\delta_0(1-\delta)^2$.}


\end{proof}

\YQ{
To prove the smoothness of the estimated manifold $\M_{\rm out}$, as stated in Theorem \ref{thm:reach_out}, we construct the following two auxiliary functions and clarify their properties.
}

\begin{prop}\label{prop:eq_fg}
\YQ{
For any fixed point $x \in \M_{\rm out}$, set $W_x$ to be the basis of the spanning space of $J_f(x)^T$ and
\begin{align}\label{def:gz}
    g(z) = W_x^Tf(z).
\end{align}
The following two statements
\begin{itemize}
\item[(i)] $g$ is a function from $\R^D$ to $\R^{D \times (D-d)} $
\item[(ii)] Given $z \in B_D(x, r\tau)$, $g(z) = \bf{0}$ if and only if $f(z) = \bf{0}$
\end{itemize}
hold simultaneously \YQ{with} probability at least $\delta_0^2(1-\delta)^4\big(1-(1-cr^d)^N \big)$.
}
\end{prop}



\YQ{Proposition \ref{prop:eq_fg} claims that  $f^{-1}(\bf{0})$ and $g^{-1}(\bf{0})$ describe the same set in the neighborhood of $x$ whose proof can be found in Appendix \ref{app:D}.} By $W_x$, we reset the coordinate system \YQ{for $g$}. Specifically, \YQ{the rows of $J_f(x)$ are orthogonal to the contour surface at $x$, and $W_x$ is also the basis of the normal space of $\M_{\rm out}$ at $x$.} Accordingly, we set the first $d$ coordinates as the basis of $T_x \M_{\rm out}$ and the last $D-d$ coordinates as the columns of $W_x$. In this coordinate system, we define an implicit function $\phi : \R^d \to \R^{D-d}$ based on $g(\cdot)$ using the implicit function theorem, such that $\big(\zeta; \phi(\zeta) \big)$ maps $\zeta \in T_x \M_{\rm out}$ to a point on the manifold $\M_{\rm out}$.  Here, we let $(\eta; \zeta)$ denote the concatenation of column vectors $\eta$ and $\zeta$. The upper bound on the first and second derivatives of $\phi$ is given in Lemma \ref{bound:der_phi}, with its proof appearing in Appendix \ref{app:D}.

\begin{lemma} \label{bound:der_phi}
  Suppose function $g$ is defined as (\ref{def:gz}). The implicit function $\phi: \R^{d} \to \R^{D-d}$ satisfying $g\big(\cdot, \phi(\cdot)\big)  = \bf{0}$ exists, and its first and second derivatives are bounded above by
    \[
        \partial_s \phi(\zeta) \leq C\| \big(\zeta ; \phi(\zeta) \big) - x\|_2, \quad
        \partial_t \partial_s \phi(\zeta) \leq C,
    \]
 \YQ{with} probability at least $\delta_0(1-\delta)^2\big(1-(1-cr^d)^N \big)$, for any $\|s\|_2 = \|t\|_2 = 1$.
\end{lemma}

\begin{proof}{\bf of Theorem \ref{thm:reach_out}}
Let $x$ and $z$ be two points on $\M_{\rm out}$, and $T_x \M_{\rm out}$ be the tangent space to $\M_{\rm out}$ at $x$. 
The proof is conducted with $\|z-x\|_2 > r\tau$ and $\|z-x\|_2 \leq r\tau$, respectively. First, when $\|z-x\|_2 > r\tau$, 
\begin{align}\label{equ:bound_reach}
\frac{\|z-x\|_2^2}{d(z, T_x\M_{\rm out})} \geq cr\tau
\end{align}
 holds because $\|z-x\|_2 \geq d(z,T_x \M_{\rm out})$. 
 Second, when $\|z-x\|_2 \leq r\tau$, we have $g(z) = f(z) = g(x) = f(x) = \bf{0}$ by Proposition \ref{prop:eq_fg} \YQ{with} probability $\delta_0^2(1-\delta)^4\big(1-(1-cr^d)^N \big)$, since $x$ and $z$ are on $\M_{\rm out}$. Let $\zeta_x$ and $\zeta_z$ denote the first $d$ coordinates of $x$ and $z$, respectively. We have $z = \big(\zeta_z; \phi(\zeta_z) \big)$, $x = \big(\zeta_x; \phi(\zeta_x)\big)$, $\partial_s \phi(\zeta_z) \leq C\|z-x\|_2$ and $\partial_t\partial_s \phi(\zeta_z) \leq C$ \YQ{with} probability at least $\delta_0(1-\delta)^2$ by Lemma \ref{bound:der_phi}. So,
\begin{align*}
d(z, T_x \M_{\rm out})
& =  \|\phi(\zeta_z)-\phi(\zeta_x)\|_2 \\
& \leq C\|z-x\|_2\|\zeta_z-\zeta_x\|_2+ C\|\zeta_z-\zeta_x\|_2^2 \\
& \leq C\|z-x\|_2^2 + C\|z-x\|_2^2 \leq C\|z-x\|_2^2 .
\end{align*}
As a result, 
\[
\frac{\|z-x\|_2^2}{d(z,T_x\M_{\rm out})} \geq \frac{\|z-x\|_2^2}{C\|z-x\|_2^2} = c.
\]
Combined with (\ref{equ:bound_reach}), we complete this proof.

\YQ{The above proof requires Proposition \ref{prop:eq_fg} and Lemma \ref{bound:der_phi}, which are valid when Theorem \ref{thm:manifold} and Theorem \ref{thm:Hdist} hold simultaneously for $x$ and when Theorem \ref{thm:bound_Pix} holds for $z$.  Given the dependency between the theorems as shown in Figure \ref{fig:dependency}, we establish that the above theorems hold when Lemma \ref{lma:xi} and Theorem \ref{thm:P_z} simultaneously hold for $x, z$, and when Proposition \ref{prop:alpha_bound}(ii) holds for $x$. Hence, we have
\begin{align*}
& \mathbb{P}\Big(\frac{\|z-x\|_2^2}{d(z, T_x\M_{\rm out})} \geq cr \Big) \\
& \geq  \mathbb{P} \Big( \big({\rm Proposition \ \ref{prop:alpha_bound}(ii) \  holds \ for \ } x \big) \cap  \big({\rm Lemma \ \ref{lma:xi}  \ and \ Theorem \ \ref{thm:P_z} \  hold \ for \ } x, z \big) \Big) \\
& \geq   \delta_0^2(1-\delta)^4\big(1-(1-cr^d)^N\big).
\end{align*}
}
\end{proof}

\section{Experimental Results} \label{sec:experiment}
This section comprises two parts. The first part provides numerical comparisons of the methods of \cite{mohammed2017manifold}, \cite{pmlr-v75-fefferman18a}, and \cite{Aizenbud2021}. Further, we apply relevant methods on several known manifolds, illustrate the output manifolds, and calculate the Hausdorff distances between the output and latent manifolds. In the second part, we focus on real applications, and use our method to denoise facial images sampled from a lengthy video recording. The results of our method are then compared to the findings of each of the other aforementioned methods.

{\bf Implementation:} the MATLAB codes, together with all numerical examples used in this paper, are available at \url{https://zhigang-yao.github.io/research.html} which contains a GitHub link under the code tap. We have also implemented the related methods from \cite{mohammed2017manifold} and \cite{pmlr-v75-fefferman18a}, since the authors of both papers have not provided implementation due to the nature of their work having been purely abstract.

\subsection{Simulation} \label{sec:simulations}

\begin{algorithm}[t]
\caption{Project $x$ onto $\M_{\rm out}$}\label{alg:GD}
Input: a point $x$, noisy data $X=[x_1, \cdots, x_N]$, \YQ{bandwidth parameters $r$ and $r^\prime$}, a step length parameter $a$, a tolerance $\epsilon$, and the maximal number of iteration $T$. \\
Output: projection $\tilde{x}$ of $x$ onto $\M_{\rm out}$.
\begin{itemize}
\item[1.] Calculate $P_{x_i} = I - VV^T$ for each $x_i \in X$, where $V$ is the $D \times d$ matrix whose columns are the eigenvectors corresponding to the largest $d$ eigenvectors of $\sum_{j \in I_{x_i,\YQ{r^\prime}}}(x_j-x_i)(x_j-x_i)^T$.
\item[2.] Set $t=1$.
	\begin{itemize}
		\item[(1).] Calculate $\tilde{\alpha}_i(x)$ and $\alpha_i(x)$ for $i \in I_{x,r}$ by (\ref{def:alpha}).
		\item[(2).] Plug $\{ \tilde{\alpha}_i(x), \alpha_i(x), P_{x_i} \}_{i \in I_{x,r}}$  into (\ref{eq:gradF}) to obtain the gradient ${\rm grad}(x)$ of  $\|f(x)\|_2^2$.
		\item[(3).] Update $t = t+1$ and $x = x - a \cdot {\rm grad}(x)$.
		\item[(4).] Repeat (1) to (3) until the tolerance condition $\|f(x)\|_2^2 \leq \epsilon$ or the maximal iteration $T$ is met.
	\end{itemize}
\item[3.] Output $\tilde{x} = x$.
\end{itemize}
\end{algorithm}

As explained in Subsection \ref{sec:motivation}, by removing the unreliable discs which centered at the sample points as in \cite{mohammed2017manifold} and \cite{pmlr-v75-fefferman18a}, one would expect an improved performance compared to these two methods. Assuming the data points are sampled from a tubular neighborhood, \cite{Aizenbud2021} denoises the sample points iteratively using a local polynomial regression.  As the degree increases, polynomial regression fits a manifold better when the noise is limited but on the other hand  a polynomial regression exhibits sensitivity once noise increases. As a method designed for Gaussian noise, our method is expected to be more robust as noise increases. To support this claim, we test methods in \cite{mohammed2017manifold} (marked by km17), \cite{pmlr-v75-fefferman18a} (marked by cf18), and \cite{Aizenbud2021} with polynomial degree 1 and 2 (marked by ya21(deg=1) and ya21(deg=2)) on manifolds with both constant and inconstant curvature, namely:
a circle embedded in $\mathbb{R}^2$, a sphere embedded in $\mathbb{R}^3$, and a torus embedded in $\mathbb{R}^3$.
To ensure a traceable comparison, all the tests are conducted in the following way, similar to that of \cite{mohammed2017manifold}:
\begin{itemize}
    \item Sample $N$ points from the latent manifold, blur the points with Gaussian noise defined in (\ref{func:noise}) with given $\sigma$, and use the noisy data $X=[x_1, \cdots, x_N]$ to implicitly construct output manifolds.        
    \item Initialize a collection of points $P= [p_1, \cdots, p_{N_0}]$ around the latent manifold.
    \item Project each $p_i$ to the constructed output manifolds via km17, cf18, ya21(deg=1),  ya21(deg=2)) and our method, respectively. We will then obtain $\tilde{P}$ as the projection of $P$ for each method.
    \item Calculate the Hausdorff distance between each $\tilde{P}$ and $\M$ to estimate the Hausdorff distance between the corresponding $\M_{\rm out}$ and $\M$.
\end{itemize}

As projections, points in $\tilde{P}$ lie on the corresponding $\M_{\rm out}$, and the Hausdorff distance $H(\tilde{P}, \M)$ could estimate $H(\M_{\rm out}, \M)$ when $\tilde{P}$ are dense enough. This motivates us to evaluate the approximation error of $\M_{\rm out}$ to $\M$ by $H(\tilde{P}, \M)$. To project a point $p$ onto a manifold defined by (\ref{out:ours}), we design algorithm \ref{alg:GD}. Taking $x=p$ and $f$ in algorithm \ref{alg:GD} as (\ref{fun:out_ours}), we could project $p$ onto our output manifold. It should be noted that the difficulty of calculating such a gradient lies in calculating a gradient of orthogonal projection, which can be addressed, according to \cite{Shapiro1995}. Detailed formula refers to Appendix \ref{app:gradient}. \cite{mohammed2017manifold} suggested a subspace-constrained gradient descent algorithm to project a point onto $\M_{\rm out}$ constructed by km17. Thus, we adopt this algorithm to implement km17 in this simulation. Although \cite{pmlr-v75-fefferman18a} did not consider the issue, we nevertheless implement their method too via algorithm \ref{alg:GD}, treating $f(x)$ as the approximated bias at $x$ defined by \cite{pmlr-v75-fefferman18a}.

The details of this simulation are as follows: we uniformly sample $N$ points denoted by $y_1, \cdots, y_{N}$ from each target manifold and i.i.d. sample $\xi_1, \cdots, \xi_{N}$ from a Gaussian distribution (\ref{func:noise}) with a given standard derivation $\sigma$. Then, the noisy data $X = \{x_i\}_{i=1}^{N}$  is constructed by $x_i = y_i + \xi_i$. The initial points $P$ are sampled from the tube centered at $\M$ with radius $\frac{1}{2}\sqrt{\frac{\sigma}{D}}$, so that $d(p_i,\M) \leq \sqrt{\sigma}$ for each $p_i$. According to Theorem \ref{thm:Hdist}, $d(\tilde{p}_i,\M) \leq O(\sigma)$, which means the output points should be much closer to the latent manifold than the initial points. Again, we take $N_0 = N$ initial points for each test in the simulation.

\begin{figure}[th]
\centering
\includegraphics[width=0.19\textwidth]{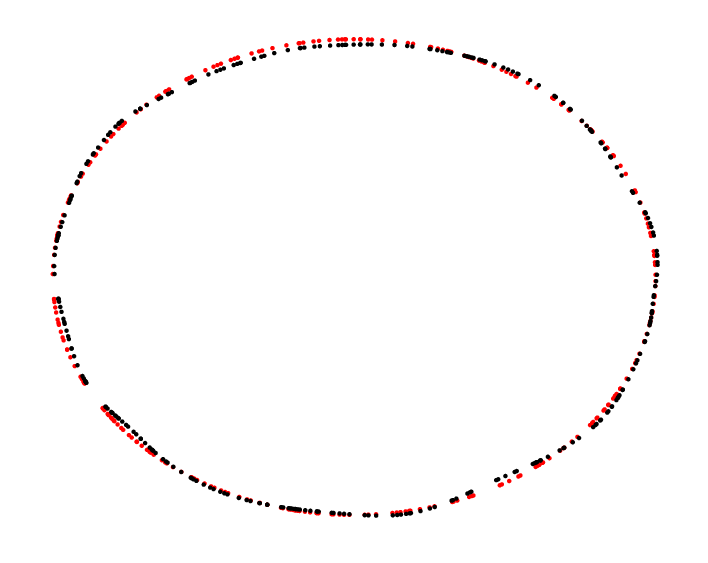}
\includegraphics[width=0.19\textwidth]{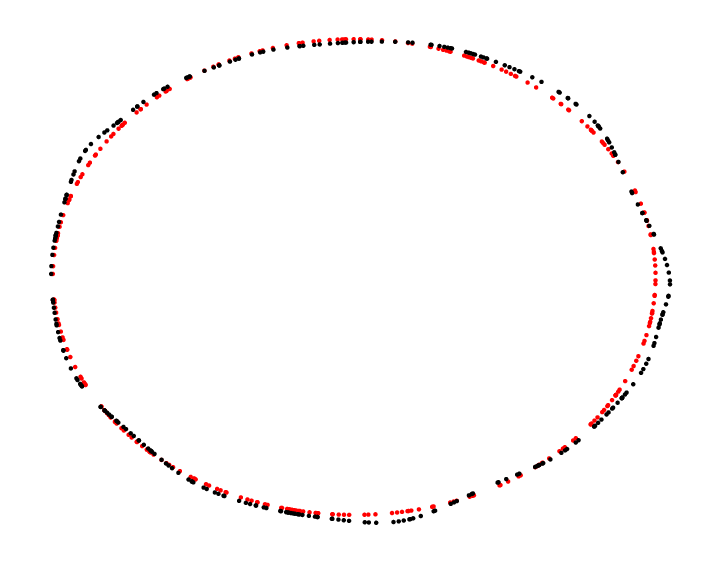}
\includegraphics[width=0.19\textwidth]{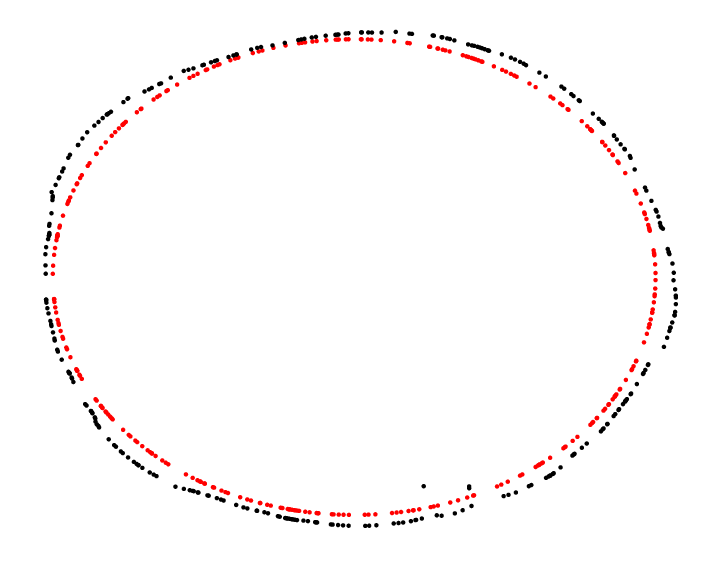}
\includegraphics[width=0.19\textwidth]{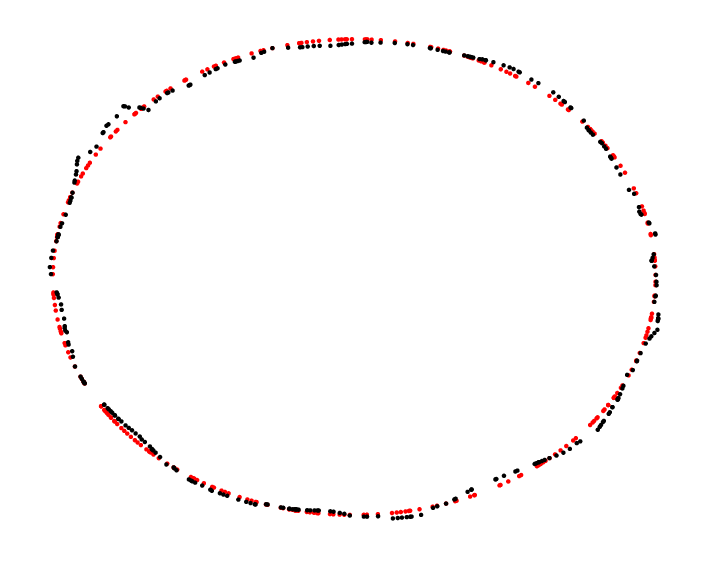}
\includegraphics[width=0.19\textwidth]{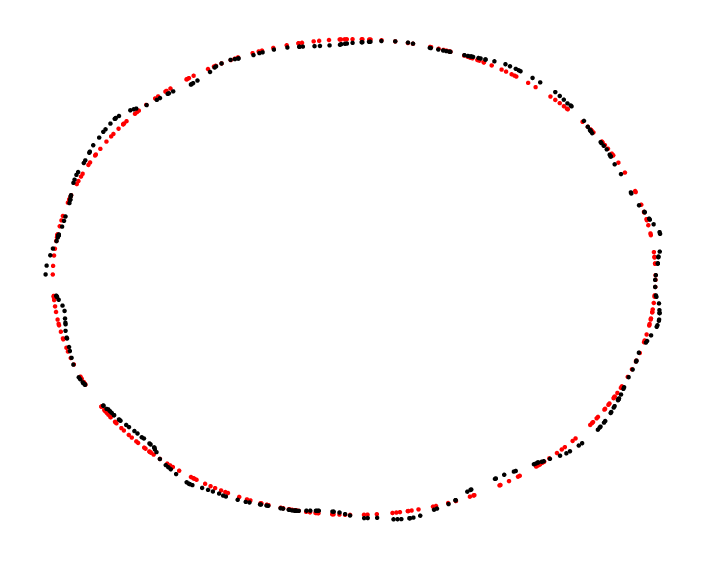}
\centering
\includegraphics[width=0.19\textwidth]{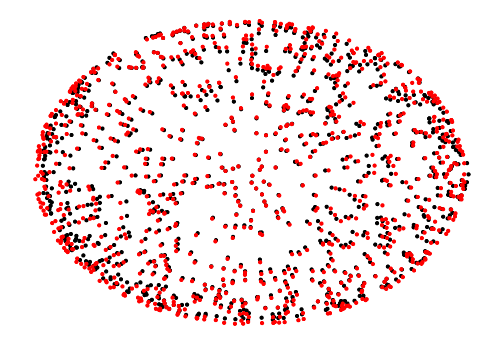}
\includegraphics[width=0.19\textwidth]{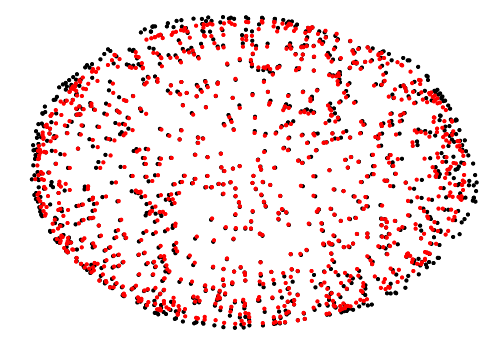}
\includegraphics[width=0.19\textwidth]{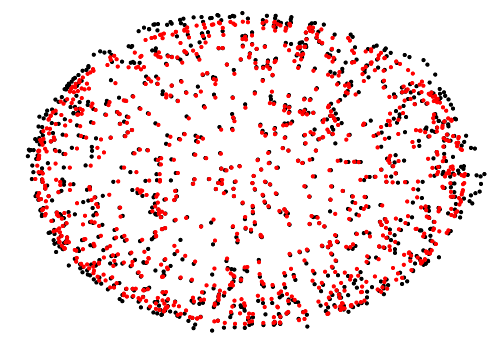}
\includegraphics[width=0.19\textwidth]{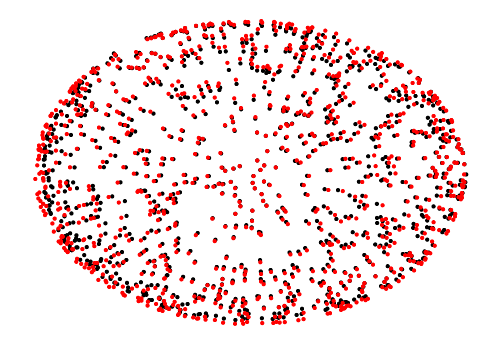}
\includegraphics[width=0.19\textwidth]{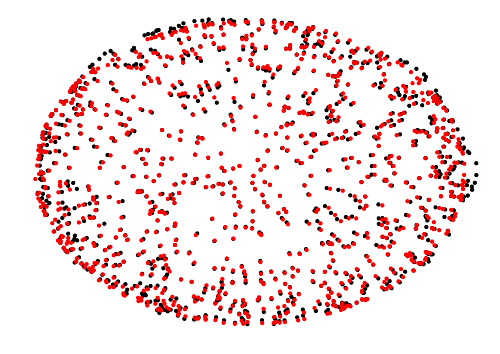}
\caption{The performance of our method, km17, cf18, ya21(deg=1) and ya21(deg=2)
 when fitting a circle (top row) and a sphere (bottom row), where black points represent points in $\tilde{P}$(black dots) and red points represents their projections onto $\M$.} \label{fig:circle_cf_xy}
\end{figure}

To implicitly construct the output manifolds, the methods--km17, cf18, and our method,each require a bandwidth parameter $r$. According to the theoretical analysis, $r = O(\sqrt{\sigma})$. So we take $r = \lambda \sqrt{\sigma}$ in this simulation, where $\lambda$ is tuned in a large range for each method and each $\sigma$. All the results reported in this section are the ones using the best $\lambda$. The method ya21 also requires a bandwidth parameter $h$, which is again selected as the best one tuned from a large range. In constructing $\tilde{\alpha}_i(x)$, our method requires $\beta \geq 2$. We take $\beta = d+2$ in the simulation, as \cite{pmlr-v75-fefferman18a} did.

\subsubsection{Manifold with constant curvature}
This part tests the manifold fitting methods for the circle in $\mathbb{R}^2$ and the sphere embedded in $\mathbb{R}^3$. For the circle, we set $N = N_0 = 300$, while for the sphere, we set $N = N_0 = 1000$. The different sample-size settings guarantee comparable density in each case, as Figure \ref{fig:circle_cf_xy}    illustrates that the $\tilde{P}$ (black dots) and their projection onto $\M$ (red dots) obtained by our method, cf18, cf18, ya21(deg=1) and ya21(deg=2), from left to right. The black dots and red dots can be treated as the discretized versions of $\M_{\rm out}$ and $\M$, respectively. Thus, a larger overlap of the two sets of dots means the manifold is better fitted. For the circle embedded in $\mathbb{R}^2$, we show the entire space in the left column, while for the sphere embedded in $\mathbb{R}^3$, we show the view from the positive $z$ axis. Figure \ref{fig:circle_cf_xy} shows that km17 clearly performs worse than the other methods in terms of fitting error. From the two estimated circles by ya21(deg=1) and ya21(deg=2), we observe that there are sharp corners – both at the top left and at the bottom right – an observation that confirms that the estimator by ya21 is not smooth. From the right edge of the circle and the sphere, we can also observe that our method preforms slightly better than cf18 in this experiment. 

\begin{figure}[htbp]
\centering
\includegraphics[height=1.5in,width=0.32\textwidth]{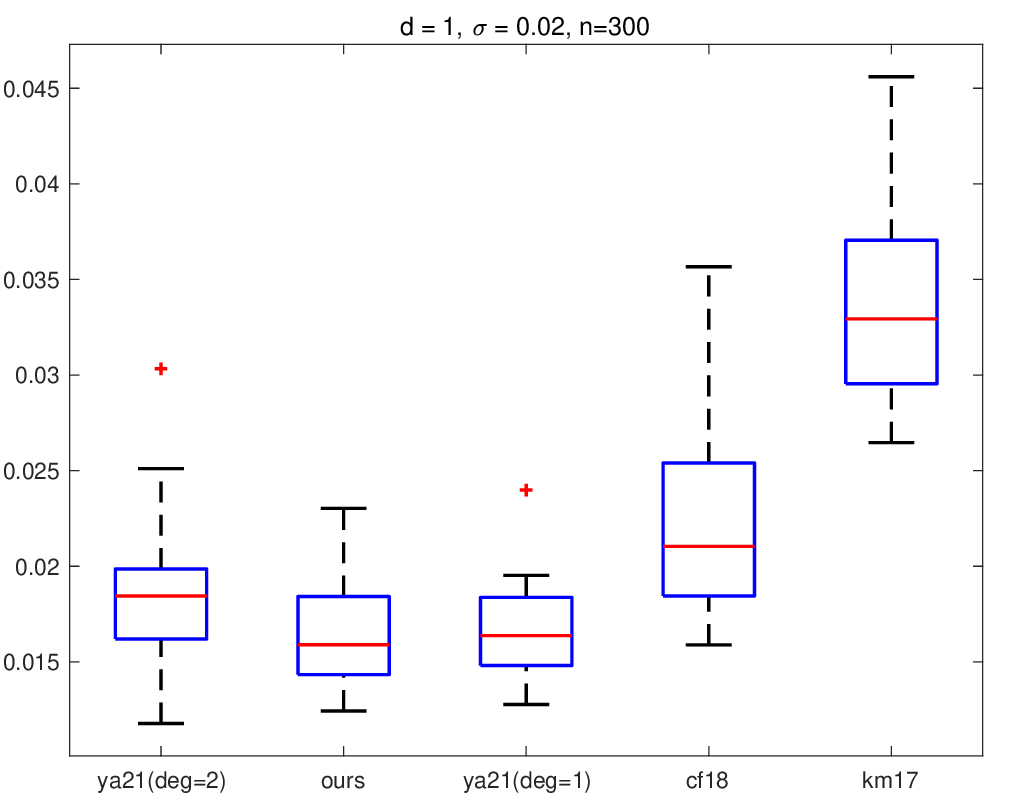}
\includegraphics[height=1.5in,width=0.32\textwidth]{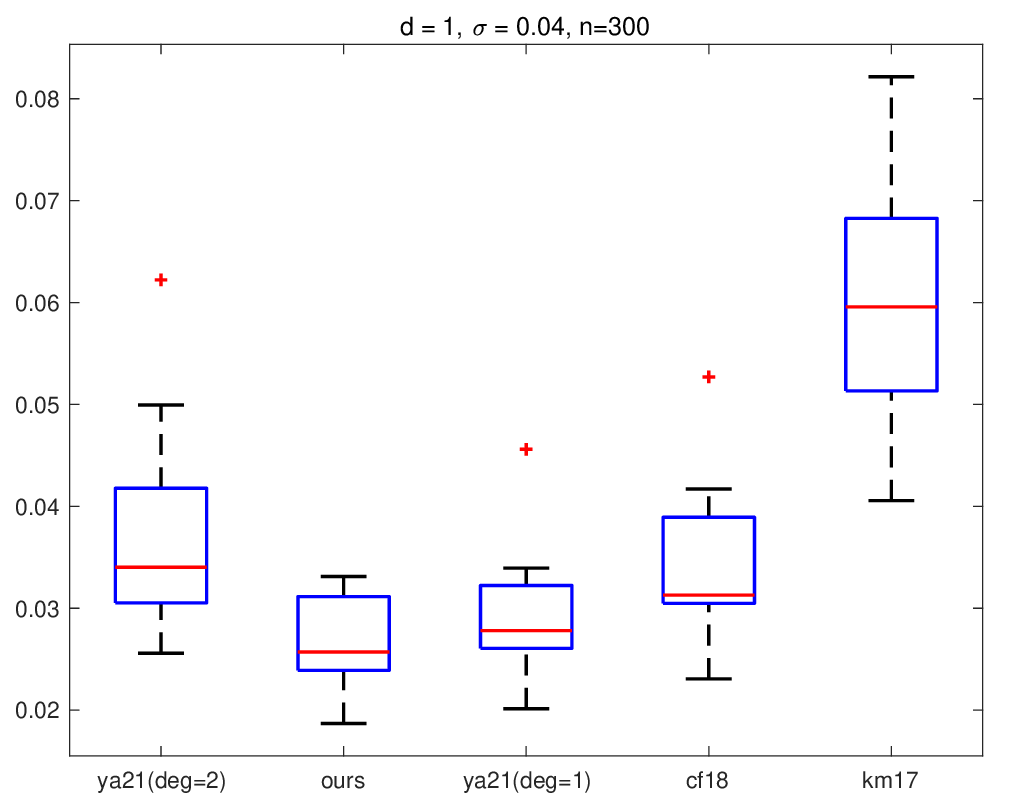}
\includegraphics[height=1.5in,width=0.32\textwidth]{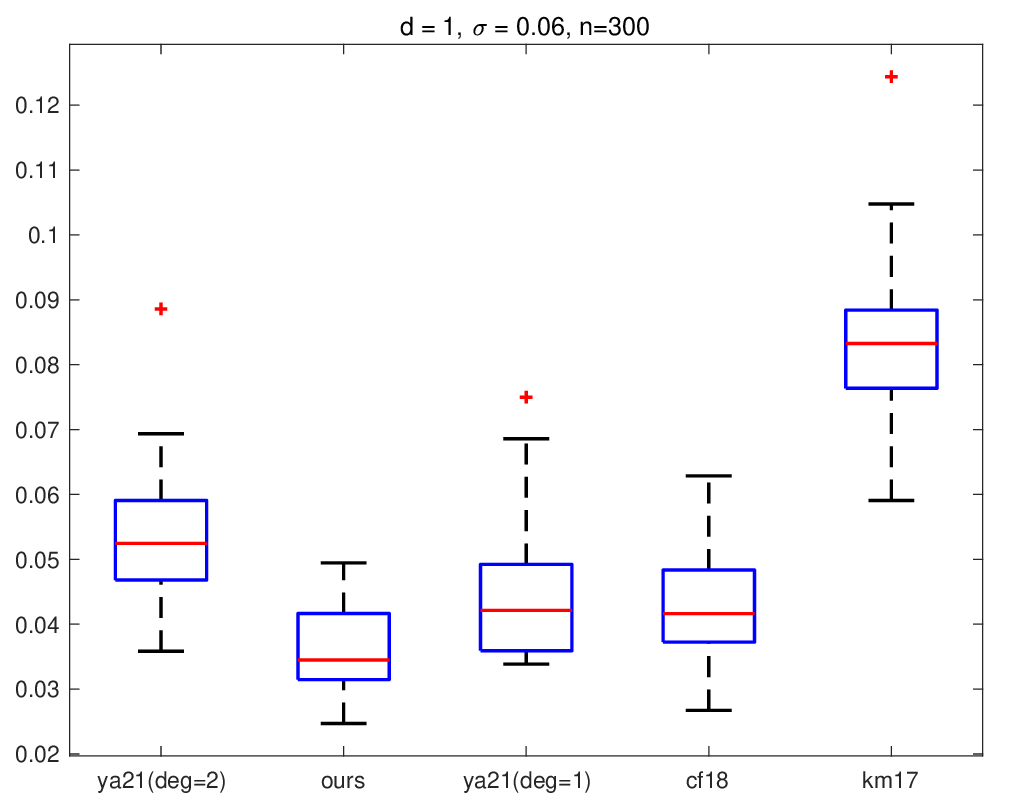}
\centering
\includegraphics[height=1.5in,width=0.32\textwidth]{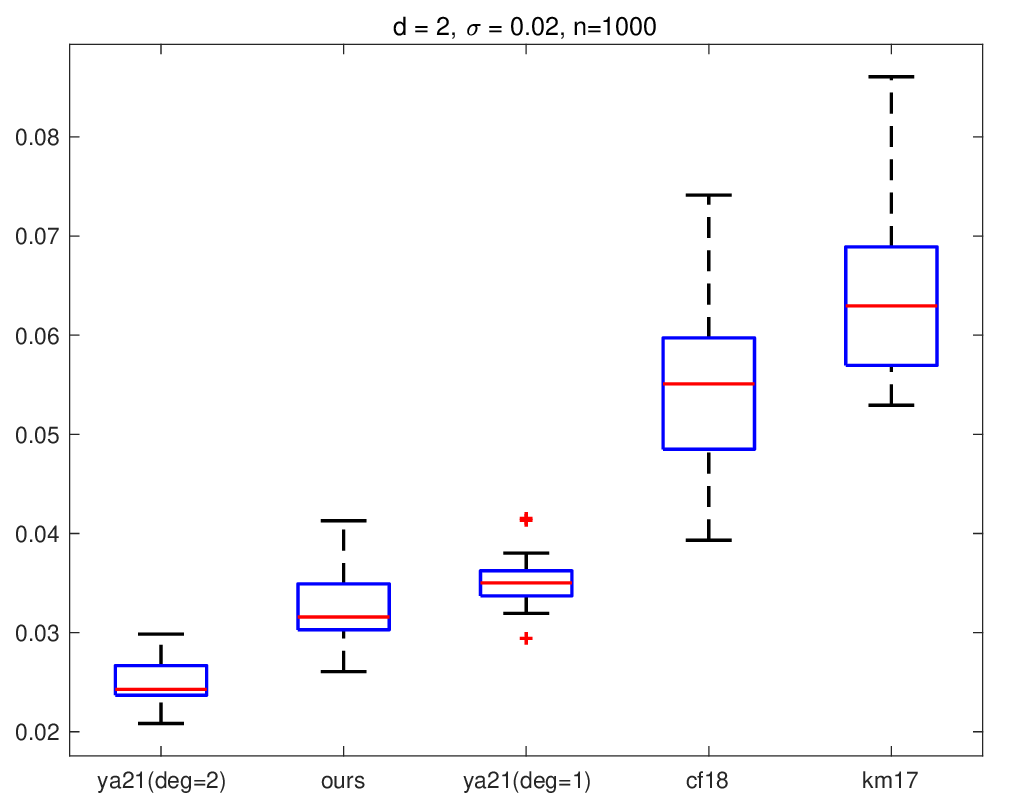}
\includegraphics[height=1.5in,width=0.32\textwidth]{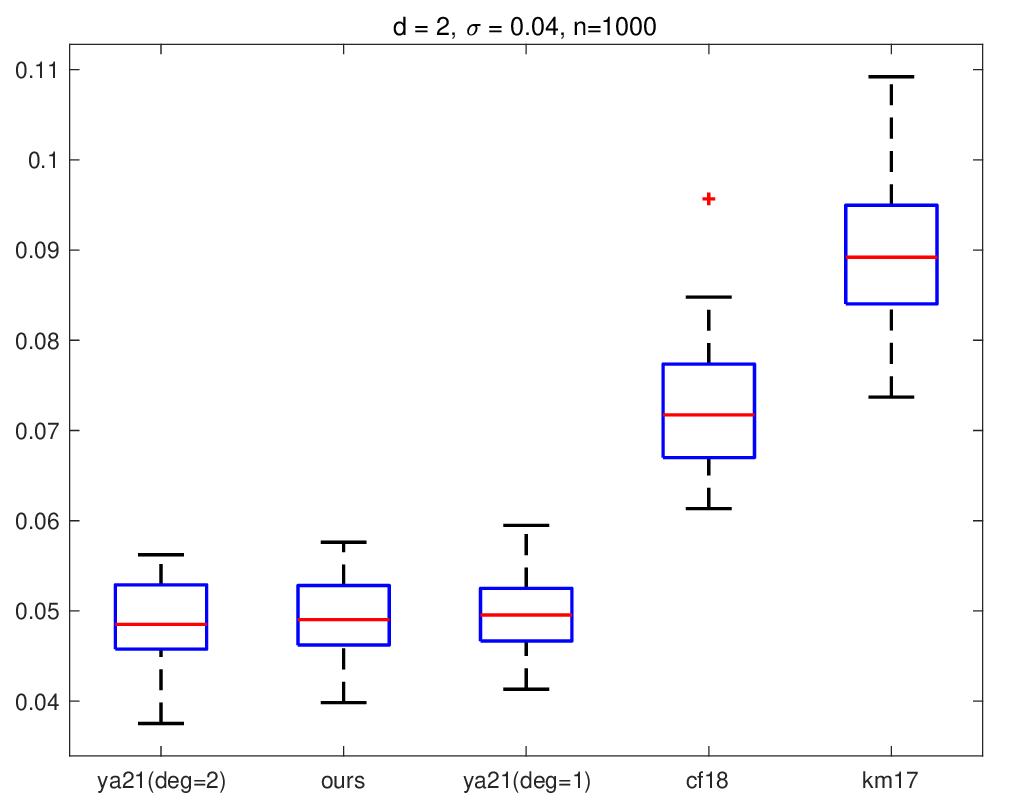}
\includegraphics[height=1.5in,width=0.32\textwidth]{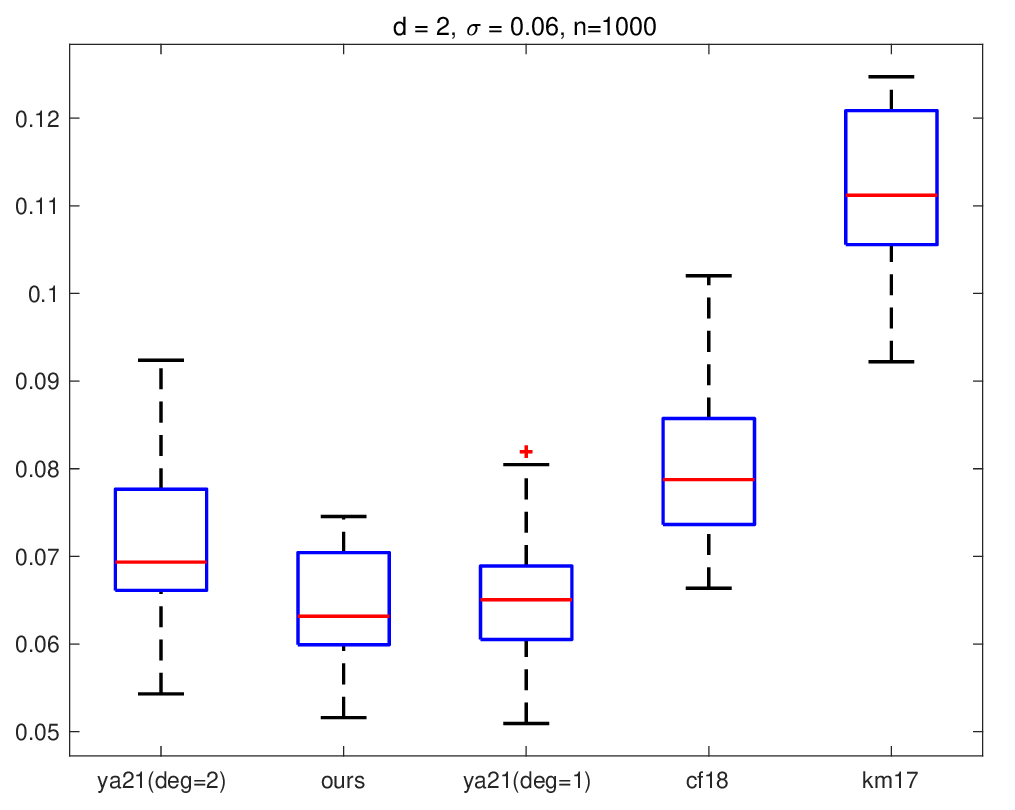}
\caption{The Hausdorff distance of fitting a circle (top row) and a sphere (bottom row) with $\sigma=0.02$ (left column),  $\sigma=0.04$ (middle column) and $\sigma=0.06$ (right column) using ya21(deg=2), our method, ya21(deg=1), cf18 and km17 respectively.} \label{fig:circle_sphere_max}
\end{figure}

To confirm the superiority of our method, we repeat each test for 20 trials, and list the results of $H(\tilde{P}, \M)$ using the different methods in Figure \ref{fig:circle_sphere_max}. Generally speaking, our method outperforms cf18, km17 and ya21(deg=1) in the compared cases and although ya21(deg=2) performs slightly better than our method in instances of very low noise, it is much more sensitive than our method. As the $\sigma$ increases, ya21(deg=2) fails to outperform other methods. From Figure \ref{fig:circle_sphere_max}, $H(\M, \M_{\rm out}) = O(\sigma)$ for our method, which supports Theorem \ref{thm:Hdist}.

\subsubsection{Manifold with inconstant curvature}

We also implement the compared methods in the torus case, which is a type of manifold with inconstant curvature. Figure \ref{fig:torus_cf_xy} illustrates the case with $N=N_0=800$ and $\sigma=0.04$, and the torus embedded in $\mathbb{R}^3$ is shown from the positive z axis.  Here, the sample points in $\tilde{P}$ are marked by black dots and their projection onto $\M$ are marked by red dots. The five subfigures are obtained from our method, cf18, km17, ya21(deg=1) and ya21(deg=2), from left to right. From the top and right edges of the torus, we can observe that our method performs better than both cf18 and km17.  From the fourth subfigure, we can identify a clear gap between the red and black dots around the edge of the torus, which means ya21(deg=1) failed to fit these points but using a second degree polynomial, ya21(deg=2) achieves a better fitting as the right subfigure shows.

\begin{figure}[th]
\centering
\includegraphics[width=0.19\textwidth]{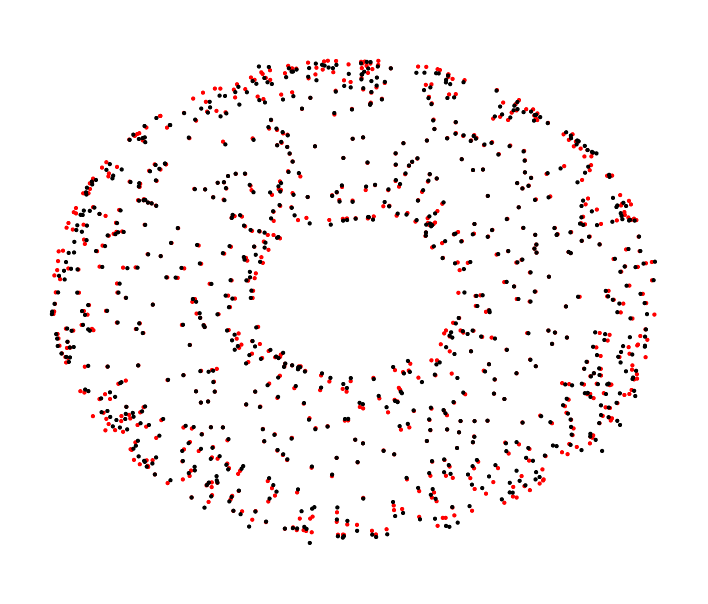}
\includegraphics[width=0.19\textwidth]{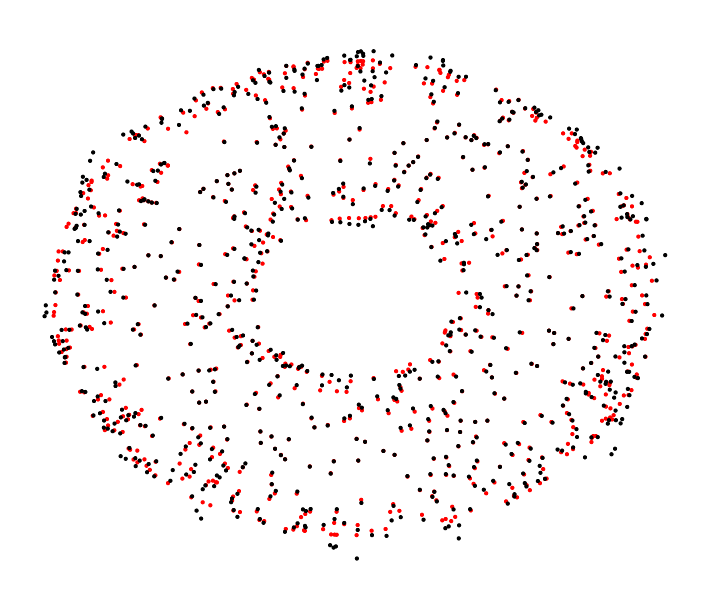}
\includegraphics[width=0.19\textwidth]{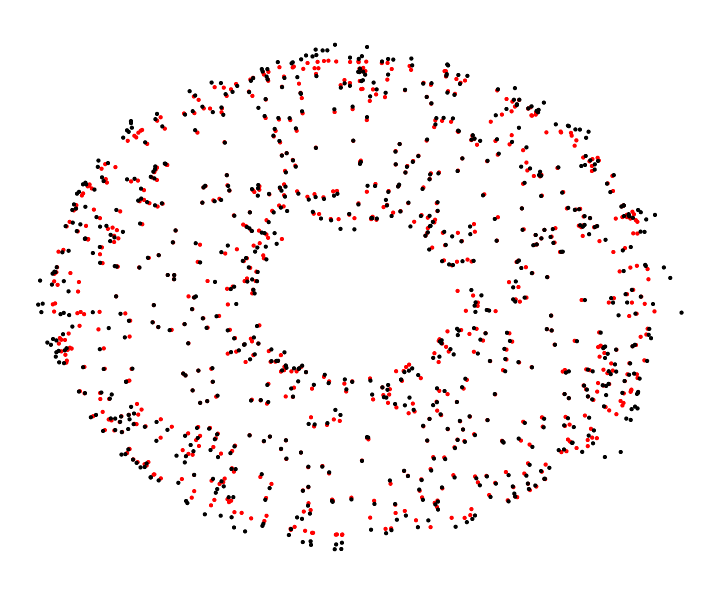}
\includegraphics[width=0.19\textwidth]{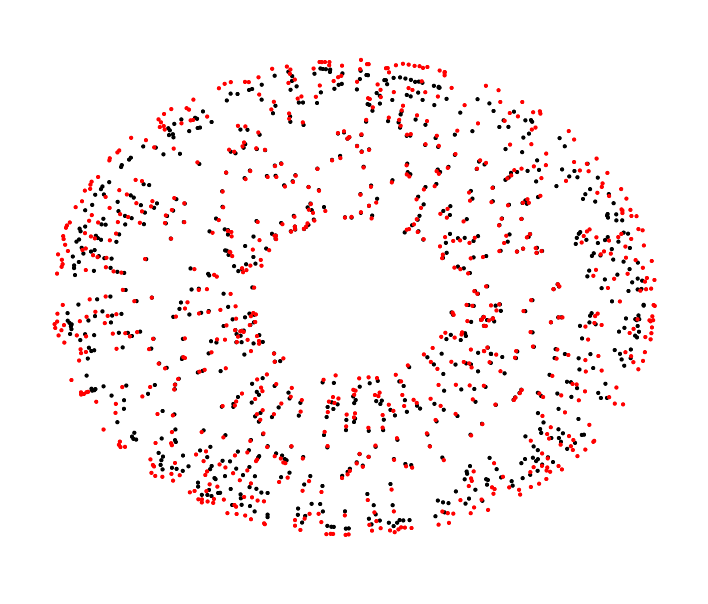}
\includegraphics[width=0.19\textwidth]{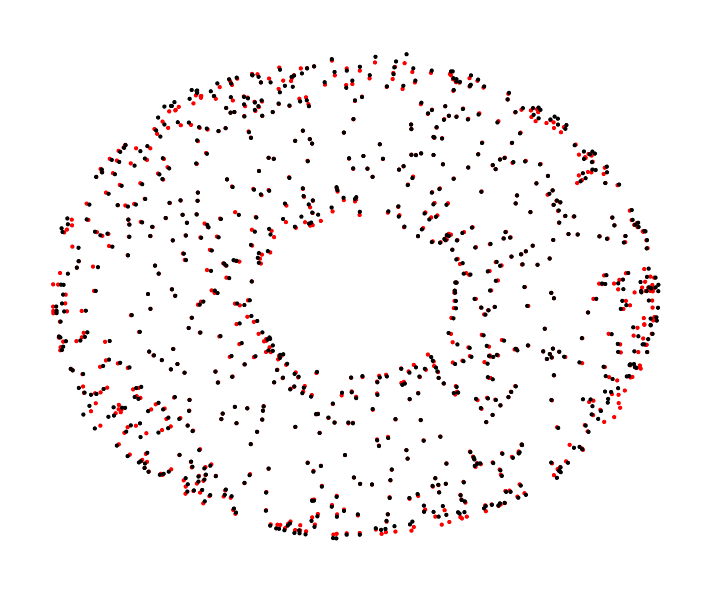}
\caption{The performance of our method, km17, cf18, ya21(deg=1) and ya21(deg=2)
 when fitting a torus with $N = N_0 = 800$ and $\sigma=0.04$, where black points represent points in $\tilde{P}$(black dots) and red points represent their projections onto $\M$.}\label{fig:torus_cf_xy}
\end{figure}

We also repeat each test for 20 trials and list the results of $H(\tilde{P}, \M)$ using the different methods shown in Figure \ref{fig:torus_max}. When $\sigma = 0.02$ and $\sigma = 0.04$, our method performs better than cf18, km17 and ya21(deg=1) but as $\sigma$ increases to $0.06$, the fitting problem becomes more difficult and the performance of km17, cf18 and our method are similar, which further demonstrates the sensitivity of ya21(deg=2). When $\sigma$ is small and the sample size is adequate, ya21(deg=2) outperforms the other methods but when the sample size decreases and $\sigma$ increases, the performance of ya21(deg=2) deteriorates rapidly.
\begin{figure}[htbp]
\centering
\includegraphics[height=1.5in,width=0.32\textwidth]{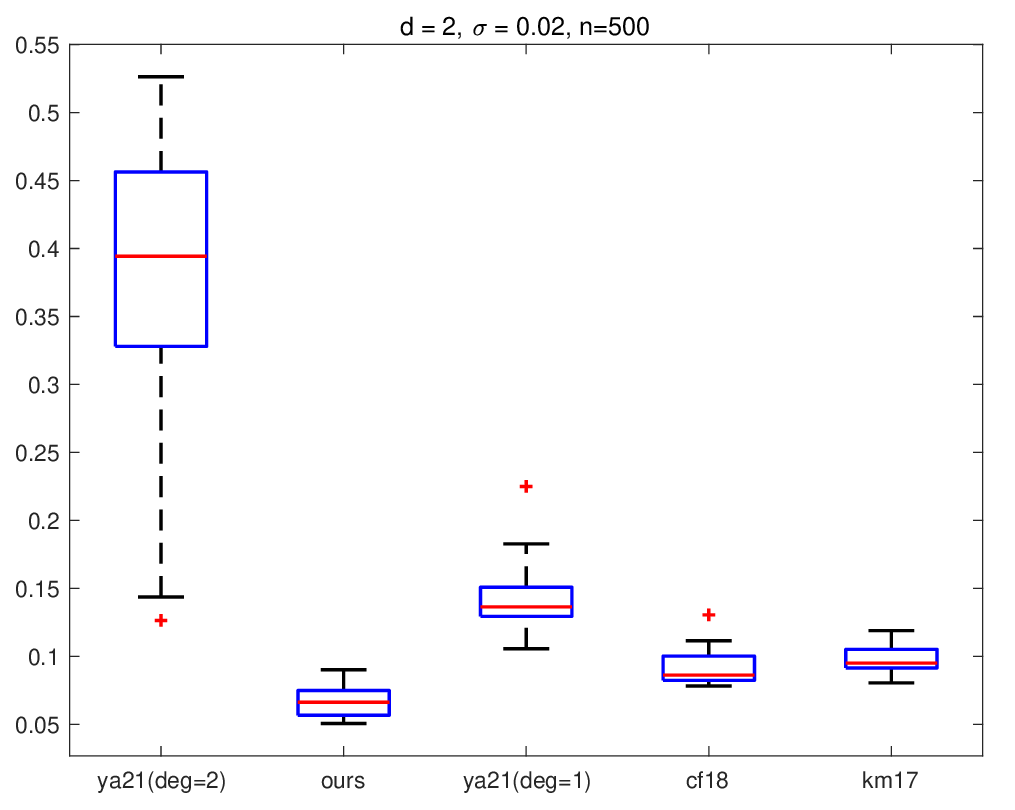}
\includegraphics[height=1.5in,width=0.32\textwidth]{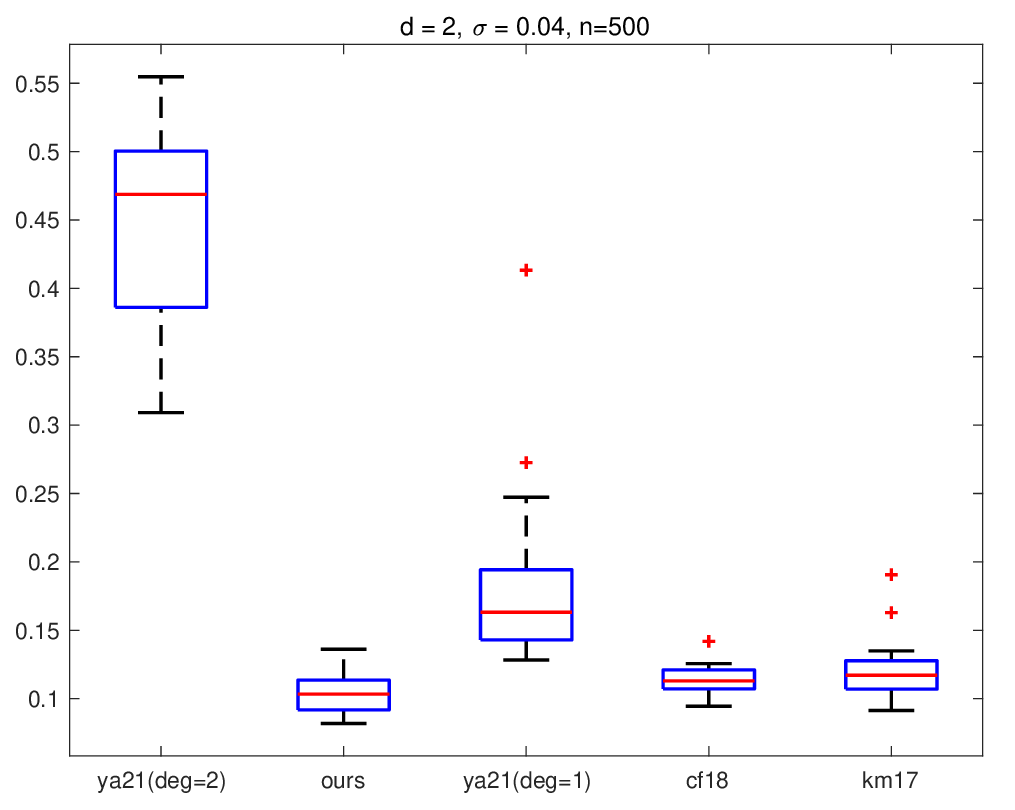}
\includegraphics[height=1.5in,width=0.32\textwidth]{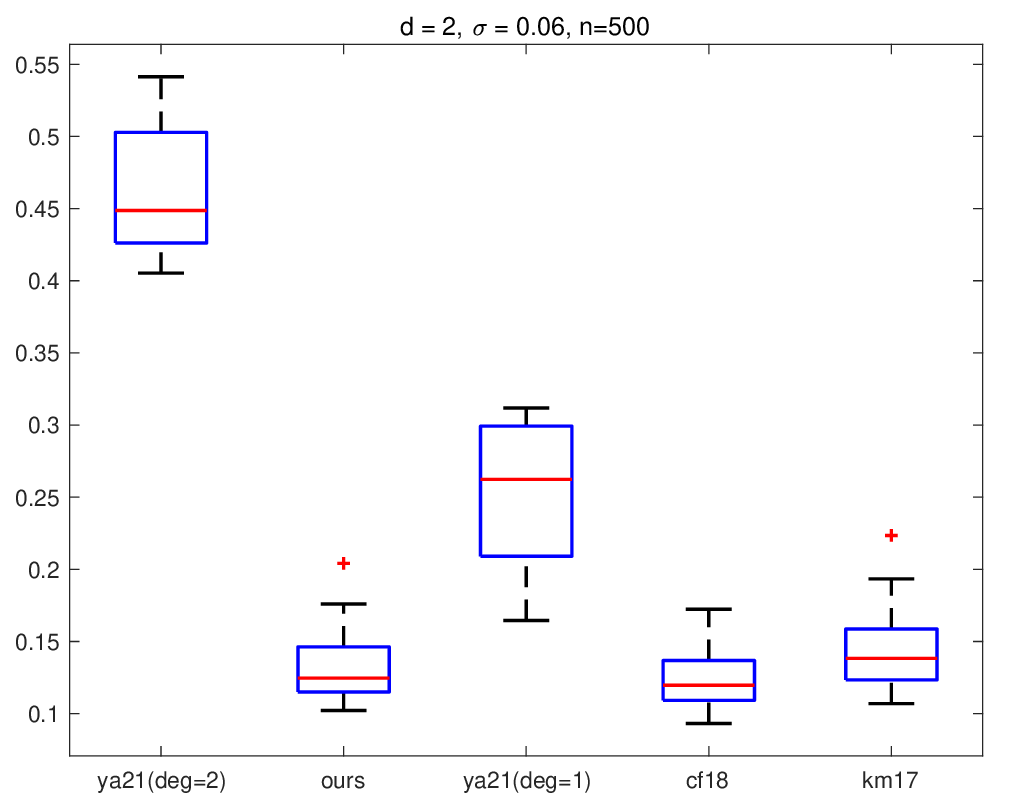}
\centering
\includegraphics[height=1.5in,width=0.32\textwidth]{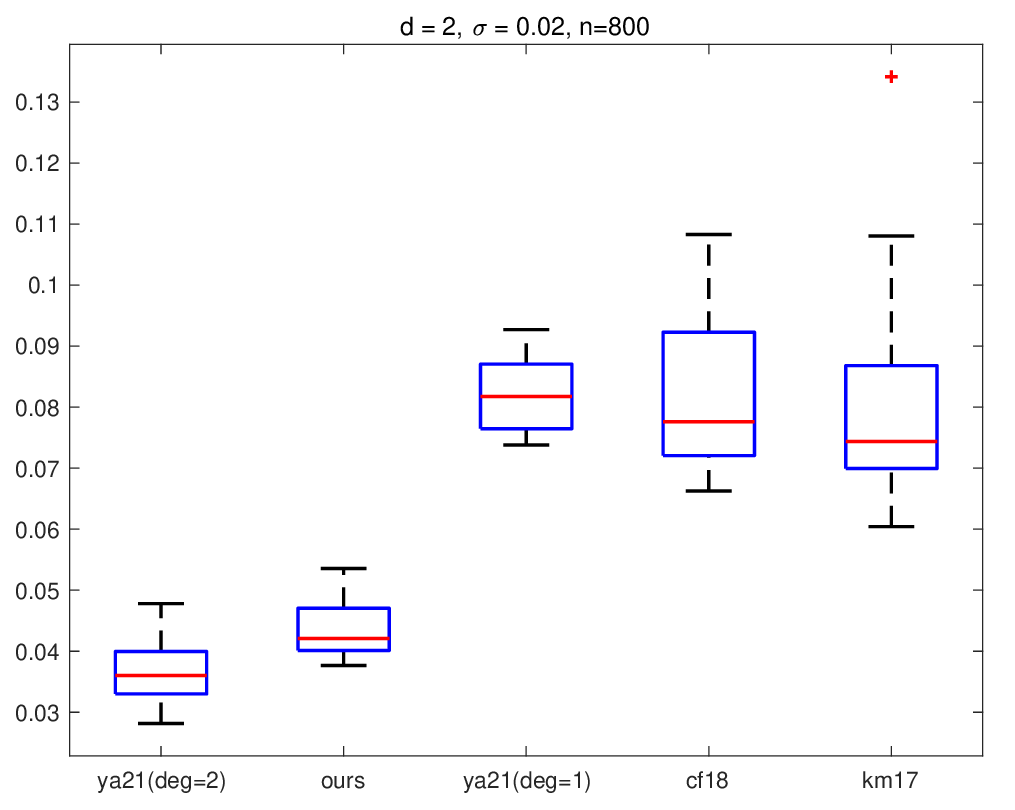}
\includegraphics[height=1.5in,width=0.32\textwidth]{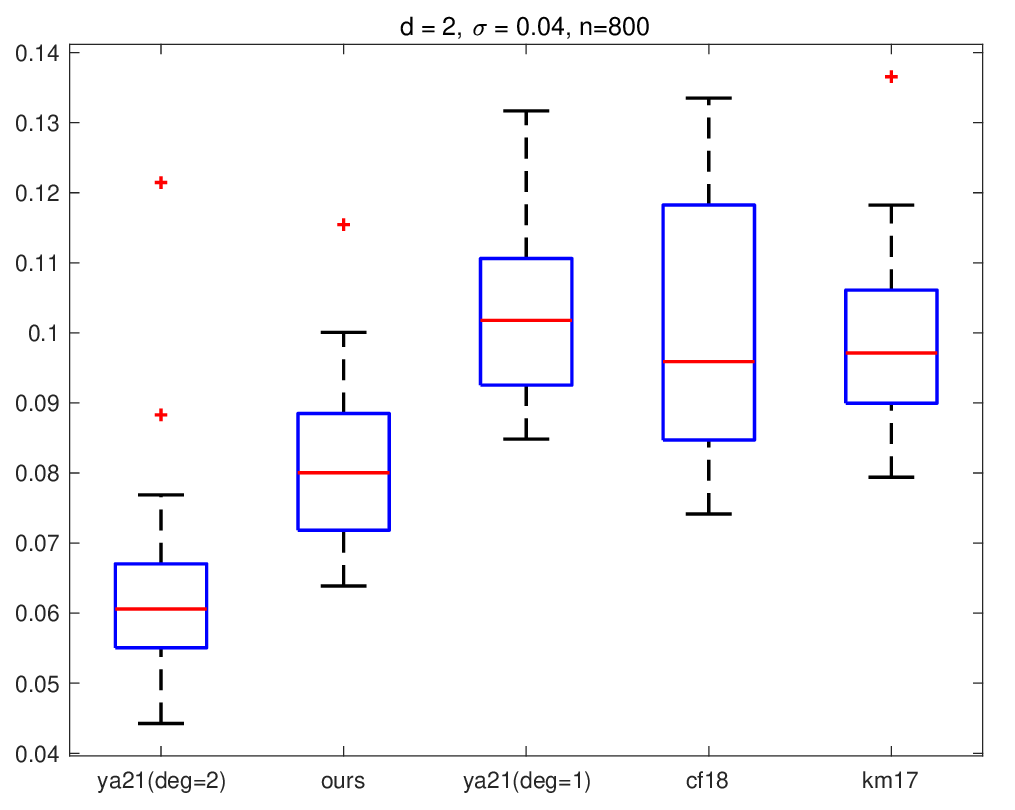}
\includegraphics[height=1.5in,width=0.32\textwidth]{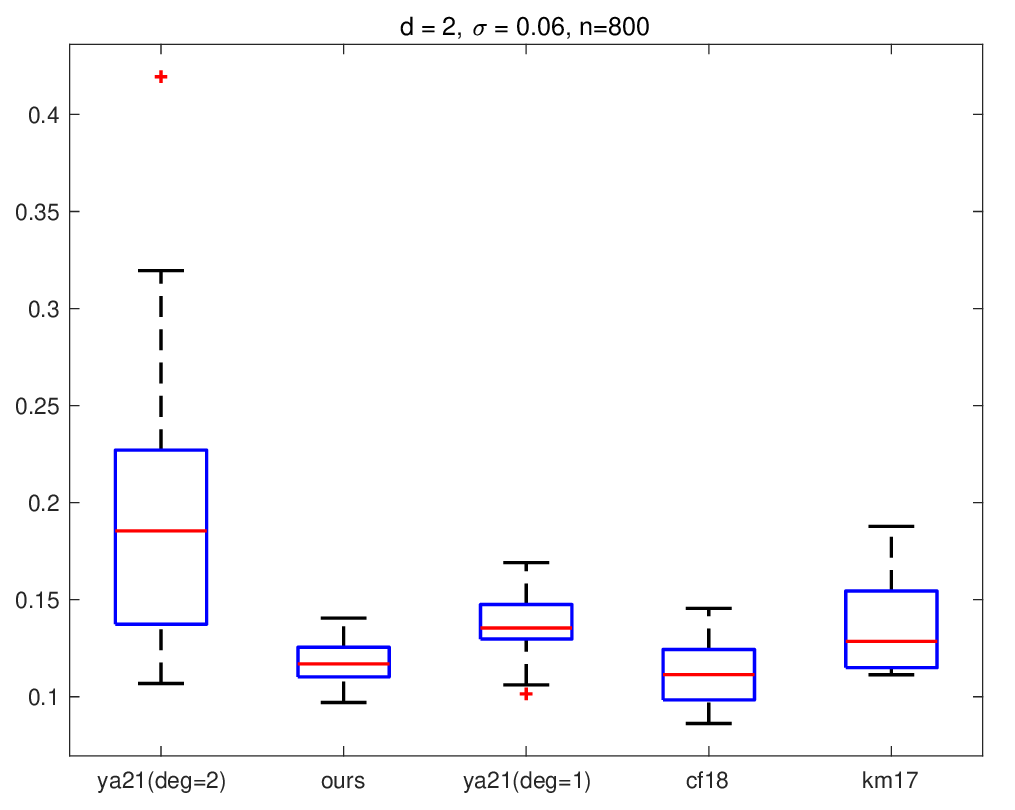}
\caption{The Hausdorff distance of fitting a torus given 500 (top row) and 1000 (bottom row) samples with $\sigma=0.02$ (left column),  $\sigma=0.04$ (middle column) and $\sigma=0.06$ (right column) using ya21(deg=2), our method, ya21(deg=1), cf18 and km17 respectively.} \label{fig:torus_max}
\end{figure}

\subsection{Facial image denoising}

This subsection considers a concrete case - denoising facial images selected from the video database in \cite{happy2012video}. We select 1,000 images of an individual turning his head around, then blurring those images via a Gaussian distribution with a different standard derivation $\sigma$. In this experiment, $\sigma$ is set to be the average of all pixels in 1,000 images multiplied by $\rho = 0.2, 0.3$, or $0.4$. The size of each facial image is $80 \times 80$, which means $D = 6400$. The dimension $d$ of the latent manifold is tuned from $\{1, 5, 10, 15, 20, 50, 75, 100\}$ for each method and we choose $d=10$ because of its outperformance.

\begin{figure}[htbp]
\centering
\includegraphics[width=0.18\textwidth]{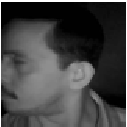}
\includegraphics[width=0.18\textwidth]{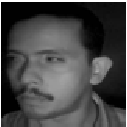}
\includegraphics[width=0.18\textwidth]{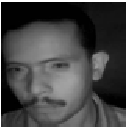}
\includegraphics[width=0.18\textwidth]{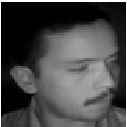}
\includegraphics[width=0.18\textwidth]{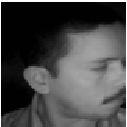}
\centering
\includegraphics[width=0.18\textwidth]{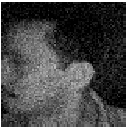}
\includegraphics[width=0.18\textwidth]{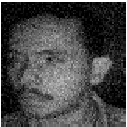}
\includegraphics[width=0.18\textwidth]{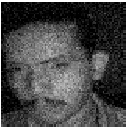}
\includegraphics[width=0.18\textwidth]{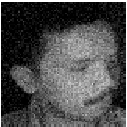}
\includegraphics[width=0.18\textwidth]{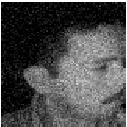}
\centering
\includegraphics[width=0.18\textwidth]{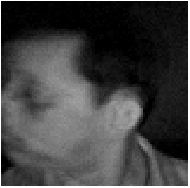}
\includegraphics[width=0.18\textwidth]{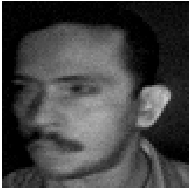}
\includegraphics[width=0.18\textwidth]{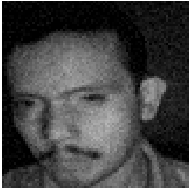}
\includegraphics[width=0.18\textwidth]{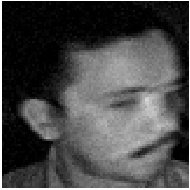}
\includegraphics[width=0.18\textwidth]{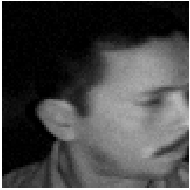}
\centering
\includegraphics[width=0.18\textwidth]{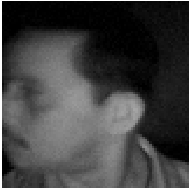}
\includegraphics[width=0.18\textwidth]{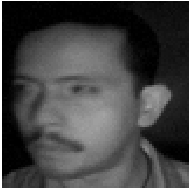}
\includegraphics[width=0.18\textwidth]{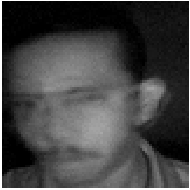}
\includegraphics[width=0.18\textwidth]{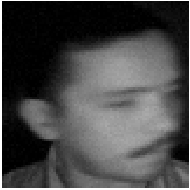}
\includegraphics[width=0.18\textwidth]{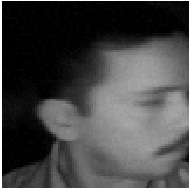}
\centering
\includegraphics[width=0.18\textwidth]{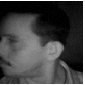}
\includegraphics[width=0.18\textwidth]{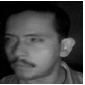}
\includegraphics[width=0.18\textwidth]{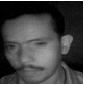}
\includegraphics[width=0.18\textwidth]{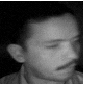}
\includegraphics[width=0.18\textwidth]{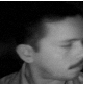}
\centering
\includegraphics[width=0.18\textwidth]{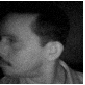}
\includegraphics[width=0.18\textwidth]{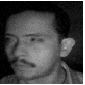}
\includegraphics[width=0.18\textwidth]{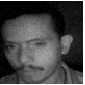}
\includegraphics[width=0.18\textwidth]{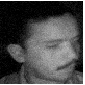}
\includegraphics[width=0.18\textwidth]{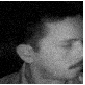}
\centering
\includegraphics[width=0.18\textwidth]{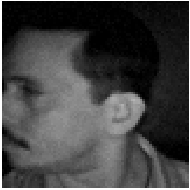}
\includegraphics[width=0.18\textwidth]{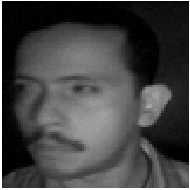}
\includegraphics[width=0.18\textwidth]{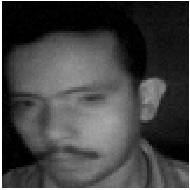}
\includegraphics[width=0.18\textwidth]{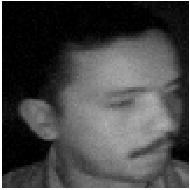}
\includegraphics[width=0.18\textwidth]{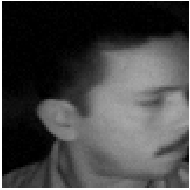}
\caption{Performance of facial image denoising with $\rho = 0.3$. The first row consists of original images while the second row consists of blurred images. The third to seventh rows contain deblurred images using km17, cf18, ya21(deg=1), ya21(deg=2) and our method, respectively. }\label{fig:face_0.3}
\end{figure}

From the 1,000 facial images, we select 5 with different head orientations. The top row of Figure \ref{fig:face_0.3} displays these five original images, while the second row of Figure \ref{fig:face_0.3} shows these five images blurred, with $\rho = 0.3$. The goal of this experiment is to denoise these five blurred images by projecting them to the manifold learnt by the remaining 995 blurred images, which are treated as the noisy samples. To achieve the denoising, we use km17, cf18, ya21(deg=1), ya21(deg=2) and our method to construct the output manifold with the 995 noisy samples, and project the five tested images to each output manifold. When the output manifold correctly fits the latent one, projecting blurred images to the output manifold denoises these facial images. In this experiment, we take $\beta=2$ for our method to construct $\tilde{\alpha}_i(x)$. If cf18 uses $\tilde{\alpha}_i(x)$ as \cite{pmlr-v75-fefferman18a} has suggested, it would not work quite satisfactorily, because of the over-large power $d+2$ rather than $\beta$. Therefore, we take the same $\tilde{\alpha}_i(x)$ for cf18 and our method to make the results comparable.

The last three rows of Figure \ref{fig:face_0.3} show the denoised images obtained by km17, cf18, ya21(deg=1), ya21(deg=2) and our method, respectively. The first and third facial images were not recovered by km17. Although the faces in the other three images obtained by km17 can be distinguished, they are still very noisy. Cf18 could not recover the third image either, although the other four images obtained by cf18 are better than the ones obtained by km17. Both ya21(deg=1) and ya21(deg=2) can recover these five faces. However, the faces obtained by ya21(deg=2) are still somewhat fuzzy, compared with the ones obtained by ya21(deg=1) and our method. Our method recovered all the five faces, with the third face of much better quality than the faces from km17 and cf18.

The results with the settings $\rho = 0.2$ or $\rho = 0.4$ are listed in Figure \ref{fig:face_0.2} and Figure \ref{fig:face_0.4} (Appendix \ref{app:face}). When we take $\rho = 0.2$, the results of all three methods provide fairly good results. However, the results from km17 are somewhat noisy, with the obtained faces darker than the original ones. When $\rho = 0.4$, km17 is barely able to recover the faces, cf18 fails at the first and third ones, but our method can still provide acceptable faces.

\section{Discussion}
We have proposed a new output manifold $\M_{\rm out}$ to fit data collection with Gaussian noise. The theoretical analysis of $\M_{\rm out}$ has two main components: (1) the upper bound on $d(x, \M)$ for arbitrary $x$, which guarantees $\M_{\rm out}$ approximates $\M$ well, and (2) the upper bound on the second-order difference of $\M_{\rm out}$, which guarantees the smoothness of $\M_{\rm out}$.

To \YQ{demonstrate} the contribution of this paper, we compared our theoretical results 
to relevant works presented in \cite{mohammed2017manifold} and \cite{pmlr-v75-fefferman18a}. All three of these works aim to fit data collection by a smooth manifold, while the difference among these works lies in the assumptions about noise.  \cite{mohammed2017manifold} requires the data to be noiseless, which is the most strict assumption of the three. As mentioned in the Introduction, \cite{pmlr-v75-fefferman18a} essentially requires the noise of data to be bounded, that is, the data collection $X$ satisfying $H(X, \M) \leq O(r^2)$\YQ{, where} $H(\cdot, \cdot)$ denotes the Hausdorff distance. If the noise of data obeys a Gaussian distribution, the researchers would select a subset from the entire dataset, assume the noise of the subset is bounded, and implement their proof on this subset of data. However, their sample selection step imposes a lower bound on $r$, meaning that the upper bound of $H(\M, \M_{\rm out})$ cannot tend to $0$. This paper, therefore, proposes a method to address the problem of Gaussian noise, which is commonly assumed but unsolved in relevant works. Unlike the bounded noise, $X$ with Gaussian noise are not required to satisfy $H(X, \M) \leq O(r^2)$, which increases the difficulty of manifold fitting.

According to the discussion in Subsection \ref{sec:motivation} and the experiment results, our method could achieve a smaller approximating error than the methods presented in \cite{mohammed2017manifold} and \cite{pmlr-v75-fefferman18a}. One possible reason is that we use the weighted average $\sum_{i \in I_{x,r}} \alpha_i(x) P_{x_i}$ to estimate $\YQ{\Pi_{x^*}}$ rather than using each $P_{x_i}$ separately. To explain this claim, we consider the following expression:
\begin{align} \label{ineq:discussion}
\sum_{i \in I_{x,r}} \alpha_i(x) P_{x_i} - \YQ{\Pi_{x^*}}
\!=\! \sum_{i \in I_{x,r}} \alpha_i(x) (P_{x_i} - \YQ{\Pi_{x_i^*}}) \! +\! \big(\sum_{i \in I_{x,r}} \alpha_i(x) \YQ{\Pi_{x_i^*}} - \YQ{\Pi_{x^*}} \big).
\end{align}
For certain “symmetric'' manifolds, the second term in the right hand side of (\ref{ineq:discussion}) might be much closer to zero matrix than \YQ{$(\Pi_{x_i^*} - \Pi_{x^*})$}.

A circle may be considered as an example. Suppose $x$, $x_1$, and $x_2$ are points on the circle satisfying $x_1-x = x-x_2$; then, the average of orthogonal projections onto the normal spaces at $x_1$ and $x_2$ equals the orthogonal projection onto the normal space at $x$, while the projection onto the normal space at $x_1$ (or $x_2$) differs from that at $x$ with an error in the order of $\|x-x_1\|_2$ (or $\|x-x_2\|_2$) by Lemma \ref{lma:Pi_ij}.

This phenomenon illustrates that the average of $\{\YQ{P_{x_i}}\}_{i \in I_{x,r}}$ approximates $\YQ{\Psi_x^\alpha}$ better than each \YQ{$P_{x_i}$} for certain manifolds. We benefit from this fact by using $\sum_{i \in I_{x,r}} \alpha_i(x) P_{x_i}$ to construct our output manifold, while \cite{mohammed2017manifold} and \cite{pmlr-v75-fefferman18a} use each $P_{x_i}$ separately instead. Characterizing the “symmetric'' property mentioned above and using this property in the methodology of manifold fitting is an attractive and promising topic, and our work on it will continue.



\acks{ZY and YX were supported by the MOE Tier 1 A-0004809-00-00 and Tier 2 R-155-000-184-112 at the National University of Singapore. ZY is also supported by Tier 2 A-0008520-00-00. ZY thanks Professor Charles Fefferman and Professor Hariharan Narayanan for their helpful discussions on some details of \cite{mohammed2017manifold} and \cite{pmlr-v75-fefferman18a} which we found very useful. ZY thanks Professor Shing-Tung Yau for his sagacious comments and the support from the Center of Mathematical Sciences and Applications at Harvard University.}


\newpage

\appendix
\section{Proofs} \label{app:A}
\subsection{Proof of Proposition \ref{prop:samplesize}} \label{proof:samplesize}

\begin{lemma} \label{lma:prob_in_I}
If $d(x, \M) \leq cr$ with some $c < 1$ and $c_1$ satisfies $c < c_1 \leq 1$, then there exists a constant $c'$ such that $\mathbb{P}(i \in I_{x, c_1r}) \geq c'r^d$.
\end{lemma}
\begin{proof}
 Setting $c_2$ be a constant satisfying $c < c_2 < c_1$, then
\begin{align*}
\mathbb{P}(i \in I_{x, c_1r}) & \geq \mathbb{P}\big(y_i \in \M \cap B_D(x,c_2r), \|\xi_i\|_2 \leq (c_1-c_2)r \big) \\
& = \mathbb{P}\big(y_i \in \M \cap B_D(x,c_2r) \big) \mathbb{P}\big(\|\xi_i\|_2 \leq (c_1-c_2)r \big). 
\end{align*}
In order to bound $\mathbb{P}(i \in I_{x, c_1r})$ below, we bound the two probability $\mathbb{P}(y_i \in \M \cap B_D(x,c_2r))$ and $\mathbb{P}(\|\xi_i\|_2 \leq (c_1-c_2)r)$, respectively. Since $d(x,\M) \leq cr < c_2r$, there exists $c_3$ such that 
\begin{align*}
\mathbb{P}\big(y_i \in \M \cap B_D(x,c_2r) \big) = \frac{{\rm Vol}\big(\M \cap B_D(x, c_3r)\big)}{{\rm Vol}(\M)} = c_3r^d.
\end{align*}
Since $\|\xi_i\|_2^2/\sigma^2$ obeys Chi-square distribution and $r = O(\sqrt{\sigma}) > C\sigma$ with $\sigma<1$, 
\begin{align*}
\mathbb{P}\big(\|\xi_i\|_2 \leq (c_1-c_2)r \big)
& = \mathbb{P}\left(\frac{\|\xi_i\|_2^2}{\sigma^2} \leq \frac{(c_1-c_2)^2r^2}{\sigma^2} \right) \\
& \leq 1 - \left( \frac{(c_1-c_2)^2r^2}{\sigma^2} e^{1-\frac{(c_1-c_2)^2r^2}{\sigma^2}}\right)^{D/2} \\
& \leq 1 - \left( (c_1-c_2)^2C^2e^{1-(c_1-c_2)^2C^2}\right)^{D/2} := c_4,
\end{align*}
where the second inequality holds by the Chernoff bound. Calculating the product of  $\mathbb{P}\big(y_i \in \M \cap B_D(x,c_2r)\big)$ and $\mathbb{P}\big(\|\xi_i\|_2 \leq (c_1-c_2)r \big)$ completes this proof.
\end{proof}

\begin{proof} {\bf of Proposition \ref{prop:samplesize}}
Setting $c_1 = 1$ in Lemma \ref{lma:prob_in_I}, we obtain $\mathbb{P}(i \in I_{x, r}) \geq c'r^d$. Hence, whether $i \in I_{x,r}$ or not, can be treated as a Bernoulli distribution with the expectation of $c'r^d$. Applying the Berry-Esseen theorem to the $N$ Bernoulli trials, there exists $c'<1$ such that $|I_{x,r}| \geq c'r^dN$ \YQ{with} probability $1-C/\sqrt{N}$. 
\end{proof}

\subsection{Proof of Lemma \ref{lma:covariance} and Lemma \ref{lma:lambda}} \label{app:B}
The following proof is derived from the notations illustrated in Figure \ref{fig:Pi} and the settings $\sigma<1$, $r' = 2r$ and $r = O(\sqrt{\sigma})$, which imply that there exist constants $C$ and $C'$ independent on $\sigma$ such that $r < C$ and $r' < C'$ by (\ref{ineq:bound_r}).

\begin{proof}{\bf of Lemma \ref{lma:covariance}}
Let $p_i - z^* = q_i$; then, $p_i' - z = p_i - z^* = q_i$. Considering $z_i - z = z_i - p_i' + p_i'-z = z_i - p_i' + q_i := \delta_i + q_i$, we can rewrite $ \sum_{i \in I_{z,r'}} (z_i - z)(z_i-z)^T - \sum_{i \in I_{z,r'}} (p_i - z^*)(p_i-z^*)^T$ as
\begin{align}\label{equ:err_lma_cov}
    \sum_{i \in I_{z,r'}} q_i \delta_i^T + \delta_i q_i^T + \delta_i \delta_i^T.
\end{align}

To begin with, we bound $\|\delta_i\|_2$. Recalling that the projection onto the normal space at $z^*$ is $\YQ{\Pi_{z^*}}$,
\begin{align*}
    \|\delta_i\|_2 & = \|\YQ{\Pi_{z^*}}(z_i - z)\|_2 \leq \|\YQ{\Pi_{z^*}}\big( (z_i-z_i^*) + (z_i^*-z^*) + (z^*-z) \big)\|_2 \\
    & \leq \|\YQ{\Pi_{z^*}}(z_i^* - z^*)\|_2 + \|z_i - z_i^*\|_2 + \|z^*-z\|_2 \\
    & \leq \frac{\|z_i^*-z^*\|_2^2}{\tau} + \|z_i-z_i^*\|_2 + \|z-z^*\|_2 \\
    & \leq \frac{(\|z_i^*-z_i\|_2 + \|z_i-z\|_2 + \|z-z^*\|_2)^2}{\tau} + \|z_i-z_i^*\|_2 + \|z-z^*\|_2.
\end{align*}
The last but one inequality holds in accordance with Proposition \ref{prop:reach}. As established previously, each $z_i$ is generated as $y_i+\xi_i$ with $y_i \in \M$ and $\xi_i \sim N(0,\sigma^2I_D)$. Then, $\|\xi_i\|_2 = \|z_i-y_i\|_2 \geq \|z_i - z_i^*\|_2$ since $z_i^*$ is the projection of $z_i$ onto $\M$. Thus, $\|\delta_i\|_2$ can be bounded by
\[
\|\delta_i\|_2 \leq \frac{(\|\xi_i\|_2 + r' + \|z-z^*\|_2)^2}{\tau} + \|\xi_i\|_2 + \|z-z^*\|_2 \leq C_1 \big( \|\xi_i\|_2^2 + \|\xi_i\|_2 + {r'}^2 + \|z-z^*\|_2 \big).
\]
The last inequality is achieved by replacing certain $\|z-z^*\|_2$ by its upper bound $r'$ and replacing certain $r'$ by a constant independent on $\sigma$ , since $r'<C'$ \YQ{by $r'=2r$ and (\ref{ineq:bound_r})}.
Considering the average over $I_{z,r'}$, we obtain 
\begin{align*}
\frac{1}{|I_{z,r'}|} \sum_{i \in I_{z,r'}} \|\delta_i\|_2 \leq  C_1 \big( \psi_2 + \psi_1 + r^2 + \|z-z^*\|_2 \big),  
\end{align*}
and
\begin{align*} 
\frac{1}{|I_{z,r'}|} \sum_{i \in I_{z,r'}} \|\delta_i\|_2^2 \leq  C_2\big( \psi_4 + \psi_3+\psi_2 + r\psi_1 + r^4 + r^2\|z-z^*\|_2 + \|z-z^*\|_2^2 \big).
\end{align*}
where $\psi_k :=  \frac{1}{|I_{z,r'}|} \sum_{i \in I_{z,r'}} \|\xi_i\|_2^k$, 
the above bounds are then plugged into the bound of (\ref{equ:err_lma_cov}) as follows:
\begin{align*}
    & \| \frac{1}{|I_{z,r'}|} \big( \sum_{i \in I_{z,r'}} (z_i - z)(z_i-z)^T - \sum_{i \in I_{z,r'}} (p_i - z^*)(p_i-z^*)^T \big)\|_F \\
    \leq & \|\frac{1}{|I_{z,r'}|}\sum_{i \in I_{z,r'}} \big( q_i \delta_i^T + \delta_i q_i^T + \delta_i \delta_i^T \big)\|_F \\
    \leq & \frac{1}{|I_{z,r'}|} \sum_{i \in I_{z,r'}} \big(2 \|q_i\|_2\|\delta_i\|_2 + \|\delta_i\|_2^2 \big) \\
     \leq & \frac{1}{|I_{z,r'}|} \sum_{i \in I_{z,r'}} \big(2 r'\|\delta_i\|_2 + \|\delta_i\|_2^2 \big) \\
    \leq & C\big( \psi_4+\psi_3+\psi_2+r'\psi_1 + {r'}^3 + r'\|z-z^*\|_2 + \|z-z^*\|_2^2\big)
\end{align*}
The last but one inequality holds since $\|q_i\|_2 \leq \|z_i-z\|_2 \leq r'$. Replacing $\psi_k$ by corresponding summation completes the proof.
\end{proof}

\begin{lemma}[Theorem 21 in \cite{mohammed2017manifold}] \label{lma:clt_lambda}
Let $\Lambda_1, \cdots, \Lambda_k$ be i.i.d. random positive semidefinite $D\times D$ matrices with expected value $\E[\Lambda_i] = M \succeq \mu I$ and $\Lambda_i \preceq I$. Then for all $\epsilon \in [0,1/2]$,
\[
\mathbb{P} \Big[ \frac{1}{k} \sum_{i=1}^k \Lambda_i \notin [(1-\epsilon)M, (1+\epsilon)M] \Big] \leq 2D \exp \big\{ \frac{-\epsilon^2\mu k}{2\ln2}\big\}.
\]
Here, the matrix interval $ A \in [B, C]$ means $a_{ij} \in [b_{ij}, c_{ij}]$
 holds for any $i,j$ and the matrix ordering $A  \succeq B$ means $A-B$ is a positive semidefinite.
 \end{lemma}

\begin{proof} {\bf of Lemma \ref{lma:lambda}}
Before the proof of Lemma \ref{lma:lambda}, we provide the useful notations and contents. For convenience, $z^*$ is set to be the origin of the local coordinate system, and the coordinates in $T_{z^*}\M$ are set to be the first $d$ coordinates of the $D$ coordinates. We let $\mP_d : \R^D \to \R^D$ be an operator, setting the last $(D-d)$ entries of a vector to be zeros, that is, $\mP_d(v) = [v_1, \cdots, v_d, 0,\cdots, 0]^T$. We also let $\bar{\mP}_d$ be the operator, setting the first $d$ entries of a vector to be zeros, that is, $\bar{\mP}_d = \I - \mP_d$, with $\I$ being the identity operator. Notations $\bar v := \mP_d(v)$ and $\hat{v} = \bar{\mP}_d(v)$ are also used without confusion.

Based on these notations, we calculate the useful bound on $\|\hat{\eta}\|_2$ for $\eta \in \M \cap B_D(z,r')$. Using the definition of $\bar{\eta}$, we obtain $\langle z^*-\bar{\eta}, z-z^*\rangle = 0$, $\langle z^*-\bar{\eta},\hat{\eta}\rangle = 0$, and, therefore
\begin{align*}
    {r^\prime}^2 & \geq \|z-\eta\|_2^2 = \|(z-z^*)+(z^*- \bar{\eta})    -  \hat{\eta} \|_2^2 \\
    & \geq \|z-z^*\|_2^2-2\|z-z^*\|_2\|\hat{\eta}\|_2+\|z^*-\bar{\eta}\|_2^2 + \|\hat{\eta}\|_2^2 \\
    & = \|z-z^*\|_2^2-2\|z-z^*\|_2\|\hat{\eta}\|_2+\|z^*-\eta\|_2^2.
\end{align*}
Moreover, in accordance with Proposition \ref{prop:reach}, $\|z^*-\eta\|_2^2 \geq 2 \tau \|\hat{\eta}\|_2$. Combining these two inequalities, we obtain
\[
{r'}^2 - \|z-z^*\|_2^2 + 2\|z-z^*\|_2\|\hat{\eta}\|_2 \geq \|z^*-\eta\|_2^2 \geq 2\tau\|\hat{\eta}\|_2
\]
and, hence,
\begin{align} \label{neq:eta-Pi}
    \|\hat{\eta}\| \leq \frac{{r'}^2-\|z-z^*\|^2}{2(\tau-\|z-z^*\|)}.
\end{align}

We are now ready to prove Lemma \ref{lma:lambda}.
Let $\lambda_1 \geq \cdots \geq \lambda_D$ be the eigenvalues of  matrix
$\frac{1}{|I_{z,r'}|}\sum_{i \in I_{z,r'}} (p_i - z^*)(p_i-z^*)^T$ and $\mu_1 \cdots \geq \mu_D$ be the eigenvalues of the population covariance matrix $M$, that is,
\[
M:=\E[\frac{1}{|I_{z,r'}|}\sum_{i \in I_{z,r'}} (p_i - z^*)(p_i-z^*)^T].
\]
We see that $\lambda_{d+1} = \cdots = \lambda_D = \mu_{d+1} = \cdots = \mu_D =0$. Therefore, we need only a lower bound for $\lambda_d$, which can be obtained by relating its value to $\mu_d$ through a concentration inequality given in Lemma \ref{lma:clt_lambda}. Assuming the first $d$ coordinates are aligned with the eigenvectors corresponding to the $d$ largest eigenvalues of $M$, $\mu_d$ is the variance in the $d$-th direction. Clearly, the first $d$ coordinates are located in $T_{z^*}\M$. Let $\mathbb{P}$ be the probability measure on $T_{z^*} \M \cap B_D(z, r')$. For any $q \in T_{z^*}\M \cap B_D(z,r')$, we first bound $\mathbb{P}(q)$ above.

We set $S(q) = \{\zeta': \bar{\zeta}' = q\} \cap B_D(z,r')$, and $\hat{S}(q) = \cup_{\zeta' \in S(q)} \{\eta':|\eta'(i)-\zeta'(i)| \leq 3\sigma, \forall i=1,\cdots,D \}$, where $\eta(i)$ and $\zeta(i)$ represent the $i$-th element of $\eta$ and $\zeta$, respectively. Then, we have $\cup_{q \in T_{z^*} \M \cap B_D(z,r')} S(q) \subset B_D(z,r')$ and
\begin{align*}
\cup_{q \in T_{z^*} \M \cap B_D(z,r')} \hat{S}(q)
& \subset \cup_{\zeta' \in B_D(z,r')} \{\eta':|\eta'(i)-\zeta'(i)| \leq 3\sigma, \forall i=1,\cdots,D \} \\
& \subset B_D(z,r'+3\sigma \sqrt{D}).
\end{align*}
The probability at $q$ is
\begin{align}
    \mathbb{P}(q) &= \frac{(2\pi\sigma)^{-D/2}}{\Vol(\M)} \int_{S(q)} d\zeta' \int_{\M} e^{-\|\eta'-\zeta'\|_2^2/2\sigma^2} d\mu_{\M}(\eta') \nonumber\\
    & = \frac{(2\pi\sigma)^{-D/2}}{\Vol(\M)} \int_{S(q)} d\zeta' \int_{\M \cap \hat{S}(q)} e^{-\|\eta'-\zeta'\|_2^2/2\sigma^2} d\mu_{\M}(\eta') \label{eq:dom1} \\
    & + \frac{(2\pi\sigma)^{-D/2}}{\Vol(\M)} \int_{S(q)} d\zeta' \int_{\M \setminus \hat{S}(q)} e^{-\|\eta'-\zeta'\|_2^2/2\sigma^2} d\mu_{\M}(\eta') \label{eq:dom2}.
\end{align}
We bound $\mathbb{P}(q)$ above by bounding (\ref{eq:dom1}) and (\ref{eq:dom2}).
\begin{align*}
    (\ref{eq:dom1}) & = \frac{(2\pi\sigma)^{-D/2}}{\Vol(\M)}\int_{S(q)} d\zeta' \int_{\M \cap \hat{S}(q)} e^{-\|\bar{\eta}'-q\|_2^2/2\sigma^2} e^{-\|\hat{\eta}'-\hat{\zeta}'\|_2^2/2\sigma^2} d\mu_\M(\eta') \\
    & \leq \frac{(2\pi\sigma)^{-D/2}}{\Vol(\M)} \int_{\M \cap \hat{S}(q)} e^{-\|\bar{\eta}'-q\|_2^2/2\sigma^2} \Big( \int_{0_d \times \R^{D-d}} e^{-\|\hat{\eta}'-\hat{\zeta}'\|_2^2/2\sigma^2} d \hat{\zeta}' \Big) d\mu_\M(\eta') \\
    & = \frac{(2\pi\sigma)^{-d/2}}{\Vol(\M)} \int_{\M \cap \hat{S}(q)} e^{-\|\bar{\eta}'-q\|_2^2/2\sigma^2} d\mu_\M(\eta') \\
    & = \frac{(2\pi\sigma)^{-d/2}}{\Vol(\M)} \int_{\mP_d(\M\cap \hat{S}(q))} e^{-\|\bar{\eta}'-q\|_2^2/2\sigma^2}\sqrt{\det \big(I+J(\bar{\eta}')^TJ(\bar{\eta}' )\big)} d \bar{\eta}' \\
    & \leq \frac{(2\pi\sigma)^{-d/2}}{\Vol(\M)} \Big( 1+ \frac{C^2(r'+3\sigma\sqrt{D})^2}{\tau^2}\Big)^{d/2} \int_{\R^d \times 0_{D-d}} e^{-\|\bar{\eta}'-q\|_2^2/2\sigma^2} d\bar{\eta}' \\
    & = \frac{1}{\Vol(\M)} \Big( 1+ \frac{C^2(r'+3\sigma\sqrt{D})^2}{\tau^2}\Big)^{d/2}.
\end{align*}
The last inequality holds since $\|J(\bar{\eta}')\|_F \leq (C(r'+3\sigma\sqrt{D}))/\tau$ with $\eta' \in B_D(z, r'+3\sigma \sqrt{D})$.
According to the definition of $S(q)$ and $\hat{S}(q)$, we have for any $\eta' \in \M \setminus \hat{S}(q)$ and $\zeta' \in S(q)$ the formula $|\eta(i)'-\zeta(i)'| \leq 3 \sigma$, which implies
\[
(2\pi\sigma^2)^{-D/2} \int_{S(q)} e^{-\|\eta'-\zeta'\|/2\sigma^2} d\zeta' \leq (0.01)^D \quad \forall \eta' \in \M \setminus \hat{S}(q).
\]
Hence,
\begin{align*}
    (\ref{eq:dom2}) &= \frac{(2\pi\sigma)^{-D/2}}{\Vol(\M)} \int_{\M \setminus \hat{S}(q)} \Big( \int_{S(q)} e^{-\|\eta'-\zeta'\|_2^2/2\sigma^2} d\zeta' \Big) d \mu_{\M}(\eta') \\
    & \leq \frac{\Vol(\M \setminus \hat{S}(q))}{\Vol(\M)} (0.01)^D \leq (0.01)^D.
\end{align*}
In summary, we have
\begin{align}
    \mathbb{P}(q) \leq \frac{1}{\Vol(\M)} \Big( 1+ \frac{C^2(r'+3\sigma\sqrt{D})^2}{\tau^2}\Big)^{d/2} + (0.01)^D
\end{align}
for any $ q \in T_{z^*}\M \cap B_D(z,r')$.

We consider only the lower bound for $q$ in a subset of $T_{z^*} \M \cap B_D(z,r')$, namely $T_{z^*} \M \cap B_D(z^*,r_0)$, where $r_0$ is set as
\begin{align}
    \begin{split}
    r_0 = \min \{
    & \sqrt{{r'}^2-\big( \frac{{r'}^2-\|z-z^*\|_2^2}{2(\tau-\|z-z^*\|_2)} + \|z-z^*\|_2 + 3\sigma\sqrt{D-d} \big)^2}, \\
    & \sqrt{{r'}^2 - \big( \frac{{r'}^2-\|z-z^*\|_2^2}{2(\tau - \|z-z^*\|_2)} + \|z-z^*\|_2 \big)^2} - 3\sigma\sqrt{d}\}
    \end{split}
\end{align}
For any $q \in T_{z^*} \M \cap B_D(z^*,r_0)$ and $\eta \in \M \cap B_D(z,r')$, we can verify the following conclusions via (\ref{neq:eta-Pi}):
\begin{itemize}
    \item[(i)] The $d$-dimensional cube
    \begin{align*}
     \{ q': q'(i)=0 &  \  \forall i \geq d+1, \  |q'(j)-q(j)| \leq 3 \sigma \ \forall j \leq d \} \\
    & \subset   B_D(q, 3\sigma\sqrt{d}) \cap T_{z^*} \M
        \subset \{\bar{\eta}': \eta' \in \M \cap B_D(z,r')\},
    \end{align*}
    \item[(ii)] The $(D-d)$-dimensional cube
    \begin{align*}
        \{\eta': \eta'(i)=0 \ \forall i \leq d, \ |\eta'(j)-\eta(j)|\leq 3\sigma \ \forall j \geq d+1\}
        \subset \{\hat{\zeta}':\zeta' \in S(q)\}.
    \end{align*}
\end{itemize}
Now, we are ready to bound $\mathbb{P}(q)$ below for any $q \in B_D(z^*,r_0) \cap T_{z^*} \M$.
\begin{align*}
    \mathbb{P}(q) &= \frac{(2\pi\sigma)^{-D/2}}{\Vol(\M)} \int_{S(q)} d\zeta' \int_{\M} e^{-\|\eta'-\zeta'\|_2^2/2\sigma^2} d\mu_{\M}(\eta') \\
    & \geq \frac{(2\pi\sigma)^{-D/2}}{\Vol(\M)} \int_{S(q)} d\zeta' \int_{\M \cap B_D(z,r')}  e^{-\|\bar{\eta}'-q\|_2^2/2\sigma^2} e^{-\|\hat{\eta}'-\hat{\zeta}'\|_2^2/2\sigma^2} d\mu_\M(\eta') \\
    & \geq \frac{(0.99)^{D-d}(2\pi\sigma^2)^{-d/2}}{\Vol(\M)} \int_{\M \cap B_D(z,r')}  e^{-\|\bar{\eta}'-q\|_2^2/2\sigma^2} d\mu_\M(\eta') \\
    & = \frac{(0.99)^{D-d}(2\pi\sigma^2)^{-d/2}}{\Vol(\M)} \int_{\mP_d(\M \cap B_D(z,r'))}  e^{-\|\bar{\eta}'-q\|_2^2/2\sigma^2} \sqrt{\det\big(I+J(\bar{\eta}')^TJ(\bar{\eta}') \big)}d\bar{\eta'} \\
    & = \frac{(0.99)^{D-d}(2\pi\sigma^2)^{-d/2}}{\Vol(\M)} \int_{\mP_d(\M \cap B_D(z,r'))}  e^{-\|\bar{\eta}'-q\|_2^2/2\sigma^2} d\bar{\eta'} \\
    & \geq \frac{(0.99)^D}{\Vol(\M)}
\end{align*}
The last but one inequality holds since $\sqrt{\det\big(I+J(\bar{\eta}')^TJ(\bar{\eta}') \big)} \geq 1$.

Since $\mu_d$ is the variance in the $d$-th direction, we have
\begin{align*}
    \mu_d & = \frac{1}{\int_{T_{z^*} \M \cap B_D(z,r')} \mathbb{P}(q') d\mathcal{L}_d(q')} \int_{ T_{z^*} \M \cap B_D(z,r')} q_d^2 \mathbb{P}(q) d \mathcal{L}_d(q)  \\
    & \geq \frac{\alpha}{{\rm Vol}(B_d(r'))}
     \int_{T_{z^*} \M \cap B_D(z^*,r_0) } q_d^2  d \mathcal{L}_d(q) \\
    & \geq  \frac{\alpha}{{\rm Vol}(B_d(r'))} \int_0^{r_0} \int_0^\pi \cdots \int_0^\pi \int_0^{2\pi} \Big( \ell \Pi_{j=1}^{d-1} \phi_j \Big)^2 dV \\
    & = \frac{\Gamma(d/2+1) \alpha}{\pi^{d/2}(r')^d} \int_0^{r_0} \int_0^\pi \cdots \int_0^\pi \int_0^{2\pi} \ell^{d+1} \prod_{j=1}^{d-1}\sin^{d-j+1} \phi_j d\ell \prod_{j=1}^{d-1}d\phi_j.
\end{align*}
where $\alpha$ is the ratio between the lower bound and upper bound of $\mathbb{P}(q)$, namely,
\[
\alpha =  \frac{(0.99)^D}{(1+\frac{C^2(r'+3\sigma\sqrt{D})^2}{\tau^2})^{d/2}+(0.01)^D \Vol(\M)},
\]
and the third line follows with a change of coordinates. Substitute
\begin{align*}
\Big\{ q_1 \to \ell\cos \phi_1, q_{2\leq i \leq d-1} \to \ell \cos \phi_i \Pi_{j=1}^T{i-1}\sin \phi_j, q_d \to \ell \sin \Pi_{j=1}^{d-1} \phi_{j} \Big\}
\end{align*}
with $\phi_{d-1}\in [0, 2\pi], \phi_{i\leq d-2} \in [0, \pi], \ell \in [0, r_0]$, and let
\[
dV := \ell^{d-1}\Pi_{j=1}^{d-2} \sin^{d-j-i} \phi_j d\ell d\phi_1 \cdots d\phi_{d-1}.
\]
The integral in the fourth line can be evaluated by noting that $\int_{0}^{r_0} \ell^{d+1}d\ell = {r_0}^{d+2}/(d+2)$, $\int_{0}^{2\pi} \sin^2 \phi_{d-1} d\phi_{d-1} = \pi$ and $\int_{0}^{\pi} \sin^{d-j+1}\phi_j d\phi_j = \frac{\sqrt{\pi}\Gamma((d-j+2)/2)}{\Gamma(1+(d-j+1)/2}$ for $1 \leq j \leq d-2$. Simplifying as \cite{mohammed2017manifold} did, we get
\[
\mu_d \geq \frac{\alpha}{d+2}\frac{{r_0}^{d+2}}{(r')^{d}} = c_0\frac{(0.99)^D}{d+2} \frac{(r_0)^{d+2}}{(r')^d}.
\]

According to Lemma \ref{lma:clt_lambda}, for any $\epsilon \in [0, 1/2]$, $\lambda_d \geq (1-\epsilon) \mu_d$ \YQ{with} probability $1-d\exp\{\frac{-\epsilon^2\mu_d |I_{z,r'}|}{2\ln 2}\}$. 
Taking $\epsilon = 1/2$, we have
\[
\lambda_d  \geq c_0\frac{(0.99)^D}{d+2} \frac{(r_0)^{d+2}}{(r')^d}.
\]
\YQ{with} probability $1-d\exp\{\frac{-\epsilon^2\mu_d |I_{z,r'}|}{2\ln 2}\}$. Using $r = O(\sqrt{\sigma})$ and $\|z-z^*\|\leq (1+c)r$, we can simplify $r_0$ and find $c_0$ satisfying $r_0 \geq c_0r$. Hence, there exists a constant $c$ independent on $r$ such that $\lambda_d \geq c r^2$, which completes this proof. 
\end{proof}

\subsection{Proof of Proposition \ref{prop:alpha_bound}, Lemma \ref{lma:xi} and Lemma \ref{lma:Pi_ij} }\label{proof:xi_e_var}

\begin{proof} {\bf of Proposition \ref{prop:alpha_bound}}
To show that $\tilde{\alpha}(x)$ is bounded below by $c_0|I_{x,r}|$ is equivalent to showing that there exists constant $c_1>c$ and $c_2$ such that among the $|I_{x,r}|$ samples there are $c_2|I_{x,r}|$ ones lying in $B_D(x,c_1r)$, where $c_0$ in the lower bound is $c_0 = c_2(1-c_1^2)^{\beta}$. To quantify the number of samples $\{x_i\}_{i \in I_{x,r}}$ lying in $B_D(x,c_1r)$, we bound the conditional probability $\mathbb{P}(\|x_i-x\|_2\leq c_1r | i \in I_{x,r})$ below by calculating the lower bound of $\mathbb{P}(\|x_i-x\|_2\leq c_1r)$ and the upper bound of $\mathbb{P}(i \in I_{x,r})$, respectively.

By Lemma \ref{lma:prob_in_I}, we have $\mathbb{P}(i \in I_{x, c_1r}) \geq c_3r^d$. For the probability $ \mathbb{P}(i \in I_{x,r})$, we have
\begin{align} \label{probability:I_x}
\begin{split}
 \mathbb{P}(i \in I_{x,r})
 & = \mathbb{P}(\|x_i-x\|_2 \leq r, \ y_i \in \M \setminus B_D(x, Cr)) \\
 & + \mathbb{P}(\|x_i-x\|_2 \leq r, \ y_i \in \M \cap B_D(x, Cr)),
 \end{split}
\end{align}
where
\begin{align*}
\mathbb{P}(\|x_i-x\|_2 \leq r, \ y_i \in \M \cap B_D(x, Cr))
& \leq  \mathbb{P}(y_i \in \M \cap B_D(x, Cr)) \\
& = \frac{{\rm Vol}(\M \cap B_D(x, C_0r))}{{\rm Vol}(\M)} = C
r^d,
\end{align*}
and
\begin{align*}
\mathbb{P}(\|x_i-x\|_2 \leq r, \ y_i \in \M \setminus B_D(x, Cr))
& \leq \mathbb{P}(\|\xi_i\|_2\geq (C-1)r) \\
& \leq \frac{C_1}{r}e^{-\frac{C_2}{r^2}} \leq Cr^d,
\end{align*}
where the second-last inequality holds by Chernoff bound, and the last inequality holds since $r = O(\sqrt{\sigma})$ is sufficiently small. Plugging the above bounds into (\ref{probability:I_x}), we obtain
\[
\mathbb{P}(i \in I_{x,r}) \leq Cr^d.
\]
Hence, for any $i \in I_{x,r}$, we have $\|x_i-x\|_1 \leq c_1r$ \YQ{with} probability $\rho = (cr^d)/(Cr^d) < 1$ for being a constant independent on $r$.

Applying the Berry-Esseen theorem to the $|I_{x,r}|$ Bernoulli trials, we conclude that there exists $c_2|I_{x,r}|$ $i'$ in $I_{x,r}$ such that $\|x_i-x\|_2\leq c_1 r$ \YQ{with} probability $1 - C/\sqrt{|I_{x,r}|}$, which proves (i).

To show (ii), we recall Lemma \ref{lma:prob_in_I} that $\mathbb{P}(i \in I_{x,c_1r}) \geq cr^d$. Thus there is a sample among $N$ samples lying in $B_D(x,c_1r)$ \YQ{with} probability
\[
1 - (1 - cr^d)^N = O(Nr^d).
\]
Then, $\tilde{\alpha}(x) \geq (1-c_1^2)^{\beta} := c_0'$ with the same probability.
\end{proof}

\begin{lemma}\label{lma:xi_e_var}
Suppose $\xi \sim N(0, \sigma^2 I_D)$; then we have, for any positive integer $k$:
\begin{itemize}
    \item[(i)] $\mathbb{E}(\|\xi\|_2^k) = C_1\sigma^k$
    \item[(ii)] ${\rm Var}(\|\xi\|_2^k) = C_2 \sigma^{2k} $
    \item[(iii)] $\mathbb{E}[\big(\|\xi\|_2^k- \mathbb{E}(\|\xi\|_2^k) \big)^3] =C_3\sigma^{3k}$
    \item[(iv)] $\|\xi_i\|_2^k$ and $\|\xi_j\|_2^k$ are independent if $\xi_i$ and $\xi_j$ are independent,
    \item[(v)] $\mathbb{E}(\|\xi_i\|_2^s\|\xi_j\|_2^t) = C_4\sigma^{s+t}$
     \item[(vi)] ${\rm Var}(\|\xi_i\|_2^s\|\xi_j\|_2^t) = C_5\sigma^{2(s+t)}$
     \item[(vii)] $\mathbb{E}[\big(\|\xi_i\|_2^s\|\xi_j\|_2^t- \mathbb{E}(\|\xi_i\|_2^s\|\xi_j\|_2^t) \big)^3] = C_6\sigma^{3(s+t)}$
\end{itemize}
where $C_n$, $n = 1, \cdots, 6$ are constants depending on $D$ and $k$.
\end{lemma}
\begin{proof}
Letting the $i$-th element of $\xi$ be denoted by $\xi^{(i)}$, we have the following qualities:
\begin{align*}
 \E(\|\xi\|_2^k) & = \frac{1}{(2\pi \sigma^2)^{\frac{D}{2}}}\int_{-\infty}^{+\infty} \! \cdots \! \int_{-\infty}^{+\infty}
    (\sum_{i=1}^D {\xi^{(i)}}^2)^{k/2} e^{-\frac{\sum_{i=1}^D {\xi^{(i)}}^2}{2\sigma^2}}
    d\xi^{(1)} \! \cdots \! d\xi^{(D)} \\
    & = \frac{1}{(2\pi \sigma^2)^{\frac{D}{2}}} \int_{r=0}^{+\infty}
    r^k e^{-r^2/(2\sigma^2)} S_D(r) dr \\
    & = \frac{2\pi^{\frac{D}{2}}}{(2\pi \sigma^2)^{\frac{D}{2}} \Gamma(\frac{D}{2})} \int_{r=0}^{+\infty}
    r^{D+k-1} e^{-r^2/(2\sigma^2)} dr \\
    & = \frac{\pi^{\frac{D}{2}}}{(2\pi \sigma^2)^{\frac{D}{2}} \Gamma(\frac{D}{2})} \int_{r=0}^{+\infty}
    r^{D+k-2} e^{-r^2/(2\sigma^2)} dr^2 \\
    & = \frac{\pi^{\frac{D}{2}} (2\sigma^2)^{\frac{D+k}{2}}}{(2\pi \sigma^2)^{\frac{D}{2}} \Gamma(\frac{D}{2})} \int_{z=0}^{+\infty}
    (\frac{z}{2\sigma^2})^{\frac{D+k}{2}-1} e^{-z/(2\sigma^2)} d\frac{z}{2\sigma^2} \\
    & = \frac{2^{k/2}\sigma^k}{\Gamma(\frac{D}{2})} \int_{z=0}^{+\infty}
    z^{\frac{D+k}{2}-1} e^{-z} dz = \frac{2^{k/2}\Gamma(\frac{D+k}{2})}{\Gamma(\frac{D}{2})} \sigma^k
\end{align*}
where $\Gamma(t)=\int_{0}^{+\infty} s^{t-1}e^{-s}ds$ is the Gamma function. Plugging the above equality into ${\rm Var}(\|\xi\|_2^k) = \mathbb{E}(\|\xi\|_2^{2k}) - \mathbb{E}(\|\xi\|_2^k)^2$, and
\begin{align*}
\mathbb{E}[\big(\|\xi\|_2^k- \mathbb{E}(\|\xi\|_2^k) \big)^3] = & \mathbb{E}(\|\xi\|_2^{3k}) - 3  \mathbb{E}(\|\xi\|_2^{2k}) \mathbb{E}(\|\xi\|_2^k) \\
& + 3 \mathbb{E}(\|\xi\|_2^k) \mathbb{E}(\|\xi\|_2^k)^2 -  \mathbb{E}(\|\xi\|_2^k)^3,
\end{align*}
will yield the variance and third moment.

To show the independence, we set $F_X$ as the cumulative distribution function of $X$,  $S_t(\zeta) = \{\xi_t:\|\xi_t\|_2^k \leq \zeta \}$ and $\eta_t = \|\xi_t\|_2^k$ with $t = i,j$. Then
\begin{align*}
F_{\eta_i, \eta_j}(\zeta_i, \zeta_j)
& = P(\eta_i \leq \zeta_i, \eta_j \leq \zeta_j) \\
& = P(\xi_i \in S_i(\zeta_i), \xi_j \in S_j(\zeta_j)) \\
& = P(\xi_i \in S_i(\zeta_i))P(\xi_j \in S_j(\zeta_j)) \\
& = P(\eta_i \leq \zeta_i)P(\eta_j \leq \zeta_j) \\
& = F_{\eta_i}(\zeta_i)F_{\eta_j}(\zeta_j),
\end{align*}
which completes the proof of independence by definition. Based on the independence, we obtain
\begin{align*}
\mathbb{E}(\|\xi_i\|_2^s \|\xi_j\|_2^t) = \mathbb{E}(\|\xi_i\|_2^s)\mathbb{E}(\|\xi_i\|_2^s) = (C_1 \sigma^s) \times C_1 (\sigma^t) = C_4 \sigma^{s+t}.
\end{align*}
Plugging the above equality into 
\begin{align*}
{\rm Var}(\|\xi_i\|_2^s \|\xi_j\|_2^t) = \mathbb{E}(\|\xi_i\|_2^{2s} \|\xi_j\|_2^{2t}) - \mathbb{E}(\|\xi_i\|_2^s \|\xi_j\|_2^t)^2
\end{align*}
and
\begin{align*}
\mathbb{E}[\big(\|\xi_i\|_2^s \|\xi_j\|_2^t- \mathbb{E}(\|\xi_i\|_2^s \|\xi_j\|_2^t) \big)^3] = & \mathbb{E}(\|\xi_i\|_2^{3s} \|\xi_j\|_2^{3t}) - 3  \mathbb{E}(\|\xi_i\|_2^{2s} \|\xi_j\|_2^{2t}) \mathbb{E}(\|\xi_i\|_2^s \|\xi_j\|_2^t) \\
& + 3 \mathbb{E}(\|\xi_i\|_2^s \|\xi_j\|_2^t) \mathbb{E}(\|\xi_i\|_2^s \|\xi_j\|_2^t)^2 -  \mathbb{E}(\|\xi_i\|_2^s \|\xi_j\|_2^t)^3,
\end{align*}
will produce the variance and the third moment.
\end{proof}

\begin{prop}\label{prop:xi_e_var}
Suppose $\{\xi_i\}_{i=1}^n$ are i.i.d. drawn from $N(0, \sigma^2I_D)$, $\sum_{i=1}^n \alpha_i = 1$ and $\max_{i \in \{1, \cdots, n\}} \alpha_i \leq C_\alpha/n$ with certain constant $C_\alpha$. For any $\delta$, there exist constants $C$, depending on  $D$, $k$, $\delta$, and $n_1$, depending on $\delta$ and $C_\alpha$ such that if $n \geq n_1$, then
\begin{align*}
    \sum_{i=1}^n \alpha_i \|\xi_i\|_2^k \leq C \sigma^k 
    \quad {\rm and} \quad
     \frac{1}{n^2} \sum_{i=1}^n \sum_{j=1}^n  \|\xi_i\|_2^s\|\xi_j\|_2^t \leq C \sigma^{s+t}
\end{align*}
holds for $k, s, t \leq 4$ \YQ{with} probability at least $1-\delta$.
\end{prop}
\begin{proof} 
By Lemma \ref{lma:xi_e_var}, $\|\xi_1\|_2^k, \cdots, \|\xi_n\|_2^k$ are i.i.d. random variables drawn from a distribution whose expectation is $\mathbb{E}(\|\xi\|_2^k)$ and variance is ${\rm Var}(\|\xi\|_2^k)$.
Using the Berry-Esseen Theorem, the cumulative distribution function of the variable
\[
 (\sum_{i=1}^n \alpha_i \|\xi_i\|_2^k-\mathbb{E}(\|\xi\|_2^k)) \ / \ (C_2^{1/2}\sigma^k\sqrt{\sum_{i=1}^n \alpha_i^2})
\]
denoted by $F_n$ satisfies
\[
|F_n(t) - \Phi(t)| \leq \frac{C_0 \rho \sum_{i=1}^n \alpha_i^3}{\sigma^{3k} (\sum_{i=1}^n \alpha_i^2)^{3/2}} = C_0' \frac{\sum_{i=1}^n \alpha_i^3}{(\sum_{i=1}^n \alpha_i^2)^{3/2}} 
\leq C_0' \frac{\sum_{i=1}^n \alpha_i^3}{\Big(\frac{1}{n}(\sum_{i=1}^n \alpha_i )^2\Big)^{3/2}} =  C_0' n^{\frac{3}{2}} \sum_{i=1}^n \alpha_i^3
\]
where $\Phi$ is the cumulative distribution function of standard normal distribution, $\rho$ is the third moment of $\|\xi\|_2^k$, which is in the order of $\sigma^{3k}$ according to Lemma \ref{lma:xi_e_var}(iii), and the last inequality holds in accordance with Cauchy's inequality.

Since $\alpha_i \leq C_\alpha/n$, we obtain
$\sum_{i=1}^n \alpha_i^3 \leq n (\frac{C_\alpha}{n})^3 = C_\alpha^3 n^{-2}$ and therefore $|F_n(t)-\Phi(t)| \leq C'/\sqrt{n}$. So there exists a constant $C$ depending on $D, k$, and $\delta$ such that
\begin{align*}
    \sum_{i=1}^n \alpha_i \|\xi_i\|_2^k \leq C \sigma^k,
\end{align*}
\YQ{with} probability $1-\frac{\delta}{2}-C'/\sqrt{n}$. Taking $n_1 = \frac{4C'}{\delta^2}$, $\sum_{i=1}^n \alpha_i \|\xi_i\|_2^k \leq C \sigma^k$ \YQ{with} probability at least $1-\delta$ when $n \geq n_1$. Analogously, there exist $C$ and $n_0$ such that
\begin{align*}
    \frac{1}{n^2}\sum_{i=1}^n \sum_{j=1}^n \|\xi_i\|_2^s\|\xi_i\|_2^t \leq C \sigma^{s+t}.
\end{align*}
\YQ{with} probability at least $1-\delta$ when $n  \geq n_1$.
\end{proof}

\begin{proof}{\bf of Lemma \ref{lma:xi}}
\YQ{By (\ref{ineq:bound_r}), we have $r < C_1$.}
For any given $\delta$, let
\[
n_0 = \max \Big\{ \frac{4C^2C_1^{2d}}{\delta^2}, \frac{\max\{n_1, \frac{4C_0^2}{\delta^2}\}}{c'} \Big\},
\]
where $C$ and $c'$ are the two constants in Proposition \ref{prop:samplesize}, $n_1$ is the constant in Proposition \ref{prop:xi_e_var}, and $C_0$ is the constant in Proposition \ref{prop:alpha_bound}. Plugging $N \geq n_0r^{-d}$ into Proposition \ref{prop:samplesize}, we obtain $|I_{x,r}| \geq \max \{n_1, \frac{4C_0^2}{\delta^2}\}$ \YQ{with} probability at least $1-\frac{\delta}{2}$.
Recalling Proposition \ref{prop:alpha_bound} (i) and the definition of $\alpha_i$ in (\ref{def:alpha}), $\alpha_i \leq \frac{1}{\tilde{\alpha}} \leq \frac{C_\alpha}{|I_{x,r}|}$ \YQ{with} probability at least $1-\frac{\delta}{2}$ and $C_\alpha = \frac{1}{c_0}$ since $1 - \frac{C_0}{\sqrt{|I_{x,r}|}} \geq 1-\frac{\delta}{2}$ by $|I_{x,r}| \geq \frac{4C_0^2}{\delta^2}$. As a result, conditions of Proposition \ref{prop:xi_e_var} hold \YQ{with} probability at least $(1-\frac{\delta}{2})^2 \geq 1-\delta$. Using Proposition \ref{prop:samplesize}, we are able to complete the proof.
\end{proof}

\begin{proof}{\bf of Lemma \ref{lma:Pi_ij}}
Corollary 12 of \cite{boissonnat2018reach} shows that
\[
\Big\| \sin \frac{\theta(U_x,U_y)}{2} \Big\|_2 \leq \frac{\|x-y\|_2}{2 {\rm reach}(\M)},
\]
where $U_x$ and $U_y$ are the basis of $T_x\M$ and $T_y\M$, respectively. Letting the orthogonal complements of $U_x$ and $U_y$ be denoted by $V_x$ and $V_y$, respectively, we obtain $\YQ{\Pi_x} = V_xV_x^T$ and $\YQ{\Pi_y} = V_yV_y^T$. Then, in accordance with (ii) of Lemma \ref{lma:Matrix},
\begin{align*}
\|\YQ{\Pi_x} - \YQ{\Pi_y}\|_F
& = \|V_xV_x^T - V_yV_y^T\|_F = \|U_xU_x - U_yU_y\|_F \leq C \Big\| \sin \theta(U_x,U_y) \Big\|_2 \\
& \leq 2C \Big\| \sin \frac{\theta(U_x,U_y)}{2} \Big\|_2 \leq C \frac{\|x-y\|_2}{\tau}.
\end{align*}
\end{proof}

\subsection{Proof of Theorem \ref{thm:first_der_f}} \label{app:first_der_f}

To prove Theorem \ref{thm:first_der_f}, we first introduce two lemmas, namely Lemma \ref{lma:Pi_x_2norm} and Lemma \ref{lma:first_der_alpha}.
\begin{lemma} \label{lma:Pi_x_2norm}
Suppose $d(x,\M) \leq cr$ with some constant $c<1$ and $r = O(\sqrt{\sigma})$. For any given $\delta$, there exist constants  $C$ and $n_0$ such that if $N \geq n_0r^{-d}$, then the following inequalities hold:
\begin{itemize}
    \item[(i)] $\|\big(\|P_{x_i}-\Pi_{x_i^*}\|_2\big)_{i \in I_{x,r}}\|_2 \leq Cr|I_{x,r}|^{\frac{1}{2}}$ \YQ{with} probability $\delta_0(1-\delta)^2$,
    \item[(ii)] $\| \big( \|x_i - x_i^*\|_2 \big)_{I_{x,r}} \|_2 \leq Cr^2|I_{x,r}|^{\frac{1}{2}}$ \YQ{with} probability $1 - \delta$,
    \item[(iii)] $\| \big( \|x_i^* - x^*\|_2 \big)_{I_{x,r}} \|_2 \leq Cr|I_{x,r}|^{\frac{1}{2}}$ \YQ{with} probability $1 - \delta$.
\end{itemize}
\end{lemma}
\begin{proof}
We begin with (i). \YQ{Plugging $z = x_i$ into Theorem \ref{thm:P_z}, we obtain}
\begin{align*}
\Big\|\big(\|P_{x_i}-\Pi_{x_i^*}\|_2\big)_{i \in I_{x,r}} \Big\|_2^2 \leq \sum_{i \in I_{x,r}} (A^2 + 2*AB + B^2 ) {\mbox \ \YQ{with} \ probability} \ \delta_0, 
\end{align*}
where $A =  \frac{C}{r^2} \frac{1}{|I_{x_i,2r}|}  \sum_{j \in I_{x_i,2r}}   \big( \|\xi_j\|_2^4 + \|\xi_j\|_2^3 + \|\xi_j\|_2^2 + r\|\xi_j\|_2 \big) $ and $B =  C(r + \frac{ \|\xi_i \|_2}{r} + \frac{ \|\xi_i\|_2^2}{r^2} $). In accordance with Lemma \ref{lma:xi}, there exist $C$ and $n_0$ such that 
\begin{align*}
\sum_{i \in I_{x,r}} AB 
& = \frac{C|I_{x,r}|}{r} \frac{1}{|I_{x,r}|\times|I_{x_i,2r}|} \sum_{i \in I_{x,r}}\sum_{j \in I_{x_i, 2r}} \big( \|\xi_j\|^4+\|\xi_j\|^3+\|\xi_j\|^2+r\|\xi_j\| \big) \\
& + \frac{C|I_{x,r}|}{r^3} \frac{1}{|I_{x,r}|\times|I_{x_i,2r}|}\sum_{i \in I_{x,r}} \sum_{j \in I_{x_i, 2r}}\big( \|\xi_j\|^4\|\xi_i\|+\|\xi_j\|^3\|\xi_i\|+\|\xi_j\|^2\|\xi_i\|+r\|\xi_j\|\|\xi_i\|  \big) \\
& + \frac{C|I_{x,r}|}{r^4} \frac{1}{|I_{x,r}|\times|I_{x_i,2r}|}\sum_{i \in I_{x,r}} \sum_{j \in I_{x_i, 2r}} \big( \|\xi_j\|^4\|\xi_i\|^2+\|\xi_j\|^3\|\xi_i\|^2+\|\xi_j\|^2\|\xi_i\|^2+r\|\xi_j\|\|\xi_i\|^2  \big)  \\
& \leq C |I_{x,r}| (r^2+r^2+r^3) \leq Cr^2|I_{x,r}|, 
\end{align*}
\begin{align*}
A^2 & = \frac{C}{r^4} \frac{1}{|I_{x_i,2r}|^2} \sum_{j, k \in I_{x_i,2r}}  \big(\|\xi_j\|^4\|\xi_k\|^4+\|\xi_j\|^4\|\xi_k\|^3+  \cdots + r^2\|\xi_j\|\|\xi_k\| \big) \\
& \leq \frac{C}{r^4} \big( \sum_{k=4}^8 \sigma^k + r\sum_{k=3}^5 \sigma^k + r^2 \sigma^2 \big) \leq Cr^2,
\end{align*}
\begin{align*}
\sum_{i \in I_{x,r}} B^2 = \sum_{i \in I_{x,r}} \Big( r^2+ \frac{\|\xi_i\|_2^2}{r^2}+\frac{\|\xi_i\|_2^4}{r^4} + 2\|\xi_i\|_2+2\frac{\|\xi_i\|_2^2}{r}+2\frac{\|\xi_i\|_2^3}{r^3} \Big) \leq Cr^2|I_{x,r}|,\end{align*}
\YQ{with} probability $1-\delta/3$ respectively. The above bounds amount to $\Big\|\big(\|P_{x_i}-\Pi_{x_i^*}\|_2\big)_{i \in I_{x,r}}\Big\|_2^2 \leq Cr^2|I_{x,r}|$, which leads to
\[
\Big\|\big(\|P_{x_i}-\Pi_{x_i^*}\|_2\big)_{i \in I_{x,r}} \Big\|_2 \leq Cr|I_{x,r}|^{\frac{1}{2}},  {\mbox \ with \ probability} \ \delta_0(1-\delta)^2.
\]
As for (ii),
\begin{align*}
\Big\| \big( \|x_i - x_i^*\|_2 \big)_{I_{x,r}} \Big\|_2^2 = \sum_{i \in I_{x,r}} \|x_i - x_i^*\|_2^2 \leq \sum_{i \in I_{x,r}} \|\xi_i \|_2^2 \leq |C |I_{x,r}| \sigma^2,
\end{align*}
\YQ{with} probability $1-\delta$, which implies $\big\| \big( \|x_i - x_i^*\|_2 \big)_{I_{x,r}} \big\|_2\leq C |I_{x,r}|^{\frac{1}{2}} \sigma = Cr^2 |I_{x,r}|^{\frac{1}{2}}$.
We derive (iii) based on
\[
\|x_i^* - x^*\|_2 \leq \|x_i^*-x_i\|_2 + \|x_i-x\|_2 + \|x - x^*\|_2 \leq \|\xi_i\|_2 + 2r.
\]
Thus we have
\begin{align*}
\Big\| \big( \|x_i^*-x^*\|_2 \big)_{I_{x,r}} \Big\|_2^2 \leq  \sum_{i \in I_{x,r}} \big( \|\xi_i \|_2^2 + 4r^2 + 2r \|\xi_i\|_2 \big) \leq C |I_{x,r}|\big( \sigma^2 + 4r^2 + 2 r \sigma\big) \leq Cr^2 |I_{x,r}|
\end{align*}
\YQ{with} probability $1-\delta$, which implies $\Big\| \big( \|x_i^* - x^*\|_2 \big)_{I_{x,r}} \Big\|_2\leq Cr |I_{x,r}|^{\frac{1}{2}}$.
\end{proof}

\begin{lemma}\label{lma:first_der_alpha}
Suppose $d(x,\M) \leq cr$ with some constant $c<1$, $r = O(\sqrt{\sigma})$ and $\beta \geq 2$. For any given $\delta$, there exist constants $C$ and $n_0$ such that if $N \geq n_0r^{-d}$, then
 \[
    \|\big(\partial_v \alpha_i(x)\big)_{i \in I_{x,r}}\|_2 \leq \frac{C}{r} |I_{x,r}|^{-\frac{1}{2}} \ {\mbox with \  probability} \ 1 - \delta.
 \]
\end{lemma}
\begin{proof}
By Lemma \ref{lma:xi}, $\tilde{\alpha}(x) \geq c_0|I_{x,r}|$ \YQ{with} probability at least $1-\delta$. Based on this, we obtain the following inequalities given $0 \leq \tilde{\alpha}_i(x) \leq 1$:
\begin{align*}
\Big\|\big(\partial_v \alpha_i(x)\big)_{i \in I_{x,r}} \Big\|_{2}
& \leq \Big\| \Big( \frac{\partial_v \tilde \alpha_i(x)}{\tilde \alpha(x)} \Big)_{i \in I_{x,r}} \Big\|_{2}
+ \Big\| \Big( \frac{(\partial_v \tilde{\alpha}(x))\tilde{\alpha}_i(x)}{\tilde{\alpha}^2(x)} \Big)_{i \in I_{x,r}} \Big\|_{2} \\
& \leq \frac{C}{r} \Big\|\Big( \frac{\tilde{\alpha}_i(x)^{\frac{\beta-1}{\beta}}}{\tilde{\alpha}(x)}\Big)_{i \in I_{x,r}} \Big\|_{2}
+ \Big|\frac{\partial_v \tilde{\alpha}(x)}{\tilde{\alpha}^2(x)} \Big|
\big\|\big(\tilde{\alpha}_i(x)\big)_{i \in I_{x,r}} \big\|_{2} \\
& \leq \frac{C}{r} \Big\|\Big( \frac{1}{\tilde{\alpha}(x)}\Big)_{i \in I_{x,r}} \Big\|_{2}
 + |\frac{\partial_v \tilde{\alpha}(x)}{\tilde{\alpha}^2(x)}| \|(1)_{i \in I_{x,r}}\|_2\\
& \leq \frac{C}{r}|I_{x,r}|^{-\frac{1}{2}} +
\frac{C}{r} |I_{x,r}|^{\frac{1}{2}} \frac{\sum_{i \in I_{x,r}} \tilde{\alpha}_i(x)^{\frac{\beta-1}{\beta}}}{\tilde{\alpha}^2(x)} \\
& \leq \frac{C}{r}|I_{x,r}|^{-\frac{1}{2}} + \frac{C}{r} |I_{x,r}|^{\frac{1}{2}} \frac{|I_{x,r}|}{|I_{x,r}|^2} \leq \frac{C}{r} |I_{x,r}|^{-\frac{1}{2}}.
\end{align*}
\end{proof}

\begin{proof}{\bf of Theorem\ref{thm:first_der_f}}

We rewrite (\ref{fun:out_ours}) as
\begin{align}\label{eq:equivalent_fx}
    f(x) = \YQ{\Psi_x^\alpha}\sum_{i \in I_{x,r}} \alpha_i(x)(x-x_i),
\end{align}
and calculate the first derivative of $f(x)$ as
\begin{align}
\begin{split} \label{eq:first_der}
    \partial_v f(x)
    & = \sum_{i \in I_{x,r}} \alpha_i(x)\YQ{\Psi_x^\alpha}\big(\partial_v(x-x_i)\big) \\
    & + \sum_{i \in I_{x,r}} \alpha_i(x)(\partial_v \YQ{\Psi_x^\alpha})(x-x_i) \\
    & + \sum_{i \in I_{x,r}}(\partial_v \alpha_i(x))\YQ{\Psi_x^\alpha}(x-x_i).
\end{split}
\end{align}
We deal with the three terms one by one. First,
\[
    \sum_{i \in I_{x,r}} \alpha_i(x)\YQ{\Psi_x^\alpha}\big((\partial_v(x-x_i)\big) = \sum_{i \in I_{x,r}}\alpha_i(x) \YQ{\Psi_x^\alpha} v = \YQ{\Psi_x^\alpha} v.
\]
To bound the second term of (\ref{eq:first_der}), we proceed to bound $\|\partial_v \YQ{\Psi_x^\alpha}\|_2$. In accordance with (26) of \cite{pmlr-v75-fefferman18a}, we establish the relationship between $\|\partial_v\YQ{\Psi_x^\alpha}\|_2$ and $\|\partial_v A_x\|_2$ as follows:
\begin{align*}
\| \partial_v \YQ{\Psi_x^\alpha}\|_2 & \leq 8 \|\partial_v A_x\|_2 \\
& = C \Big\|\sum_i \partial_v \alpha_i(x) \big( (P_{x_i}-\YQ{\Pi_{x_i^*}}) + (\YQ{\Pi_{x_i^*}}-\YQ{\Pi_{x^*}}) \big) + \YQ{\Pi_{x^*}}\big(\partial_v \sum_{i} \alpha_i(x)\big) \Big\|_2 \\
& \leq C\sum_{i} |\partial_v \alpha_i(x)| \|P_{x_i} - \YQ{\Pi_{x_i^*}}\|_2 + \frac{C}{\tau}\sum_{i} |\partial_v \alpha_i(x)|\|x_i^*-x^*\|_2+ 0\\
&
\leq C \Big\|\big(\partial_v \alpha_i(x)\big)_{i \in I_{x,r}} \Big\|_2
\Big\|\big(\|P_{x_i}-\Pi_{x_i^*}\|_2\big)_{i \in I_{x,r}} \Big\|_2 \\
&
 + \frac{C}{\tau} \Big\|\big(\partial_v \alpha_i(x)\big)_{i \in I_{x,r}} \Big\|_2  \Big\|\big(\|x_i^*-x^*\|_2\big)_{i \in I_{x,r}} \Big\|_2  \\
&
\leq C r \Big\|\big(\partial_v \alpha_i(x)\big)_{i \in I_{x,r}} \Big\|_2 |I_{x,r}|^{\frac{1}{2}},
\end{align*}
where 
the second to the last inequality holds by Cauchy-Schwarz inequality, and the last inequality holds by Lemma \ref{lma:Pi_x_2norm} and Lemma \ref{lma:first_der_alpha}. As a result,
\begin{align}\label{bound:partial_Pi_x}
\| \partial_v \YQ{\Psi_x^\alpha}\|_2 \leq 8\|\partial_v A_x\|_2 \leq C.
\end{align}
Therefore, the second term of (\ref{eq:first_der}) is bounded as
\[
\Big\|\sum_{i\in I_{x,r}} \alpha_i(x)(\partial_v \YQ{\Psi_x^\alpha})(x-x_i) \Big\|_2 \leq \sum_{i\in I_{x,r}} \alpha_i(x) \|\partial_v \YQ{\Psi_x^\alpha}\|_2 \|x-x_i\|_2 \leq \sum_{i\in I_{x,r}} \alpha_i(x) C r = Cr.
\]
As for the last term in (\ref{eq:first_der}), we have
\begin{align*}
\Big\|\sum_{i\in I_{x,r}} \partial_v \alpha_i(x) \YQ{\Psi_x^\alpha} (x - x_i) \Big\|_2
& \leq \Big\|\sum_{i\in I_{x,r}} \partial_v \alpha_i(x) \YQ{\Psi_x^\alpha} (x^*-x_i^*) \Big\|_2 \\
& +  \Big\| \big(\YQ{\Psi_x^\alpha} (x-x^*) \big)\sum_{i\in I_{x,r}} \partial_v \alpha_i(x) \Big\|_2 \\
& + \Big\| \sum_{i\in I_{x,r}} \partial_v \alpha_i(x) \YQ{\Psi_x^\alpha}(x_i^*-x_i) \Big\|_2  \\
& = \Big\|\sum_{i\in I_{x,r}} \partial_v \alpha_i(x) \YQ{\Psi_x^\alpha} (x^*-x_i^*) \Big\|_2+  0  \\
& + \Big\| \sum_{i\in I_{x,r}} \partial_v \alpha_i(x) \YQ{\Psi_x^\alpha}(x_i^*-x_i) \Big\|_2,
\end{align*}
where
\begin{align*}
\Big\| \sum_{i \in I_{x,r}} \partial_v \alpha_i(x) \YQ{\Psi_x^\alpha}(x^* - x_i^*) \Big\|_2
& \leq \sum_{i \in I_{x,r}} |\partial_v \alpha_i(x)| \|\YQ{\Psi_x^\alpha}(x_i^*-x^*)\|_2 \\
& \leq \Big\| \big( \partial_v \alpha_i(x) \big)_{i \in I_{x,r}}  \Big\|_{2} \Big\|\big(\|\YQ{\Psi_x^\alpha}(x_i^*-x^*)\|\big)_{i \in I_{x,r}} \Big\|_2
\leq Cr
\end{align*}
and
\begin{align*}
\Big\| \sum_{i\in I_{x,r}} \partial_v \alpha_i(x) \YQ{\Psi_x^\alpha}(x_i^*-x_i) \Big\|_2
& \leq \sum_{i \in I_{x,r}} |\partial_v \alpha_i(x)| \| \YQ{\Psi_x^\alpha}(x_i^*-x_i)\|_2\\
& \leq  \Big\| \big( \partial_v \alpha_i(x) \big)_{i \in I_{x,r}} \Big\|_{2}  \Big\|\big(\|\YQ{\Psi_x^\alpha}(x_i^*-x_i)\|\big)_{i \in I_{x,r}} \Big\|_2 \leq Cr
\end{align*}
based on
\begin{align*}
\|\YQ{\Psi_x^\alpha}(x_i^* - x^*)\|_2 & \leq \|\YQ{\Psi_x^\alpha} - \YQ{\Pi_{x^*}}\|_2\|x_i^*-x^*\|_2 + \|\YQ{\Pi_{x^*}}(x_i^*-x^*)\|_2 \\
& \leq Cr^2  + C \frac{\|x_i^*-x^*\|_2^2}{\tau}\leq C r^2,
\end{align*}
where the second inequality holds in probability via Theorem \ref{thm:bound_Pix} and Proposition \ref{prop:reach}. The above bounds amount to the bound on the first derivative, that is, $\| \partial_v f(x) - \YQ{\Psi_x^\alpha}  v\|_2 \leq C r$.

\YQ{The above proof is based on Lemma \ref{lma:Pi_x_2norm}, Lemma \ref{lma:first_der_alpha} and Theorem \ref{thm:bound_Pix}, which are valid when Lemma \ref{lma:xi} and Theorem \ref{thm:P_z} hold. Hence, the conclusion obtained from the above proof is valid when Lemma \ref{lma:xi} and Theorem \ref{thm:P_z} simultaneously hold, whose probability is at least $\delta_0(1-\delta)^2$.}
\end{proof}

\subsection{Proof of  Theorem \ref{thm:second_der_f}} \label{app:second_der_f}

To prove Theorem \ref{thm:second_der_f}, we first introduce Lemma \ref{lma:der2_alpha}.

\begin{lemma}\label{lma:der2_alpha}
Suppose $d(x,\M) \leq cr$ with some constant $c<1$, $r = O(\sqrt{\sigma})$ and $\beta \geq 2$. For any given $\delta$, there exist constants $C$ and $n_0$ such that if $N \geq n_0r^{-d}$, then
\begin{align*}
\Big\|\big(\partial_v \partial_u \alpha_i(x)\big)_{i \in I_{x,r}} \Big\|_2 \leq \frac{C}{r^2} |I_{x,r}|^{-\frac{1}{2}} \ {\mbox with \  probability} \ 1 - \delta.
\end{align*}
\end{lemma}
\begin{proof}
\begin{align*}
\Big\|\big(\partial_v \partial_u \alpha_i(x) \big)_{i \in I_{x,r}} \Big\|_{2}
& \leq \Big\|\Big(\frac{\partial_v \partial_u \tilde{\alpha}_i(x)}{\tilde{\alpha}(x)} \Big)_{i \in I_{x,r}} \Big\|_{2}
+  \Big\|\Big( \frac{\partial_v\partial_u \tilde{\alpha}(x)}{\tilde{\alpha}^2(x)} \tilde{\alpha}_i(x) \Big)_{i \in I_{x,r} }\Big\|_{2} \\
& + \Big\| \Big( \frac{(\partial_v\tilde{\alpha}_i(x))(\partial_u \tilde{\alpha}(x))}{\tilde{\alpha}^2(x)} \Big)_{i \in I_{x,r}} \Big\|_{2}
+ \Big\| \Big( \frac{(\partial_u\tilde{\alpha}_i(x))(\partial_v \tilde{\alpha}(x))}{\tilde{\alpha}^2(x)} \Big)_{i \in I_{x,r}} \Big\|_{2} \\
& + 2\Big\|\Big( \big(\frac{\partial_v \tilde{\alpha}(x)}{\tilde{\alpha}(x)}\big)\big(\frac{\partial_u \tilde{\alpha}(x)}{\tilde{\alpha}(x)}\big) \big( \frac{\tilde{\alpha}_i(x)}{\tilde{\alpha}(x)} \big) \Big)_{i \in I_{x,r}} \Big\|_{2}.
\end{align*}
We bound these five terms one-by-one using $\tilde{\alpha}(x) \geq c|I_{x,r}|$ which holds \YQ{with} probability $1-\delta$ by Lemma \ref{lma:xi} and $0 \leq \tilde{\alpha}_i(x) \leq 1$. For the first term,
\begin{align*}
    \Big\|\Big(\frac{\partial_v \partial_u \tilde{\alpha}_i(x)}{\tilde{\alpha}(x)} \Big)_{i \in I_{x,r}} \Big\|_{2}
   & \leq \frac{C}{\tilde{\alpha}(x)} \Big\| \big(\tilde{\alpha}_i(x)^{\frac{\beta-2}{\beta}} \frac{\|x-x_i\|_2^2}{r^4} + \tilde{\alpha}_i(x)^{\frac{\beta-1}{\beta}} \frac{|v^Tu|}{r^2}  \big)_{i \in I_{x,r}} \Big\|_2\\
   & \leq  \frac{C}{\tilde{\alpha}(x)} \Big\| \big( \frac{2}{r^2} \big)_{i \in I_{x,r}} \Big\|_2 \leq \frac{C}{r^2} |I_{x,r}|^{-\frac{1}{2}}.
\end{align*}
For the second term,
\begin{align*}
    \Big\|\Big( \frac{\partial_v \partial_u \tilde{\alpha}(x)}{\tilde{\alpha}^2(x)} \tilde{\alpha}_i(x) \Big)_{i \in I_{x,r} } \Big\|_{2}
    & \leq \Big|\frac{\partial_v\partial_u \tilde{\alpha}(x)}{\tilde{\alpha}^2(x)} \Big| \big\|\big(\tilde{\alpha}_i(x)\big)_{i \in I_{x,r}} \big\|_{2}  \\
    & \leq \Big|\frac{\partial_v\partial_u \tilde{\alpha}(x)}{\tilde{\alpha}^2(x)} \Big| \big\|(1)_{i \in I_{x,r}} \big\|_{2} \\
    & \leq  \frac{1}{\tilde{\alpha}(x)}\Big\|\Big(\frac{\partial_v\partial_u \tilde{\alpha}_i(x)}{\tilde{\alpha}(x)} \Big)_{i \in I_{x,r}} \Big\|_{2}  \|(1)_{i \in I_{x,r}}\|_{2}^2 \\
    & \leq \frac{C}{r^2} |I_{x,r}|^{-1} |I_{x,r}|^{-\frac{1}{2}} |I_{x,r}| = \frac{C}{r^2}|I_{x,r}|^{-\frac{1}{2}} .
\end{align*}
The third and fourth terms are similar, where the third term is bounded by
\begin{align*}
    \| \big( \frac{(\partial_v\tilde{\alpha}_i(x))(\partial_u \tilde{\alpha}(x))}{\tilde{\alpha}^2(x)} \big)_{i \in I_{x,r}} \|_{2}
    &\leq |\frac{\partial_u \tilde{\alpha}(x)}{\tilde{\alpha}(x)}| \| \big( \frac{\partial_v \tilde{\alpha}_i(x)}{\tilde{\alpha}(x)} \big)_{i \in I_{x,r}} \|_{2} \\
    & \leq \| \big( \frac{\partial_u \tilde{\alpha}_i(x)}{\tilde{\alpha}(x)} \big)_{i \in I_{x,r}} \|_{2} \|(1)_{i \in I_{x,r}}\|_{2} \| \big( \frac{\partial_v \tilde{\alpha}_i(x)}{\tilde{\alpha}(x)} \big)_{i \in I_{x,r}} \|_{2} \\
    & \leq \frac{C}{r} |I_{x,r}|^{-\frac{1}{2}} |I_{x,r}|^{\frac{1}{2}} \frac{C}{r} |I_{x,r}|^{-\frac{1}{2}} = \frac{C}{r^2}|I_{x,r}|^{-\frac{1}{2}},
\end{align*}
and analogically, the fourth is bounded by
\[
\Big\| \Big( \frac{(\partial_u\tilde{\alpha}_i(x))(\partial_v \tilde{\alpha}(x))}{\tilde{\alpha}^2(x)} \Big)_{i \in I_{x,r}} \Big\|_{2} \leq \frac{C}{r^2}|I_{x,r}|^{-\frac{1}{2}}
\]
Finally, the fifth term:
\begin{align*}
   \Big \|\Big( \big(\frac{\partial_v \tilde{\alpha}(x)}{\tilde{\alpha}(x)}\big)
    \big(\frac{\partial_u \tilde{\alpha}(x)}{\tilde{\alpha}(x)}\big) \big( \frac{\tilde{\alpha}_i(x)}{\tilde{\alpha}(x)} \big) \Big)_{i \in I_{x,r}} \Big\|_{2}
    &= \big(\frac{\partial_v \tilde{\alpha}(x)}{\tilde{\alpha}(x)}\big)
    \big(\frac{\partial_u \tilde{\alpha}(x)}{\tilde{\alpha}(x)}\big) \Big\|\big(\frac{\tilde{\alpha}_i(x)}{\tilde{\alpha}(x)}\big)_{i \in I_{x,r}} \Big\|_{2} \\
    &\leq \frac{C}{r} \times \frac{C}{r} \times |I_{x,r}|^{- \frac{1}{2}}
    = \frac{C}{r^2}|I_{x,r}|^{- \frac{1}{2}}
 \end{align*}
Summing the above five terms up amounts to the proof. 
\end{proof}

\begin{proof}{\bf of Theorem\ref{thm:second_der_f}}
Letting $G(x) = \sum_{i\in I_{x,r}}\alpha_i(x)(x-x_i)$, we obtain the following bound on the second derivative of $f(x)$
\begin{align}
\begin{split}\label{eq:second_dev}
    \big\|\partial_v\big(\partial_u f(x)\big) \big\|_2
    & \leq \big\|(\partial_v \partial_u \YQ{\Psi_x^\alpha})G(x)\big\|_2 + \big\|(\partial_v \YQ{\Psi_x^\alpha})(\partial_u G(x)) \big\|_2 \\
    & + \big\|(\partial_u \YQ{\Psi_x^\alpha})(\partial_v G(x)) \big\|_2+ \big\|\YQ{\Psi_x^\alpha} (\partial_v \partial_u G(x))\big\|_2.
\end{split}
\end{align}
For the first term, we have
\begin{align*}
\| \partial_v \partial_u \YQ{\Psi_x^\alpha} \|_2 & \leq C \big( \| \partial_v A_x\|_2\|\partial_u A_x\|_2 + \|\partial_v \partial_u A_x \|_2 \big) \\
& \leq C +  C\sum_i |\partial_v \partial_u \alpha_i(x)| \big( \|P_{x_i}-\YQ{\Pi_{x_i^*}}\|_2 + \|\YQ{\Pi_{x_i^*}}-\YQ{\Pi_{x^*}}\|_2 \big) \\
& + C\Big\|\YQ{\Pi_{x^*}}\big(\partial_v \partial_u \sum_{i} \alpha_i(x)\big)\Big\|_2 \\
& \leq C
+ C \Big\|\big(\partial_v \partial_u \alpha_i(x)\big)_{i \in I_{x,r}} \Big\|_{2}
\Big\|\big(\|P_{x_i}-\YQ{\Pi_{x_i^*}}\|_2\big)_{i \in I_{x,r}} \Big\|_{2} \\
& +C \Big\|\big(\partial_v \partial_u \alpha_i(x)\big)_{i \in I_{x,r}} \Big\|_{2}
\Big\|\big(\frac{\|x_i^*-x^*\|_2}{\tau}\big)_{i \in I_{x,r}} \Big\|_{2}  + 0 \\
& \leq C+ \Big(\frac{C}{r^2} |I_{x,r}|^{-\frac{1}{2}} \Big) \times \Big(Cr|I_{x,r}|^{\frac{1}{2}} \Big) \leq \frac{C}{r},
\end{align*}
where the second to the last inequality holds by Lemma \ref{lma:Pi_x_2norm} while the last inequality holds by Lemma \ref{lma:der2_alpha}, and therefore
\begin{align}
\|(\partial_v \partial_u \YQ{\Psi_x^\alpha}) G(x)\|_2 \leq \frac{C}{r} \times r = C.
\end{align}
For the second and third terms,
\begin{align*}
\|\partial_v G(x)\|_2
& = \Big\|v + \sum_{i} \partial_v \alpha_i(x) (x_i-x_1) + \big(\sum_{i} \partial_v \alpha_i(x) \big) x_1 \Big\|_2 \\
& \leq 1 + \big\|\big(\partial_v\alpha_i(x)\big)_{i \in I_{x,r}} \big\|_{2} \big\|(2r)_{i \in I_{x,r}}\big\|_{2} \\
& \leq 1 + \Big(\frac{C}{r} |I_{x,r}|^{-\frac{1}{2}} \Big) \times \Big(2r|I_{x,r}|^{\frac{1}{2}} \Big) = 1 + C,
\end{align*}
and  by (\ref{bound:partial_Pi_x}) we obtain
\[
\big\|(\partial_v \YQ{\Psi_x^\alpha}) \big(\partial_u G(x) \big) \big\|_2 \leq C
, \quad \big\| \big(\partial_u G(x) \big)(\partial_v \YQ{\Psi_x^\alpha}) \big\|_2 \leq C.
\]
For the fourth term, we have
\begin{align*}
     & \big\|\YQ{\Psi_x^\alpha} \big(\partial_v \partial_u G(x) \big) \big\|_2 \\
\leq & \Big\|\YQ{\Psi_x^\alpha} \sum_i (\partial_v \partial_u \alpha_i(x)) x_i \Big\|_2 \\
\leq & \|\YQ{\Psi_x^\alpha}\|_2 \sum_i |\partial_v \partial_u \alpha_i(x)| \|x_i-x_i^*\|_2  + \sum_i |\partial_v \partial_u \alpha_i(x)| \|\YQ{\Psi_x^\alpha} (x^*_i-x^*)\|_2  + 0 \\
\leq &
\Big\|\big(\partial_v \partial_u \alpha_i(x) \big)_{i \in I_{x,r}} \Big\|_{2}
\Big( \big\|(\|x_i-x_i^*\|)_{i \in I_{x,r}} \big\|_{2} + \big\|(\|\YQ{\Psi_x^\alpha}(x_i^*-x^*)\|)_{i \in I_{x,r}} \big\|_{2} \Big)  \\
\leq &
C\Big( \frac{1}{r^2}|I_{x,r}|^{-\frac{1}{2}}\Big) \times \Big((\sigma+r^2) |I_{x,r}|^{\frac{1}{2}}\Big) = C
\end{align*}
\YQ{The above proof is based on Lemma \ref{lma:Pi_x_2norm} and Lemma \ref{lma:der2_alpha}, which are valid when Lemma \ref{lma:xi} and Theorem \ref{thm:P_z} simultaneously hold. Hence, $\|\partial_v \partial_u f(x)\|_2 \leq  C$ when Lemma \ref{lma:xi} and Theorem \ref{thm:P_z} simultaneously hold, whose probability is at least $\delta_0(1-\delta)^2$.}
\end{proof}

\subsection{Proof of Proposition \ref{prop:delta_Pixz}, Proposition \ref{prop:eq_fg} and Lemma \ref{bound:der_phi}} \label{app:D}

\begin{proof} {\bf of Proposition \ref{prop:delta_Pixz}}
Considering the function $\phi(d) = (1-\frac{d}{r^2})^\beta$ for $t \geq 0$, whose derivative is $\phi^\prime(d) = \frac{\beta}{r^2}\big(1-\frac{d}{r^2}\big)^{\beta-1}$, we obtain $|\phi^{\prime}(d)| \leq \frac{\beta}{r^2}$. This implies 
\[
|\tilde{\alpha}_i(x) - \tilde{\alpha}_i(z)| \leq \frac{\beta}{r^2}\|x-z\|_2^2 \leq \frac{\beta}{r^2}\epsilon^2 \leq \frac{\alpha(x)}{|I_{x,2r}|^2}r \leq \frac{r}{|I_{x,2r}|},
\]
where the last inequality holds since $\alpha(x) = \sum_{i \in I_{x,r}} \tilde{\alpha}_i(x) \leq \sum_{i \in I_{x,r}} 1 = |I_{x,r}| \leq |I_{x,2r}|$.
For any $z \in B_D(x,\epsilon)$, we have $\|z-x\|_2 \leq r$ and $I_{z,r} \subset I_{x,2r}$. By the definition of $\tilde{\alpha}_i(z)$, we have $\tilde{\alpha}_i(z) = 0$ for $i \notin I_{z,r}$, and therefore
\begin{align*}
\alpha(z) & = \sum_{i \in I_{z,r}} \tilde{\alpha}_i(z) = \sum_{i \in I_{x,2r}} \tilde{\alpha}_i(z) \\
& = \sum_{i \in \I_{x,2r}} \big( \tilde{\alpha}_i(x) + \tilde{\alpha}_i(z) -\tilde{\alpha}_i(x) \big) = \alpha(x) + \sum_{i \in \I_{x,2r}} \big(\tilde{\alpha}_j(z) -\tilde{\alpha}_j(x) \big).
\end{align*}
Plug $\alpha(z)$ into the following denominator,
\begin{align*}
| \alpha_i(z) - \alpha_i(x) | & = \Big|\frac{\tilde{\alpha}_i(z)}{\alpha(z)} - \frac{\tilde{\alpha}_i(x)}{\alpha(x)} \Big| \\
& \leq \max \left\{ \frac{\tilde{\alpha}_i(x) \pm \big| \tilde{\alpha}_i(z) - \tilde{\alpha}_i(x)\big|}{\alpha(x) \mp \sum_{j \in I_{x,2r}} |\tilde{\alpha}_i(z)-\tilde{\alpha}_i(x)| } - \frac{\tilde{\alpha}_i(x)}{\alpha(x)} \right\} \\ 
& \leq \max \left\{ \frac{\tilde{\alpha}_i(x)+\frac{\alpha(x)}{|I_{x,2r}|^2}r}{\alpha(x)-|I_{x,2r}|\frac{\alpha(x)}{|I_{x,2r}|^2}r} - \frac{\tilde{\alpha}_i(x)}{\alpha(x)} ,
\frac{\tilde{\alpha}_i(x)}{\alpha(x)} - \frac{\tilde{\alpha}_i(x)-\frac{\alpha(x)}{|I_{x,2r}|^2}r}{\alpha(x)+|I_{x,2r}|\frac{\alpha(x)}{|I_{x,2r}|^2}r}  \right\} \\
& \leq \max \left\{ \frac{\tilde{\alpha}_i(x)+\frac{r}{|I_{x,2r}|}}{\alpha(x)-\alpha(x)\frac{r}{|I_{x,2r}|}} - \frac{\tilde{\alpha}_i(x)}{\alpha(x)} ,
\frac{\tilde{\alpha}_i(x)}{\alpha(x)} - \frac{\tilde{\alpha}_i(x)-\frac{r}{|I_{x,2r}|}}{\alpha(x)+\alpha(x)\frac{r}{|I_{x,2r}|}}  \right\} \\
& \leq \max \left\{ \frac{ \big(\tilde{\alpha}_i(x)+\frac{r}{|I_{x,2r}|} \big)(1+C\frac{r}{|I_{x,2r}|})-\tilde{\alpha}_i(x)}{\alpha(x)} ,
\frac{\tilde{\alpha}_i(x)-\big(\tilde{\alpha}_i(x)-\frac{r}{|I_{x,2r}|}\big)(1-C\frac{r}{|I_{x,2r}|})}{\alpha(x)}  \right\}  \\
& \leq \frac{\tilde{\alpha}_i(x)r + Cr + Cr^2}{\alpha(x)|I_{x,2r}|} \leq \frac{r+Cr+Cr^2}{\alpha(x)|I_{x,2r}|} \leq \frac{r+Cr+Cr^2}{c_0|I_{x,2r}|} = C'\frac{r}{|I_{x,2r}|},
\end{align*}
the second-to-last inequality holds since $\tilde{\alpha}_i(x) \leq 1$ while the last inequality holds \YQ{with} probability $1-(1-cr^d)^N$ by Proposition \ref{prop:alpha_bound}(ii). 

Based on the upper bound of $|\alpha_i(x) - \alpha_i(z)|$, we obtain
\begin{align*}
\|A_x - A_z\|_2 & = \|\sum_{i \in I_{x,r}} \alpha_i(x) P_{x_i} - \sum_{i \in I_{z,r}} \alpha_i(z) P_{x_i} \|_2 = \|\sum_{i \in I_{x,2r}} \alpha_i(x) P_{x_i} - \sum_{i \in I_{x,2r}} \alpha_i(z) P_{x_i} \|_2 \\
& \leq \sum_{i \in I_{x,2r}} |\alpha_i(x) - \alpha_i(z)| \|P_{x_i}\|_2 \leq \sum_{i \in I_{x,2r}} C'\frac{r}{|I_{x,2r}|} \cdot 1 = C'r.
\end{align*}
Noting $\|\YQ{\Psi_x^\alpha} - A_x\|_2 \leq Cr$ with probability $\delta_0(1-\delta)^2$ by (\ref{bound:AxPix}) in Theorem \ref{thm:bound_Pix}, we have
\[
\|\YQ{\Psi_z^\alpha} - A_z\|_2 \leq \|\YQ{\Psi_x^\alpha} - A_z\|_2 \leq \|\YQ{\Psi_x^\alpha} - A_x\|_2 + \|A_x - A_z\|_2 \leq Cr,
\]
and hence $\|\YQ{\Psi_x^\alpha} - \YQ{\Psi_z^\alpha}\| \leq \|\YQ{\Psi_x^\alpha} - A_z\|_2 + \|A_z - \YQ{\Psi_z^\alpha}\|_2 \leq Cr$ \YQ{with} probability $\delta_0(1-\delta)^2\big(1-(1-cr^d)^N \big)$, which completes this proof.
\end{proof}


\begin{proof}{\bf of Proposition \ref{prop:eq_fg}}
\YQ{We first prove (i). Since the rows of $J_f(x)$ are orthogonal to the contour surface at $x$, as the basis of the spanning space of $J_f(x)^T$, $W_x$ is also the basis of the normal space of $\M_{\rm out}$ at $x$ and thereby $W_x \in \R^{D \times (D-d)}$ by Theorem \ref{thm:manifold}. This implies 
\begin{align}\label{prob_g}
\mathbb{P}\big({\rm statement \ (i) \  holds} \big) =  \mathbb{P}\big( W_x \in  \R^{D \times (D-d)} \big) \geq  \mathbb{P}\big({\rm Theorem \   \ref{thm:manifold} \ holds} \big)
\end{align}
}


\YQ{Now we proceed to prove (ii).} It is clear that $g(z) = \bf{0}$ if $f(z) = \bf{0}$. Thus, we only need to prove that $g(z) = \bf{0}$ implies $f(z) = \bf{0}$. To do this, we first assume the reverse, $f(z)\neq 0$  and $g(z) = W_x^T f(z) = \bf{0}$. Since $W_x^T$ is the basis of ${\rm span}\big(J_f(x)^T \big)$, $J_f(x)$ can be rewritten as $J_f(x) = YW_x^T$ and $J_f(x) f(z) = Y \big(W_x^T f(z) \big) = Yg(z) = \bf{0}$. By the definition of $f(z)$ in equality (\ref{fun:out_ours}), $\YQ{\Psi_z^\alpha} f(z) = f(z)$. Hence, we obtain
\begin{align*}
\|J_f(x) - \YQ{\Psi_z^\alpha}\|_2  = \max_{v \neq 0} \frac{\big\| \big( J_f(x) - \YQ{\Psi_z^\alpha} \big) v \big\|_2 }{\|v\|_2} \geq \frac{\big\| \big( J_f(x) - \YQ{\Psi_z^\alpha} \big) f(z) \big\|_2 }{\|f(z)\|_2} = \frac{\| 0 - f(z)\|_2 }{\|f(z)\|_2} = 1.
\end{align*}
However,
\begin{align*}
\|J_f(x) - \YQ{\Psi_z^\alpha}\|_2 \leq \|J_f(x)-\YQ{\Psi_x^\alpha}\|_2 + \|\YQ{\Psi_x^\alpha} - \YQ{\Pi_{x^*}}\|_F + \|\YQ{\Pi_{x^*}}-\YQ{\Pi_{z^*}}\|_F + \|\YQ{\Pi_{z^*}}-\YQ{\Psi_z^\alpha}\|_F \leq Cr
\end{align*}
where the first term is bounded by (\ref{equ:bound_J_Pix}) \YQ{in Corollary \ref{coro:Jf_Phix}}, the second and fourth terms are bounded by \YQ{ applying Theorem \ref{thm:bound_Pix} for $x$ and $z$, respectively}, and the third term is bounded by Lemma \ref{lma:Pi_ij}. We conduct contradictory bounds of $\|J_f(x) - \YQ{\Psi_z^\alpha}\|_2$. 
Hence, $f(z) = \bf{0}$ if $g(z) = \bf{0}$. \YQ{The statement (ii) is proved when Corollary \ref{coro:Jf_Phix}, Theorem \ref{thm:bound_Pix} and Lemma \ref{lma:Pi_ij} hold simultaneously. Noticing that Corollary \ref{coro:Jf_Phix} holds when  Lemma \ref{lma:xi} and Theorem \ref{thm:P_z} hold, Theorem \ref{thm:manifold} holds when Theorem \ref{thm:bound_Pix} and Proposition \ref{prop:alpha_bound}(ii) hold, and Theorem \ref{thm:bound_Pix} holds when Lemma \ref{lma:xi} and Theorem \ref{thm:P_z} hold, we obtain
\begin{align*}
 \mathbb{P}
& \big({\rm statement \ (i) \ and \ (ii) \ hold} \big) \\
& \geq  \mathbb{P} \Big(\big({\rm Theorem \ \ref{thm:manifold} \  and \ Corollary \ \ref{coro:Jf_Phix} \  hold \ for \ } x \big)  \cap \big({\rm \ Theorem \ \ref{thm:bound_Pix} \ holds \ for \ } x, z \big) \Big) \\
& \geq  \mathbb{P} \big({\rm Proposition \ \ref{prop:alpha_bound}(ii)  \  holds \ for \ } x \big)  \mathbb{P}\big({\rm Lemma \ \ref{lma:xi}  \ and \ Theorem \ \ref{thm:P_z} \  hold \ for \ } x, z \big) \\
& \geq  \delta_0^2(1-\delta)^4\big(1-(1-cr^d)^N\big).
\end{align*}
}
The proof is, therefore, complete.
\end{proof}


\begin{prop} \label{prop:sig_Jf}
Letting $\sigma_1 \geq \cdots \geq \sigma_{D}$ be the singular values of $J_f(x)$, then \YQ{with} probability at least $\delta_0(1-\delta)^2$,
\[
1+O(r) \geq \sigma_1 \geq \sigma_{D-d} \geq 1- O(r).
\]
\end{prop}
\begin{proof}
Let $\YQ{\Psi_x^\alpha} = V_xV_x^T$ and $J_f(x) = U_x \Sigma_x W_x^T$ be the thin singular value decomposition of $J_f(x)$, where $U_x, W_x \in \R^{D \times (D-d)} $ and $\Sigma_x \in \R^{(D-d) \times (D-d)}$ by Theorem \ref{thm:manifold}.

To begin with, we bound $\sigma_{D-d}$ below. Let $S_1 = {\rm span}(V_x)$ and $S_2 = {\rm span}\{w_1, \cdots w_{D-d-1} \}$, where  $w_1, \cdots w_{D-d-1}$ are the first $(D-d-1)$ columns of $W_x$. Since ${\rm dim}(S_1) > {\rm dim}(S_2)$, there exists $\eta \neq 0 \in S_1 \cap S_2^{\bot}$, which implies $\YQ{\Psi_x^\alpha} \eta = \eta$ and $w_i^T\eta = 0$ for $i = 1, \cdots, D-d-1$. Hence,
\begin{align*}
\big( \YQ{\Psi_x^\alpha} - J_f(x) \big) \eta = \eta - U_x \Sigma_x W_x^T \eta = \eta - u_{D-d}\sigma_{D-d} w_{D-d}^T \eta,
\end{align*}
where $u_{D-d}$ is the $(D-d)$-th column of $U_x$. This leads to 
\begin{align*}
\| \big( \YQ{\Psi_x^\alpha} - J_f(x) \big) \eta \|_2 & = \| \eta - u_{D-d}\sigma_{D-d} w_{D-d}^T \eta \|_2 \\
& \geq \Big|  \| \eta \|_2 - \| u_{D-d}\sigma_{D-d} w_{D-d}^T \eta \|_2 \Big| = |1 - \sigma_{D-d}| \|\eta\|_2.
\end{align*}
We obtain
\begin{align*}
Cr \geq \| \YQ{\Psi_x^\alpha} - J_f(x) \|_2 \geq \frac{\Big\| \big( \YQ{\Psi_x^\alpha} - J_f(x) \big) \eta \Big\|_2}{ \|\eta\|_2 } =  |1 - \sigma_{D-d}|,
\end{align*}
\YQ{where the first inequality holds by (\ref{equ:bound_J_Pix}) in Corollary \ref{coro:Jf_Phix}.} So, $\sigma_{D-d} \geq 1 - O(r)$.

Now, we turn to the upper bound of $\sigma_1$. Let $\eta = w_1$, then $\|\eta\|_2 = 1$ and $w_i^T\eta = 0$ for any $i \geq 2$. Hence,
\begin{align*}
\big(\YQ{\Psi_x^\alpha} - J_f(x) \big) \eta = \YQ{\Psi_x^\alpha} \eta - U_x\Sigma_xW_x^T \eta = \YQ{\Psi_x^\alpha} \eta - \sigma_1 u_1.
\end{align*}
This leads to 
\begin{align*}
Cr \geq \|\YQ{\Psi_x^\alpha} - J_f(x)\|_2 \geq \big\| \big( \YQ{\Psi_x^\alpha} - J_f(x) \big) \eta \big\|_2 = \| \YQ{\Psi_x^\alpha} \eta - \sigma_1 u_1 \|_2 \geq \big | \|\YQ{\Psi_x^\alpha} \eta\|_2 - \sigma_1 \big|.
\end{align*}
So, $\sigma_1 \leq \|\YQ{\Psi_x^\alpha} \eta\|_2 + Cr \leq 1 + Cr$. \YQ{Note that the above proof relies on Theorem \ref{thm:manifold} and Corollary \ref{coro:Jf_Phix}, where Theorem \ref{thm:manifold} and Corollary \ref{coro:Jf_Phix} hold when Lemma \ref{lma:xi}, Theorem \ref{thm:P_z} and Proposition \ref{prop:alpha_bound}(ii) hold. This proof is completed with probability at least $\delta_0(1-\delta)^2\big(1-(1-cr^d)^N \big)$.}
\end{proof}

\begin{prop}\label{prop:eq_VW}
$\| W_x^T \YQ{\Psi_x^\alpha} W_x^T - I_{D-d} \|_2 \leq Cr$ \YQ{with} probability at least $\delta_0(1-\delta)^2\big(1-(1-cr^d)^N \big)$.
\end{prop}
\begin{proof}
\YQ{By Theorem \ref{thm:manifold}, we obtain $W_x \in \R^{D \times (D-d)} $.}
Let the singular value decomposition of $W_x^TV_x = \sum_{i=1}^{D-d} s_i a_i b_i^T$, where $s_i$ is the $i$-th singular value of $W_x^TV_x$ and $a_i$ and $b_i$ are the singular vectors corresponding to $s_i$. Let $\eta = V_x b_{D-d}$, then
\begin{align*}
Cr  \geq \|\YQ{\Psi_x^\alpha} - J_f(x)\|_2 
& \geq \big\| \big( \YQ{\Psi_x^\alpha} - J_f(x) \big)\eta\|_2 = \| V_x b_{D-d} - U_x \Sigma_x W_x^TV_x b_{D-d} \|_2 \\
& \big| 1 - \|U_x \Sigma_x (\sum_{i=1}^{D-d} s_ia_ib_i^T) b_{D-d} \|_2 \big| = \big| 1 - s_{D-d}\|\Sigma_x a_{D-d}\|_2 \big|,
\end{align*}
\YQ{where the first inequality holds by (\ref{equ:bound_J_Pix}) in Corollary \ref{coro:Jf_Phix}.} This leads to
\begin{align*}
\frac{1-Cr}{\|\Sigma_x a_{D-d}\|_2 } \leq s_{D-d} \leq \frac{1+Cr}{\|\Sigma_x a_{D-d} \|_2}.
\end{align*}
Noticing $1-O(r) \leq \|\sigma_x a_{D-d} \|_2 \leq 1+O(r)$ by Proposition \ref{prop:sig_Jf}, we conclude $1 - O(r) \leq  s_{D-d} \leq s_1 \leq 1$ since $\|W_x^TV_x\|_2 \leq 1$. So,
\begin{align*}
\|W_x^T\YQ{\Psi_x^\alpha} W_x - I_{D-d}\|_2 & = \|W_x^TV_x V_x^T W_x - I_{D-d}\|_2 \\
& = \| ASS^TA^T - AA^T\|_2 = \|A(SS^T - I_{D-d})A^T\|_2 \leq Cr,
\end{align*}
where $A = [a_1, \cdots, a_{D-d}]$ and $S$ is a diagonal matrix with $(s_1, \cdots, s_{D-d})$ as the diagonal entries. \YQ{Note that the above proof relies on Theorem \ref{thm:manifold} and Corollary \ref{coro:Jf_Phix}, where Theorem \ref{thm:manifold} and Corollary \ref{coro:Jf_Phix} hold when Lemma \ref{lma:xi}, Theorem \ref{thm:P_z} and Proposition \ref{prop:alpha_bound}(ii) hold. This proof is completed with probability at least $\delta_0(1-\delta)^2\big(1-(1-cr^d)^N \big)$.}
\end{proof}

\begin{proof}{\bf of Lemma \ref{bound:der_phi}}
Under the settings that the first $d$ coordinates are the basis of $T_x \M_{\rm out}$, and the last $D-d$ coordinates are the columns of $W_x$, $W_x$ can be rewritten as $W_x = ({\bf 0}, I_{D-d})^T$. Hence, we obtain
\begin{align*}
 J_g(z) & ({\bf 0}, I_{D-d})^T  = W_x^T J_f(z) W_x \\
= & W_x^T \big(J_f(z) - J_f(x) \big)W_x + W_x^T \big(J_f(x) - \YQ{\Psi_x^\alpha} \big)W_x + \big( W_x^T \YQ{\Psi_x^\alpha} W_x - I_{D-d} \big) + I_{D-d}.
\end{align*}
This leads to
\begin{align*}
\|J_g(z) & ({\bf 0}, I_{D-d})^T - I_{D-d}\|_2 \\
& \leq \|J_f(z) - J_f(x)\|_2 + \|J_f(x) - \YQ{\Psi_x^\alpha} \|_2 + \|W_x^T \YQ{\Psi_x^\alpha} W_x - I_{D-d}\|_2 \\
& \leq  C_1r + C_2r + C_3r \leq Cr,
\end{align*}
\YQ{where $ \|J_f(z) - J_f(x)\|_2\leq  \|J_f(z) - J_f(x)\|_F \leq C_1r$ by ( \ref{bound:Jfx_Jfz}) in Theorem \ref{thm:Hdist}, $\|J_f(x) - \YQ{\Psi_x^\alpha} \|_2 \leq \|J_f(x) - \YQ{\Psi_x^\alpha} \|_F \leq C_2 r$ by Corollary \ref{coro:Jf_Phix} and $\|W_x^T \YQ{\Psi_x^\alpha} W_x - I_{D-d}\|_2 \leq C_3 r$ by Proposition \ref{prop:eq_VW}.} Using Theorem 2.9.10 (the implicit function theorem) in \cite{hubbard2001vector}, $\phi$ exits. Carrying out the first derivative on $g\big( \zeta, \phi(\zeta) \big) = \bf{0}$, we obtain
\begin{align*}
    {\bf 0} & = \partial_s g(\zeta, \phi(\zeta))
    = J_g(\zeta, \phi(\zeta))
    \left(
    \begin{array}{c}
         \partial_s \zeta  \\
         \partial_s \phi(\zeta)
    \end{array}
    \right) \\
    & = W_x^T\Big( J_f(\zeta, \phi(\zeta)) - J_f(x) \Big)
    \left(
    \begin{array}{c}
         \partial_s \zeta  \\
         \partial_s \phi(\zeta)
    \end{array}
    \right)
    + W_x^TJ_f(x) 
    \left(
    \begin{array}{c}
         \partial_s \zeta  \\
         \partial_s \phi(\zeta)
    \end{array}
    \right) \\
    & = W_x^T\Big( J_f(\zeta, \phi(\zeta)) - J_f(x) \Big)
    \left(
    \begin{array}{c}
         \partial_s \zeta  \\
         \partial_s \phi(\zeta)
    \end{array}
    \right)
    + W_x^TU_x \Sigma_x ({\bf 0}, I_{D-d}) 
    \left(
    \begin{array}{c}
         \partial_s \zeta  \\
         \partial_s \phi(\zeta)
    \end{array}
    \right).
\end{align*}
This implies that
\begin{align*}
\partial_s \phi(\zeta) = -\Sigma_x^{-1}\big( W_x^TU_x \big)^{-1} W_x^T\Big( J_f(\zeta, \phi(\zeta)) - J_f(x) \Big) \left(
    \begin{array}{c}
         \partial_s \zeta  \\
         \partial_s \phi(\zeta)
    \end{array}
    \right).
\end{align*}
Calculating $\ell_2$-norm of the two sides of the above equality, we obtain
\begin{align*}
\|\partial_s \phi(\zeta) \|_2 
& = \Big\| \Sigma_x^{-1}\big( W_x^TU_x \big)^{-1} W_x^T\Big( J_f(\zeta, \phi(\zeta)) - J_f(x) \Big) 
\left(
    \begin{array}{c}
         \partial_s \zeta  \\
         \partial_s \phi(\zeta)
    \end{array}
    \right)
 \Big\|_2  \\
& \leq \big(1+O(r) \big) C \Big\| J_f(\zeta, \phi(\zeta)) - J_f(x) \Big\|_2  \leq C \|(\zeta, \phi(\zeta))-x\|_2
\end{align*}

Carrying out the second derivative on $g(\zeta, \phi(\zeta)) = \bf{0}$, we obtain
\begin{align*}
    {\bf 0} = \partial_t J_g(\zeta, \phi(\zeta))
    \left(
    \begin{array}{c}
         \partial_s \zeta  \\
         \partial_s \phi(\zeta)
    \end{array}
    \right)
    + J_g(\zeta, \phi(\zeta))
    \left(
    \begin{array}{c}
         {\bf 0}  \\
         \partial_t \partial_s \phi(\zeta)
    \end{array}
    \right).
\end{align*}
Letting $e_i$ denote the $i$-th column of $I_D$ and
\begin{align*}
    u = \left(\begin{array}{c} \partial_t \zeta \\ \partial_t \phi(\zeta) \end{array}\right),
\end{align*}
the $i$-th column of $\partial_t J_g(\zeta,\phi(\zeta))$ is
\begin{align*}
    \partial_t \partial_{e_i}g(\zeta,\phi(\zeta))
     = \|u\|_2 \partial_{\frac{u}{\|u\|_2}} \partial_{e_i} g(\zeta,\phi(\zeta))
     = \|u\|_2 W_x^T \partial_{\frac{u}{\|u\|_2}} \partial_{e_i} f(\zeta,\phi(\zeta)).
\end{align*}
In conjunction with $\|\partial_{\frac{u}{\|u\|_2}} \partial_{e_i} f(\zeta,\phi(\zeta))\|_2 \leq C$, as proved in Theorem \ref{thm:second_der_f}, $\|\partial_t \partial_{e_i}g(\zeta,\phi(\zeta))\|\leq C$, and therefore
\[
\|\partial_t J_g(\zeta,\phi(\zeta)\|_2 \leq C.
\]
Hence,
\begin{align*}
\partial_t\partial_s \phi(\zeta) = - \Sigma_x^{-1}\big( W_x^TU_x \big)^{-1} \left( \partial_t J_g(\zeta, \phi(\zeta)) 
\left(
    \begin{array}{c}
         \partial_s \zeta  \\
         \partial_s \phi(\zeta)
    \end{array}
    \right) + W_x^T\Big( J_f(\zeta, \phi(\zeta))-J_f(x)  \Big) \left(
    \begin{array}{c}
         \partial_s \zeta  \\
         \partial_s \phi(\zeta)
    \end{array}
    \right) \right),
\end{align*}
which implies
\begin{align*}
    \|\partial_t\partial_s \phi(\zeta)\|_2 \leq C(1+O(r)) \big( C + C\|z-x\|_2 \big) \leq C.
\end{align*}
\YQ{Note that the above proof relies on Theorem \ref{thm:Hdist}, Corollary \ref{coro:Jf_Phix}, Proposition \ref{prop:eq_VW} and Theorem \ref{thm:second_der_f}, which are valid when Lemma \ref{lma:xi}, Theorem \ref{thm:P_z} and Proposition \ref{prop:alpha_bound}(ii) hold. Hence, this proof is completed with probability at least $\delta_0(1-\delta)^2\big(1-(1-cr^d)^N \big)$.}
\end{proof}

\section{Gradient of $\|f(x)\|_2^2$}\label{app:gradient}

Let $F(x) = \|f(x)\|_2^2$, $d \cdot$ denote the differential and $G(x) = x - \sum_{i \in I_{x,r}} \alpha_i(x)x_i$, then
\begin{align*}
    d F(x) & = 2 \langle f(x), df(x) \rangle = 2 \big\langle \YQ{\Psi_x^\alpha} G(x), d\big(\YQ{\Psi_x^\alpha} G(x) \big)  \big\rangle  \\
    & = 2 \langle \YQ{\Psi_x^\alpha} G(x)G(x)^T, d\YQ{\Psi_x^\alpha} \rangle + 2 \langle \YQ{\Psi_x^\alpha} G(x), dG(x) \rangle  \\
    & = 2 \langle \YQ{\Psi_x^\alpha} G(x)G(x)^T, d\YQ{\Psi_x^\alpha} \rangle + 2\langle \YQ{\Psi_x^\alpha} G(x), dx-\sum_{i \in I_{x,r}} \big(d\alpha_i(x) x_i \big) \rangle,
\end{align*}
where 
\begin{align*}
 d\alpha_i(x)&  = \frac{d\tilde \alpha_i(x)}{\alpha(x)} - \frac{\tilde \alpha_i(x) d\alpha(x)}{\alpha(x)^2} = \frac{d\tilde \alpha_i(x)}{\alpha(x)} - \frac{\alpha_i(x) d\alpha(x)}{\alpha(x)} \\
    & = -\frac{2(d+2)}{r^2\alpha(x)}\langle \tilde \alpha_i(x)^{\frac{d+1}{d+2}}(x-x_i)-\alpha_i(x) \sum_{i}\tilde \alpha_i(x)^{\frac{d+1}{d+2}}(x-x_i), dx \rangle \\
    &:= \langle \frac{d \alpha_i(x)}{d x}, dx \rangle
\end{align*}
and $d\YQ{\Psi_x^\alpha}$ can be calculated as below. Let $\lambda_1 \geq \cdots \geq \lambda_n$ be the eigenvalues of $A_x$ and $\mu_1 > \cdots > \mu_{s}$ are the different values of $\{\lambda_i\}$. Suppose $\lambda_{n-d} > \lambda_{n-d+1}$, and $\mu_1 > \cdots > \mu_t$ are the different values of $\lambda_1 \geq \cdots \geq \lambda_{n-d}$. $P_{i,x}=V_{i,x}V_{i,x}^T$ is an orthogonal projection and columns of $V_{i,x}$ are the eigenvectors corresponding to $\mu_i$. Then, we have $\YQ{\Psi_x^\alpha} = \sum_{i=1}^t P_{i,x}$. By \cite{Shapiro1995}, 
\[
dP_{i,x} = \sum_{j=1}^s \frac{1}{\mu_j-\mu_i} P_{i,x} (d A_x) P_{j,x} + P_{j,x} (dA_x) P_{i,x},
\]
and thereby
\begin{align*}
d\YQ{\Psi_x^\alpha} & = \sum_{i=1}^t dP_{i,x} = \sum_{i=1}^t \sum_{j=t+1}^s \frac{1}{\mu_j-\mu_i} P_{i,x} (d A_x) P_{j,x}  + P_{j,x} (dA_x) P_{i,x} 
\end{align*}
Plug $d\YQ{\Psi_x^\alpha}$ into the first term of $dF(x)$,
\begin{align*}
\big\langle \YQ{\Psi_x^\alpha} G(x)G(x)^T, d\YQ{\Psi_x^\alpha} \big\rangle = \langle T, dA_x \rangle = \sum_{i \in I_{x,r}} \langle T, P_{x_i} \rangle  \langle\frac{d\alpha_i(x)}{dx}, dx \rangle ,
\end{align*}
where $T = \sum_{i=1}^t \sum_{j=t+1}^s \frac{1}{\mu_j-\mu_i} P_{i,x} \big( \YQ{\Psi_x^\alpha} G(x)G(x)^T \big) P_{j,x}  + P_{j,x} \big( \YQ{\Psi_x^\alpha} G(x)G(x)^T \big) P_{i,x}$. Plugging $d\alpha_i(x)$ into the second term of $dF(x)$, we obtain
\begin{align*}
\langle \YQ{\Psi_x^\alpha} G(x), dx-\sum_{i \in I_{x,r}} \big(d\alpha_i(x) x_i \big) \rangle
= \langle \YQ{\Psi_x^\alpha} G(x), dx\rangle - \sum_{i \in I_{x,r}} \langle \YQ{\Psi_x^\alpha} G(x), x_i\rangle \langle\frac{d\alpha_i(x)}{dx}, dx \rangle.
\end{align*}

As the summation of the first and second term, 
\begin{align*}
dF(x) = \Big\langle 2 \sum_{i \in I_{x,r}} \big( \langle T, P_{x_i}\rangle+\langle \YQ{\Psi_x^\alpha} G(x), x_i\rangle \big)\frac{d\alpha_i(x)}{dx} + \YQ{\Psi_x^\alpha} G(x), dx \Big\rangle
\end{align*}
So the gradient of $F(x)$ is
\begin{align} \label{eq:gradF}
{\rm grad}(x) = 2 \sum_{i \in I_{x,r}} \big( \langle T, P_{x_i}\rangle+\langle \YQ{\Psi_x^\alpha} G(x), x_i\rangle \big)\frac{d\alpha_i(x)}{dx} + \YQ{\Psi_x^\alpha} G(x).
\end{align}

\section{Results of Facial Image Denoising}
\label{app:face}
\begin{figure}[htbp]
\centering
\includegraphics[width=0.18\textwidth]{clear_rate03_id2}
\includegraphics[width=0.18\textwidth]{clear_rate03_id1}
\includegraphics[width=0.18\textwidth]{clear_rate03_id3}
\includegraphics[width=0.18\textwidth]{clear_rate03_id4}
\includegraphics[width=0.18\textwidth]{clear_rate03_id5}
\centering
\includegraphics[width=0.18\textwidth]{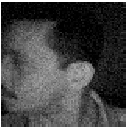}
\includegraphics[width=0.18\textwidth]{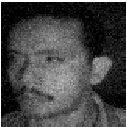}
\includegraphics[width=0.18\textwidth]{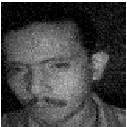}
\includegraphics[width=0.18\textwidth]{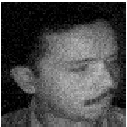}
\includegraphics[width=0.18\textwidth]{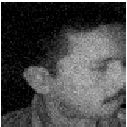}
\centering
\includegraphics[width=0.18\textwidth]{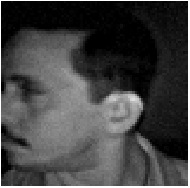}
\includegraphics[width=0.18\textwidth]{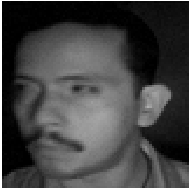}
\includegraphics[width=0.18\textwidth]{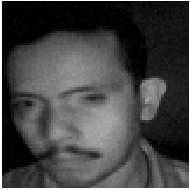}
\includegraphics[width=0.18\textwidth]{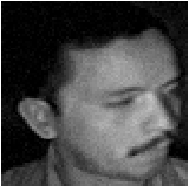}
\includegraphics[width=0.18\textwidth]{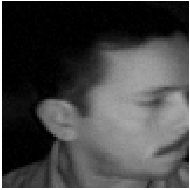}
\centering
\includegraphics[width=0.18\textwidth]{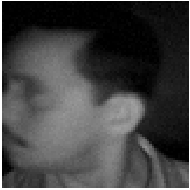}
\includegraphics[width=0.18\textwidth]{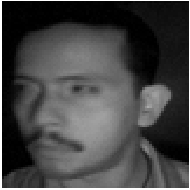}
\includegraphics[width=0.18\textwidth]{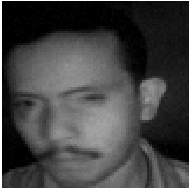}
\includegraphics[width=0.18\textwidth]{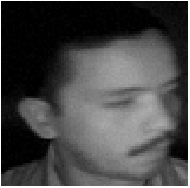}
\includegraphics[width=0.18\textwidth]{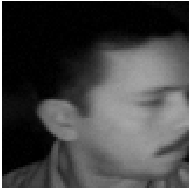}
\centering
\includegraphics[width=0.18\textwidth]{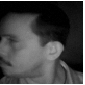}
\includegraphics[width=0.18\textwidth]{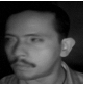}
\includegraphics[width=0.18\textwidth]{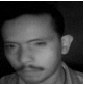}
\includegraphics[width=0.18\textwidth]{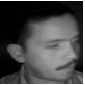}
\includegraphics[width=0.18\textwidth]{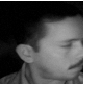}
\centering
\includegraphics[width=0.18\textwidth]{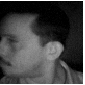}
\includegraphics[width=0.18\textwidth]{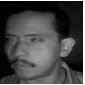}
\includegraphics[width=0.18\textwidth]{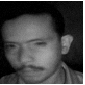}
\includegraphics[width=0.18\textwidth]{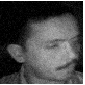}
\includegraphics[width=0.18\textwidth]{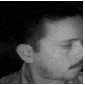}
\centering
\includegraphics[width=0.18\textwidth]{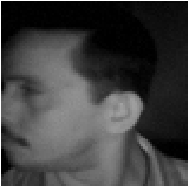}
\includegraphics[width=0.18\textwidth]{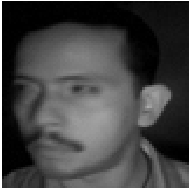}
\includegraphics[width=0.18\textwidth]{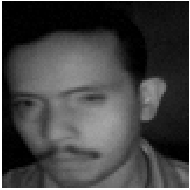}
\includegraphics[width=0.18\textwidth]{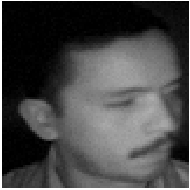}
\includegraphics[width=0.18\textwidth]{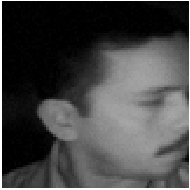}
\caption{Performance of facial image denoising with $\rho = 0.2$. The first row consists of original images while the second row features blurred images. The third to seventh rows contain deblurred images using km17, cf18, ya21(deg=1), ya21(deg=2) and our method, respectively. } \label{fig:face_0.2}
\end{figure}

\begin{figure}[htbp]
\centering
\includegraphics[width=0.18\textwidth]{clear_rate03_id2}
\includegraphics[width=0.18\textwidth]{clear_rate03_id1}
\includegraphics[width=0.18\textwidth]{clear_rate03_id3}
\includegraphics[width=0.18\textwidth]{clear_rate03_id4}
\includegraphics[width=0.18\textwidth]{clear_rate03_id5}
\centering
\includegraphics[width=0.18\textwidth]{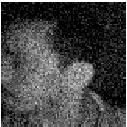}
\includegraphics[width=0.18\textwidth]{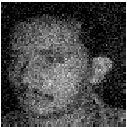}
\includegraphics[width=0.18\textwidth]{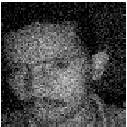}
\includegraphics[width=0.18\textwidth]{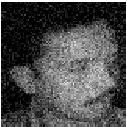}
\includegraphics[width=0.18\textwidth]{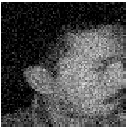}
\centering
\includegraphics[width=0.18\textwidth]{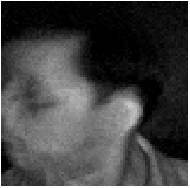}
\includegraphics[width=0.18\textwidth]{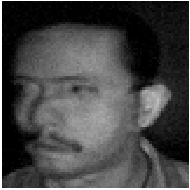}
\includegraphics[width=0.18\textwidth]{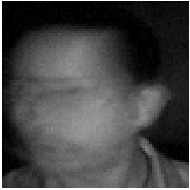}
\includegraphics[width=0.18\textwidth]{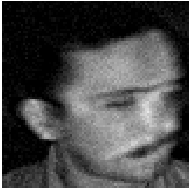}
\includegraphics[width=0.18\textwidth]{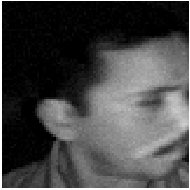}
\centering
\includegraphics[width=0.18\textwidth]{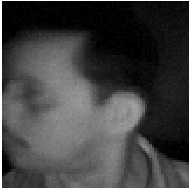}
\includegraphics[width=0.18\textwidth]{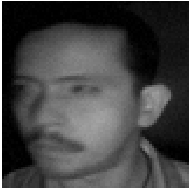}
\includegraphics[width=0.18\textwidth]{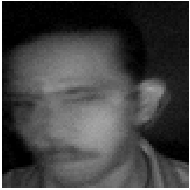}
\includegraphics[width=0.18\textwidth]{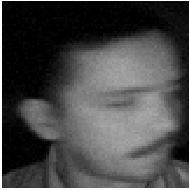}
\includegraphics[width=0.18\textwidth]{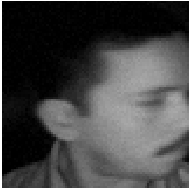}
\centering
\includegraphics[width=0.18\textwidth]{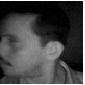}
\includegraphics[width=0.18\textwidth]{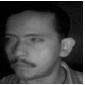}
\includegraphics[width=0.18\textwidth]{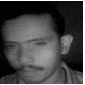}
\includegraphics[width=0.18\textwidth]{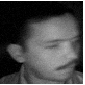}
\includegraphics[width=0.18\textwidth]{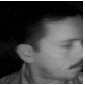}
\centering
\includegraphics[width=0.18\textwidth]{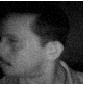}
\includegraphics[width=0.18\textwidth]{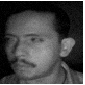}
\includegraphics[width=0.18\textwidth]{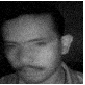}
\includegraphics[width=0.18\textwidth]{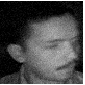}
\includegraphics[width=0.18\textwidth]{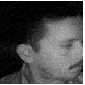}
\centering
\includegraphics[width=0.18\textwidth]{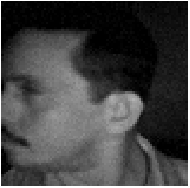}
\includegraphics[width=0.18\textwidth]{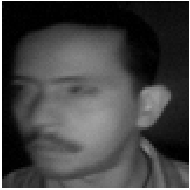}
\includegraphics[width=0.18\textwidth]{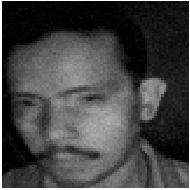}
\includegraphics[width=0.18\textwidth]{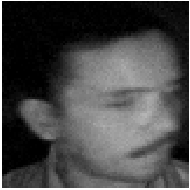}
\includegraphics[width=0.18\textwidth]{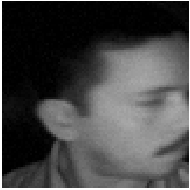}
\caption{Performance of facial image denoising with $\rho = 0.4$. The first row consists of original images while the second row again shows blurred images. The third to seventh rows contain deblurred images using km17, cf18, ya21(deg=1), ya21(deg=2) and our method, respectively. }  \label{fig:face_0.4}
\end{figure}
\newpage

\vskip 0.2in
\bibliography{references}

\end{document}